\newcommand{\loss}{\mathcal{L}}
\newcommand{\udfsection}[1]{\noindent\textbf{#1}\, }
\newtheorem{assumption}{Assumption}
\newtheorem{lemma}{Lemma}
\newtheorem{proposition}{Proposition}
\newtheorem{theorem}{Theorem}
\newcommand{\method}{LD}
\newcommand{\extractor}{f}
\newcommand{\normalize}{\textsc{NORMALIZE}}
\newcommand{\softmax}{\textsc{SOFTMAX}}
\newcommand{\sampler}{\textsc{SAMPLER}}
\newcommand{\GNN}{\textsc{GNN}}
\newcommand{\MLP}{\textsc{MLP}}
\newcommand{\GAT}{\textsc{GAT}}
\newcommand{\mat}[1]{\mathbf{#1}}
\newif\ifproof\prooftrue
\newif\ifupdate\updatefalse
\newif\ifupdateok\updateoktrue
\newcommand{\modifyok}[2]{\ifupdate{#1}\else{\color{black}#2}\fi}
\newif\ifswitch\switchtrue
\begin{document}
%
\title{Label Deconvolution for Node Representation Learning on Large-scale Attributed Graphs against Learning Bias}

%
%
%
%

\author{
Zhihao~Shi,~Jie~Wang,~\IEEEmembership{Senior Member,~IEEE,}~Fanghua~Lu,~Hanzhu~Chen,~Defu~Lian,~\IEEEmembership{Member,~IEEE,}\\Zheng~Wang,~\IEEEmembership{Member,~IEEE,}~Jieping~Ye,~\IEEEmembership{Fellow,~IEEE,}~and~Feng Wu,~\IEEEmembership{Fellow,~IEEE}

\IEEEcompsocitemizethanks{
\IEEEcompsocthanksitem Jie Wang is with MoE Key Laboratory of Brain-inspired Intelligent Perception and Cognition, University of Science and Technology of China, Hefei 230027, China. (Corresponding author, e-mail: jiewangx@ustc.edu.cn).
\IEEEcompsocthanksitem Zhihao Shi, Fanghua Lu, Hanzhu Chen, Defu Lian, and Feng Wu are with MoE Key Laboratory of Brain-inspired Intelligent Perception and Cognition, University of Science and Technology of China, Hefei 230027, China. E-mail: \{zhihaoshi,fanghualu,hanzhuchen\}@mail.ustc.edu.cn, \{liandefu,fengwu\}@ustc.edu.cn.
\IEEEcompsocthanksitem Zheng Wang and Jieping Ye are with the Alibaba Group, Hangzhou 310030, China. E-mail: \{wz388779, yejieping.ye\}@alibaba-inc.com.
}

\thanks{Manuscript received April 19, 2005; revised August 26, 2015.}}

%
%

\markboth{Journal of \LaTeX\ Class Files,~Vol.~14, No.~8, August~2015}%
{Shell \MakeLowercase{\textit{et al.}}: Bare Advanced Demo of IEEEtran.cls for IEEE Computer Society Journals}
%



\IEEEtitleabstractindextext{%
\begin{abstract}


Node representation learning on attributed graphs---whose nodes are associated with rich attributes (e.g., texts and protein sequences)---plays a crucial role in many important downstream tasks. To encode the attributes and graph structures simultaneously, recent studies integrate pre-trained models with graph neural networks (GNNs), where pre-trained models serve as node encoders (NEs) to encode the attributes. As jointly training large NEs and GNNs on large-scale graphs suffers from severe scalability issues, many methods propose to train NEs and GNNs separately. Consequently, they do not take feature convolutions in GNNs into consideration in the training phase of NEs, leading to a significant learning bias relative to the joint training. To address this challenge, we propose an efficient label regularization technique, namely \textbf{L}abel \textbf{D}econvolution (\method{}), to alleviate the learning bias by a novel and highly scalable approximation to the inverse mapping of GNNs. The inverse mapping leads to an objective function that is equivalent to that by the joint training, while it can effectively incorporate GNNs in the training phase of NEs against the learning bias. More importantly, we show that LD converges to the optimal objective function values by the joint training under mild assumptions. Experiments demonstrate LD significantly outperforms state-of-the-art methods on Open Graph Benchmark datasets.

\end{abstract}

\begin{IEEEkeywords}
Graph Neural Networks, Label Deconvolution, Pre-trained Models, Attributed Graphs, Node Feature Extraction
\end{IEEEkeywords}}

\maketitle

\IEEEdisplaynontitleabstractindextext

%
\IEEEpeerreviewmaketitle

\ifCLASSOPTIONcompsoc
\IEEEraisesectionheading{\section{Introduction}\label{sec1}}
\else
\section{Introduction}\label{sec1}
\fi

%
%
%
%

\IEEEPARstart{G}{raphs} are widely used in many important fields, such as citation networks \cite{mag, aminer, lpa}, co-purchase networks \cite{amazon, cluster_gcn}, and protein–protein association networks \cite{string, gor}.
In many real-world applications, nodes in graphs are associated with rich and useful attributes.
For example, nodes in citation networks, co-purchase networks, and protein-protein association networks are often associated with titles/abstracts, textual descriptions for products, and protein sequences.
Many powerful pre-trained models could capture a wealth of information about node properties from the node attributes.
In this paper, we focus on node representation learning on the attributed graphs, which plays an important role in many downstream tasks, such as node classification and link prediction.

To encode the node attributes and graph structures simultaneously, a commonly seen architecture is to integrate powerful pre-trained models with graph neural networks (GNNs)  \cite{giant, glem, textgnn, text_level_gnn},  as shown in Fig. \ref{fig:pipeline}.
First, the pre-trained models---which serve as node encoders (NEs)---encode the node attributes into low-dimensional node features.
Then, graph neural networks take both the node features and graph structures as input to iteratively update node representations \cite{mpnn, grl}.
In summary, NEs encode the local attributes of individual nodes, and GNNs learn the global structural relationship among nodes.

However, on large-scale graphs, jointly training both NEs and GNNs sacrifices either the model capacity of NEs or the size of graph structures for GNNs, resulting in severe performance degradation.
Specifically, in terms of NEs, an idea is to use small NEs or handcrafted feature extraction \cite{ogb} based on node attributes to reduce the expensive costs of the joint training.
As the larger NEs empirically learn more prior knowledge from a large corpus, large NEs become promising in many real-world applications against small NEs or handcrafted feature extraction.
In terms of GNNs, recent works propose various graph sampling techniques to alleviate the scalability issues of GNNs.
The integration of large NEs with graph sampling reduces the size of graph structures \cite{textgnn, text_level_gnn}, severely sacrificing graph topological information encoded by GNNs (see Table \ref{tab:subgraph_size}).

To preserve the high capacity of NEs and address the severe scalability issue of GNNs, many existing methods propose to train NEs and GNNs separately \cite{glem, giant}.
Specifically, they first ignore feature convolutions in GNNs to train NEs (the training phase of NEs) and then directly train GNNs based on the fixed NEs (the training phase of GNNs).
During the training phase of NEs, ignoring feature convolutions in NEs avoids the notorious neighbor explosion issue \cite{graphsage} of GNNs in large-scale graphs.
During the training phase of GNNs, fixed NEs avoid the expensive update of vast parameters of large NEs, allowing existing scalable GNNs to large-scale graphs.
Therefore, the separate training framework allows existing mini-batch techniques to train NEs and GNNs respectively without sacrificing the model capacity of NEs or the graph structures for GNNs.


Nevertheless, they suffer from a significant learning bias relative to the joint training due to the neglect of feature convolutions in GNNs.
By noticing that feature convolutions encode graph structures to predict node labels, we summarize the learning bias in terms of the node labels and the graph structures.
Specifically, during the training phase of NEs, the supervision signals for NEs used in many separate training frameworks consist of either the node labels or the graph structures,
while the supervision signals of the joint training consist of both the node labels and the graph structural information in backward passes (see Fig. \ref{fig:comparison}).
In terms of node labels, some related works propose a scalable self-supervision task termed neighborhood prediction to incorporate graph structural information into NEs \cite{giant}.
As the self-supervision task neglects the node labels, the extracted node features may contain much task-irrelevant information and hence hurt node representations of GNNs.
In terms of the graph structures, some other works directly use the node labels to train NEs \cite{glem}, independent of graph structures.
However, the labels may be noisy for NEs, as they depend on both node attributes and graph structures.
For example, nodes with similar attributes and different structures may be associated with different node labels.

\begin{figure*}[t]
    \centering
    \includegraphics[width=\textwidth]{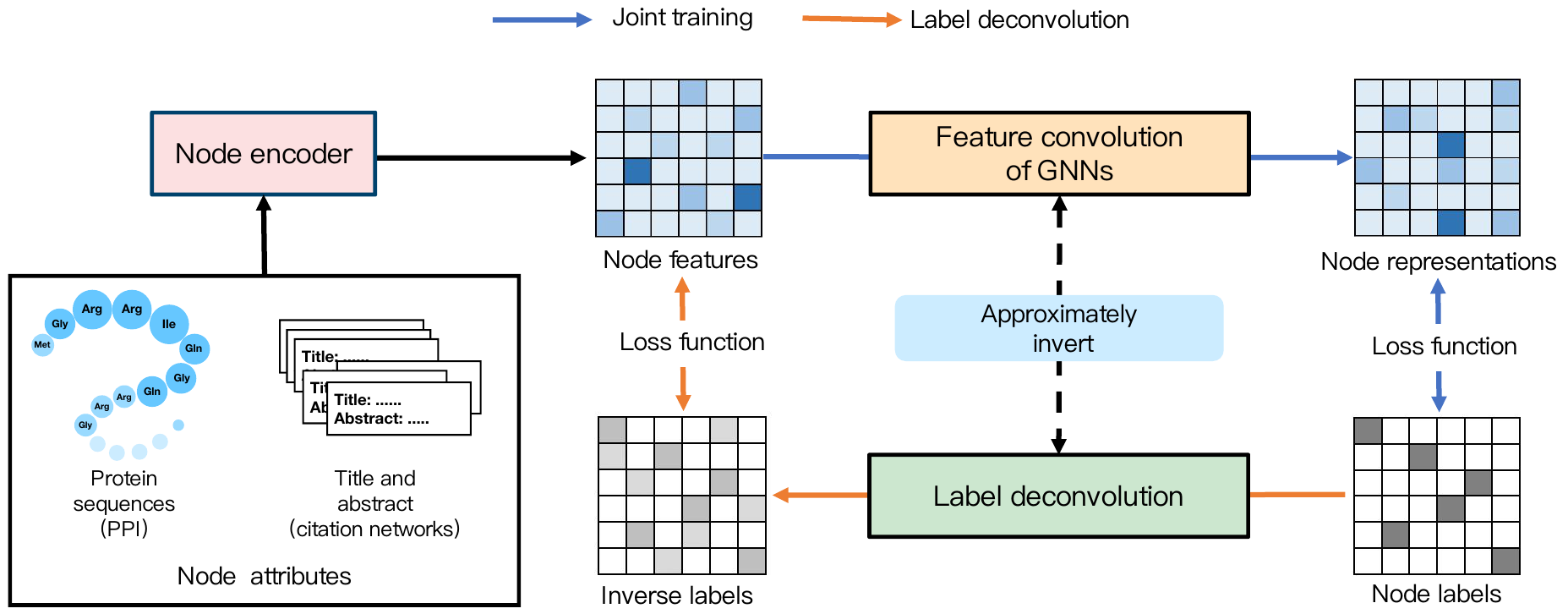}
    \caption{Label deconvolution introduces a scalable objective function to approximate the original objective function based on the concept of the inverse mapping of GNNs, which preserves the optimal solution and thus alleviates the learning bias relative to the joint training.
    }
    \label{fig:pipeline}
\end{figure*}

To address this challenge, we propose an efficient and effective label regularization technique, namely Label Deconvolution (LD), to alleviate the learning bias by integrating GNNs with NEs in the training phase of NEs.
As shown by Fig. \ref{fig:pipeline}, given the true node labels for GNNs used in the joint training, LD aims to find equivalent labels for NEs (denoted by inverse labels) based on an approximate inverse mapping of GNNs.
The inverse mapping guarantees that if we fit the output of NEs (denoted by node features) to the inverse labels, then the corresponding output of GNNs (denoted by node representations) is fitted to the true node labels.
More importantly, we show that LD converges to the optimal objective function values by the joint training for spectral-based GNNs under some mild assumptions (see Assumption \ref{ass:attributes_labels} about our observation that the node labels depend on both node attributes and graph structures).
To efficiently approximate the inverse mapping of GNNs, LD first decouples the memory and time-consuming feature convolution from GNNs inspired by \cite{sgc, gamlp, ssgc}.
By noticing that the node labels are fixed during training, LD pre-processes the memory and time-consuming inverse mapping of the feature convolution only once based on the fixed node labels, unlike the joint training, which performs the feature convolution at each training step to encode the learnable features from NEs.
Extensive experiments demonstrate \method{} significantly outperforms state-of-the-art methods by a significant margin on Open Graph Benchmark datasets.

\section{Preliminaries}

We introduce the problem setting in Section \ref{sec:problem_setting}. Then, we introduce the joint training methods based on graph sampling in Section \ref{sec:scalable_gnn}. Next, we introduce the spectral graph neural networks in Section \ref{sec:spe_gnns}. Finally, we introduce the separate training framework to train GNN and NEs efficiently in Section \ref{sec:separate_training}.

\subsection{Node Representation Learning on Large-scale Attributed Graphs} \label{sec:problem_setting}

We focus on node representation learning on a graph $\mathcal{G}=(\mathcal{V}, \mathcal{E})$ with rich and useful node attributes $\{ \mat{x}_i \}_{i=1}^{|\mathcal{V}|}$, where $\mathcal{V}$ is the set of all nodes and $\mathcal{E}$ is the set of all edges.
As the true node attributes $\{ \mat{x}_i \}_{i=1}^{|\mathcal{V}|}$ usually are high-dimensional texts, images, or protein sequences, a commonly-seen solution is to first extract $d_f$-dimensional node features $\{ \mat{f}_i \}_{i=1}^{|\mathcal{V}|}$ from them as follows
\begin{align}\label{eqn:ne}
    \mat{f}_i^{(\beta)} = \extractor(\mat{x}_i; \beta) \in \mathbb{R}^{d_f},\,i=1,2,\dots,|\mathcal{V}|,
\end{align}
where $\beta$ represents the parameters of the node encoders (NEs). 
Large pre-trained models (e.g. ESM2 \cite{esm2} for protein sequences and Bert \cite{bert, deberta} for texts) serve as NEs $f$ due to their powerful ability to extract features \cite{giant, glem}.

To further encode graph structures, graph neural networks take both nodes features $\mat{F}^{(\beta)} \in \mathbb{R}^{ |\mathcal{V}| \times d_f }$ and adjacent matrix $\mat{A} \in \mathbb{R}^{ |\mathcal{V}| \times |\mathcal{V}|}$ as inputs as follows
 \begin{align}\label{eqn:gnn}
     \mat{H} = \GNN(\mat{F}^{(\beta)};\mat{A},\theta) \in \mathbb{R}^{  |\mathcal{V}| \times d },
 \end{align}
 where $\mat{F}_{i,:}^{(\beta)} = \mat{f}_i^{(\beta)}$ denotes the $i$-th row of $\mat{F}^{(\beta)}$ and $\theta$ represents the parameters of the graph neural network. $\mat{A}_{ij}=1$ if $(i,j) \in \mathcal{E}$ and otherwise $\mat{A}_{ij}=0$.
Then, GNNs output node representation $\mat{H}$.

For simplicity, we define the following notations. Given a set of nodes $\mathcal{B}$, let $\mat{M}_{\mathcal{B},:} = (\mat{M}_{i,:})_{i \in \mathcal{B}}$ denote the matrix consisting of $\mat{M}_{i,:}$ with all $i \in  \mathcal{B}$, where $\mat{M}_{i,:}$ is the $i$-th row of $\mat{M}$. Given a vector function $f: \mathbb{R}^{d_1} \rightarrow  \mathbb{R}^{d_2}$, we overload $\mat{M}' = f(\mat{M})$ to denote a matrix function with ${\mat{M}}_{i,:}' = f({\mat{M}}_{i,:})$.

\subsection{Scalable Graph Neural Networks}\label{sec:scalable_gnn}

Many scalable graph neural networks are categorized into two ideas in terms of data sampling and model architectures, respectively.

\udfsection{Graph Sampling for GNNs.} Equation \eqref{eqn:gnn} takes all node features as inputs and hence it is not scalable.
To compute node representations in a mini-batch $\mathcal{B}$ of nodes, a commonly-seen solution is to sample a subgraph constructed by $\mathcal{B}$ as follows,
\begin{align}\label{eqn:gnn_minibatch}
     \mat{H}_{\mathcal{B},:} &= \GNN( \mat{F}_{G(\mathcal{B}),:};\mat{A}_{G(\mathcal{B})},\theta) \in \mathbb{R}^{ |\mathcal{B}| \times d };\\
      \mat{F}_{G(\mathcal{B}),:}, \mat{A}_{G(\mathcal{B})} &= \sampler(\mat{F}, \mat{A})  ,
\end{align}
where $|\mathcal{B}| << |G(\mathcal{B})| << |\mathcal{V}| $.
Notably, $|G(\mathcal{B})|$ used in existing graph sampling methods is significantly larger than the size of the mini-batch used in pre-trained NEs.
If we further decrease the size of $|\mathcal{B}|$ or $|G(\mathcal{B})|$ in existing graph sampling methods to align their batch sizes, their performance will significantly drop, as shown in Table \ref{tab:subgraph_size}.
In experiments, the max batch size of pre-trained NEs is at most 12 (see Table \ref{tab:runtime}), which is significantly smaller than $|G(\mathcal{B})|$.
Therefore, the joint training of NEs and GNNs by graph sampling is usually unaffordable.

\begin{table}[t]
\caption{Prediction performance with different sizes of sampled subgraphs used in our experiments. We directly infer node features by the pre-trained models without fine-tuning. For GAS \cite{gas}, we increase the number of subgraph partitions to decrease $|G(\mathcal{B})|$ on the ogbn-arxiv dataset. For SAGE \cite{graphsage}, we first decrease the number of target nodes $|\mathcal{B}|$ and then decrease the sampled neighbors to decrease $|G(\mathcal{B})|$ on the ogbn-protein dataset.}
\centering
\label{tab:subgraph_size}
\scalebox{0.95}{
\begin{tabular}{cccccc}
\toprule
\multirow{3}[1]{*}{GCN+GAS \cite{gas}} &\textbf{$|\mathcal{B}|$} & 84,260   & 25,574  & 2,566 & 264 \\
&\textbf{$|G(\mathcal{B})|$} & 96,536   &  48,871  & 4,691 &  3,958 \\ 
&\textbf{ACC}   & 74.28\% & 74.14\% & 74.07\% & 67.46\% \\ 
\midrule
\multirow{3}[1]{*}{GAT+SAGE \cite{graphsage}} &\textbf{$|\mathcal{B}|$} & 8,662   & 866  & 87 & 87 \\
&\textbf{$|G(\mathcal{B})|$} &  131,381  & 131,119  & 130,948  & 80,529 \\
&\textbf{AUC}  & 89.06\% & 88.74\% & 84.40\% & 81.27\% \\ 
\bottomrule
\end{tabular}}
\end{table}

\udfsection{Decoupling Feature Convolution from GNNs as a Pre-processing Step.}
To avoid the memory and time-consuming feature convolution of GNNs, other scalable GNNs (e.g. GAMLP \cite{gamlp} and SAGN \cite{sagn}) first decouple the feature convolution from GNNs.
Then, they pre-process the memory and time-consuming feature convolution only once based on fixed node features.
However, as the node features are learnable by NEs, the idea is still unaffordable for the joint training of NEs and GNNs.

\subsection{Spectral-based GNNs} \label{sec:spe_gnns}

Many GNNs \cite{gcn, gcnii, linear_gnn} are inspired by spectral filters and thus many studies analyze the properties of GNNs based on their spectral-based approximation.
The spectral-based GNNs \cite{linear_gnn} are
\begin{align}\label{eqn:spectral_gnn}
    \mat{H} = \GNN(\mat{F}^{(\beta)};\mat{A},\theta)  = \phi (\hat{\mat{A}}; \theta^{(\phi)}) \psi( \mat{F}^{(\beta)}; \theta^{(\psi)} ),
\end{align}
where $\phi (\hat{\mat{A}}; \theta^{(\phi)})= \sum_{i=0}^{N} \theta_i^{(\phi)} \hat{\mat{A}}^i$ is a polynomial spectral filter to perform linear feature convolutions, $\hat{\mat{A}}$ is the normalized adjacent matrix,
and $\psi$ is a non-linear multi-layer perceptron.
The weights $\theta^{(\phi)}$ are either learnable or fixed.

The spectral-based GNN is a reasonable approximation of various GNNs.
We elaborate on a connection between a wide range of GNNs (e.g., GCN \cite{gcn}, REVGAT \cite{deq_gcn}, GAMLP \cite{gamlp}, SAGN \cite{sagn}, and GAT \cite{gat}  in the experiments) and the spectral-based GNN in Appendix A.2.
Theoretically, we show that the spectral-based GNN is powerful enough to produce arbitrary node predictions under the assumptions of the no-multiple-eigenvalue and no-missing-frequency conditions as shown in Appendix A.3.
Moreover, the assumptions of the no-multiple-eigenvalue and no-missing-frequency conditions hold on many real-world graphs \cite{linear_gnn}.
Empirically, Fig. 8 in Appendix A.4 shows that given a GNN, the corresponding spectral-based approximation can achieve a similar prediction performance.

\subsection{Separate Training Frameworks}\label{sec:separate_training}

To avoid the expensive costs of jointly training both NEs and GNNs (the vast parameters of large NEs and the neighbor explosion issue of GNNs), recent studies propose a separate training framework to train GNNs and NEs respectively \cite{glem, giant}.

Given node labels $\mat{Y}$, the optimization problem is $\min_{\theta, \beta} \loss (\GNN(\mat{F}^{(\beta)}, \mat{A};\theta) , \mat{Y})$. To avoid severe scalability issue of feature convolutions, separate training frameworks alternately optimize $\theta$ and $\beta$ by
\begin{align} \label{eqn:obj_gnn}
    \min_{\theta} \loss (\GNN(\mat{F}^{(\beta)};\mat{A},\theta) , \mat{Y}),
\end{align}
and
\begin{align} \label{eqn:obj_ne}
    \min_{\beta} \loss (\GNN(\mat{F}^{(\beta)}; \mat{A},\theta) , \mat{Y}),
\end{align}
where $\loss$ is the loss function for the true objective function.

\udfsection{Training Phase of GNNs (Optimize $\theta$).} In Equation \eqref{eqn:obj_gnn}, as the parameters of NEs $\beta$ are fixed, the scalable GNNs in Section \ref{sec:scalable_gnn} are applicable without the reduction of graph sizes based on the fixed node features $\mat{F}^{(\beta)}$.

\udfsection{Training Phase of NEs (Optimize $\beta$).} On the right side of Equation \eqref{eqn:obj_ne}, the neighbor explosion issue remains unchanged as shown in Equation \eqref{eqn:gnn_minibatch}. Thus, many separate training frameworks ignore feature convolutions in GNNs to design a new loss function $\loss'$ to $\loss$
\begin{align}\label{eqn:obj_exist}
     \min_{\beta} \loss' (\mat{F}^{(\beta)} , \mat{Y}, \mat{A}) \approx  \min_{\beta} \loss (\GNN(\mat{F}^{(\beta)}; \mat{A},\theta) , \mat{Y}),
\end{align}
such as a self-supervision loss $\loss_{ssl} (\mat{F}^{(\beta)} ,\mat{A})$ \cite{giant} or a supervision loss $\loss (\psi(\mat{F}^{(\beta)}) ,\mat{Y})$ \cite{glem} with a scalable linear layer $\psi:\mathbb{R}^{d_f} \rightarrow \mathbb{R}^{d}$.

Our formulation is different from GLEM \cite{glem}, as LD and GLEM are based on different motivations.
Specifically, LD aims to recover $\GNN$ in Equation \eqref{eqn:obj_ne}, while GLEM aims to improve the quality of the pseudo labels $\mat{Y}_{\mathcal{V}_{test},:}$ on the test nodes $\mathcal{V}_{test}$  for semi-supervised learning.
Thus, we omit the improvement of $\mat{Y}$ and assume that the node labels $\mat{Y}$ used in LD and GLEM are the same, as LD adopts the approach proposed by GLEM to improve the quality of the pseudo labels for semi-supervised learning. 

\section{Label Deconvolution}


In this section, we elaborate on the proposed Label Deconvolution (LD), a simple and efficient label regularization technique. First, we introduce the motivation of LD, based on the concept of the inverse mapping of graph neural networks (GNNs). Next, we present a fast approximation algorithm to implement this inverse mapping. Finally, we highlight the advantages of LD over existing separate training methods.


\subsection{Motivation for Label Deconvolution}\label{sec:motivation}

As shown in Section \ref{sec:separate_training}, existing separate training methods propose many approximate loss functions \ref{eqn:obj_exist} during the training phase of NEs. However, the approximation does not ensure optimality and thus suffers from a significant learning bias.
We give a counterexample to show that existing loss functions do not ensure optimality in Section \ref{sec:example}.



To alleviate the learning bias, we derive an equivalent loss function based on the based on the concept of the inverse mapping of GNNs, which ensures the optimality of the proposed loss function.
Specifically, if $\GNN$ is invertible  and $L$-Lipschitz continuous, i.e., for all $\mathbf{F}^{1}, \mathbf{F}^{2}$
\begin{align*}
    \| \GNN(\mathbf{F}^{1};\mathbf{A},\theta) - \GNN(\mathbf{F}^{2};\mathbf{A},\theta) \|_F \leq L \| \mathbf{F}^{1} - \mathbf{F}^{2} \|_F, 
\end{align*}
then we have
\begin{align*}
    \| \GNN(\mathbf{F}^{(\beta)};\mathbf{A},\theta) - \mathbf{Y} \|_F \leq L \| \mathbf{F}^{(\beta)} - \GNN^{-1}(\mathbf{Y};\mathbf{A},\theta) \|_F,
\end{align*}
where $\GNN^{-1}$ is the inverse mapping of $\GNN$ and $\mat{Y}$ is the node labels in the whole graph.
Thus, the minimization of  $\| \GNN(\mathbf{F}^{(\beta)};\mathbf{A},\theta) - \mathbf{Y} \|_F$ is bounded by an equivalent minimization of $\| \mathbf{F}^{(\beta)} - \GNN^{-1}(\mathbf{Y};\mathbf{A},\theta) \|_F$.

The feature convolution $\GNN( \mathbf{F}^{(\beta)}; \mat{A}, \theta)$ and the proposed label deconvolution $\GNN^{-1}( \mat{Y}; \mat{A}, \theta)$ obtain equivalent objective functions, while their different inputs can affect the scalability.
During the training phase of NEs, the feature $\mathbf{F}^{(\beta)}$ is learnable with $\beta$ while the label $\mat{Y}$ and the GNN parameters $\theta$ are fixed. 
Thus, we can preprocess $\GNN^{-1}( \mat{Y}; \mat{A}, \theta)$ once and reuse the results during the training phase of NEs, which avoids the memory and time-consuming operation many times.


Inspired by the observation, we replace Frobenius norm $\|\cdot\|_F$ with the objective function \eqref{eqn:obj_ne}, i.e.,
\begin{align} \label{eqn:obj}
    &\min_{\beta} \loss (\mat{F}^{(\beta)} , \hat{\mat{Y}} );\\ \label{eqn:ld}
    &\text{s.t.}\,  \hat{\mat{Y}}  = \GNN^{-1}( \mat{Y}; \mat{A}, \theta) \,\text{(pre-processing)},
\end{align}
where $\GNN^{-1}$ is the inverse mapping of $\GNN$. We call $\hat{\mat{Y}}$ the inverse labels.
Thus, the mini-batch version of the objective function \eqref{eqn:obj} is
\begin{align*}
    \min_{\beta} \loss (\mat{F}^{(\beta)}_{\mathcal{B},:} , \hat{\mat{Y}}_{\mathcal{B},:} ),
\end{align*}
where $\mathcal{B}$ is the mini-batch of nodes.

We further simplify the objective function \eqref{eqn:obj} with the spectral formulation of GNNs \eqref{eqn:spectral_gnn} introduced in Section \ref{sec:spe_gnns}.
Combining $\GNN(\mat{F}^{(\beta)};\mat{A},\theta)  = \phi (\hat{\mat{A}}; \theta^{(\phi)}) \psi( \mat{F}^{(\beta)}; \theta^{(\psi)} )$ with the objective function \eqref{eqn:obj} leads to
\begin{align} \label{eqn:obj_linear}
    &\min_{\beta, \theta } \loss (\psi( \mat{F}^{(\beta)};\theta) , \hat{\mat{Y}} );\\ \nonumber
    &\text{s.t.}\,  \hat{\mat{Y}}  = \phi^{-1} (\hat{\mat{A}}; \theta) \mat{Y}. \,\text{(pre-processing)}
\end{align}
Equation \eqref{eqn:obj_linear} preserves the scalable non-linear transformations of GNNs and pre-process the inverse of linear feature convolutions $\phi^{-1} (\hat{\mat{A}}; \theta)$.

Notably, Equation \eqref{eqn:obj_linear} incorporates a part of the GNN parameters $\theta$ during the training phase of NEs. The incorporation significantly alleviates the learning bias from that by the joint training of NEs and GNNs, while it is still scalable.

\subsection{Fast Approximation of Inverse Mapping}\label{sec:label_deconvolution}

To further avoid the inverse mapping of linear feature convolutions, we propose a trainable label deconvolution to generate the inverse labels $ \mat{Y}^{(\gamma)}$.
Label deconvolution aims to parameterize $ \mat{Y}^{(\gamma)}$ with $\gamma$ such that the expressiveness of $ \mat{Y}^{(\gamma)}$ is similar to $\phi^{-1} (\hat{\mat{A}}; \theta) \mat{Y}$, i.e.,
\begin{align*}
    \{ \mat{Y}^{(\gamma)} : \gamma \} \approx \{ \phi (\hat{\mat{A}}; \theta)^{-1}\mat{Y}: \theta  \}.
\end{align*}
Thus, Equation \eqref{eqn:obj_linear} becomes
\begin{align} \label{eqn:obj_ld}
    &\min_{\beta, \theta, \gamma} \loss (\psi( \mat{F}^{(\beta)};\theta) , \mat{Y}^{(\gamma)} ).
\end{align}
Comparing with Equation \eqref{eqn:obj_linear}, Equation \eqref{eqn:obj_ld} implicitly incorporates parameters $\theta^{(\phi)}$ by our proposed reparameterization approach with $\gamma$.
We provide the theoretical analysis in Section \ref{sec:theory}.

The key idea is inspired by the Cayley-Hamilton theorem. We first introduce two useful lemmas as follows.
\begin{lemma}{\cite{Cayley-Hamilton1, Cayley-Hamilton3}}\label{lemma:inverse}
    Let the characteristic polynomial of a matrix $\mat{M}$ be $f(\lambda) = p_n \lambda^n + p_{n-1} \lambda^{n-1} + \cdots + p_1 \lambda + p_0$.
    If the matrix $\mat{M}$ is invertible, then the inverse of $\mat{M}$ is
    \begin{align*}
        \mat{M}^{-1} = - \frac{1}{p_0}(p_n \mat{M}^{n-1} + p_{n-1} \mat{M}^{n-1} + \cdots + p_2 \mat{M} + p_1 \mat{I}).
    \end{align*}
\end{lemma}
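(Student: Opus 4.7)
The plan is to invoke the Cayley--Hamilton theorem, which asserts that any square matrix annihilates its own characteristic polynomial. Applied to $\mat{M}$, this yields the matrix identity
\[
p_n \mat{M}^n + p_{n-1}\mat{M}^{n-1} + \cdots + p_1 \mat{M} + p_0 \mat{I} = \mat{0}.
\]
The first step is to isolate the constant term to obtain
\[
p_0 \mat{I} = -\bigl(p_n \mat{M}^n + p_{n-1}\mat{M}^{n-1} + \cdots + p_1 \mat{M}\bigr),
\]
and then factor one copy of $\mat{M}$ out of the right-hand side, rewriting it as $p_0 \mat{I} = -\mat{M}\bigl(p_n \mat{M}^{n-1} + p_{n-1}\mat{M}^{n-2} + \cdots + p_2 \mat{M} + p_1 \mat{I}\bigr)$. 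This already displays $\mat{M}$ times the polynomial expression that appears on the right-hand side of the lemma's claim.

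The second step is to justify dividing by $p_0$. Since $p_0 = f(0)$ equals $\det(\mat{M})$ up to a sign fixed by the normalization of $f$, the invertibility assumption on $\mat{M}$ ensures $p_0 \neq 0$. Left-multiplying the factored identity by $\mat{M}^{-1}/p_0$ then eliminates the leading $\mat{M}$ and delivers exactly the closed-form expression for $\mat{M}^{-1}$ stated in the lemma.

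The main obstacle is essentially bookkeeping rather than mathematical content: the result is a classical corollary of Cayley--Hamilton, already attributed by the paper to standard references. The only points requiring care are matching the sign and indexing conventions in the characteristic polynomial (so that $p_0$ is correctly identified as the constant term and its nonvanishing follows from $\det(\mat{M}) \neq 0$), and noting in passing that the displayed formula in the statement contains a typographical slip in which the second occurrence of $\mat{M}^{n-1}$ should read $\mat{M}^{n-2}$, consistent with the factoring carried out above.
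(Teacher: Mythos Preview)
Your proof is correct and follows the standard Cayley--Hamilton argument. Note that the paper does not actually supply its own proof of this lemma: it is stated with citations to external references \cite{Cayley-Hamilton1, Cayley-Hamilton3} and then used as a black box in the proof of Proposition~\ref{prop:expressiveness}, so there is no in-paper proof to compare against. Your observation about the typographical slip ($\mat{M}^{n-1}$ versus $\mat{M}^{n-2}$) is also accurate.
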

\begin{lemma}{\cite{Cayley-Hamilton2, Cayley-Hamilton3}}\label{lemma:poly}
    The matrix $\mat{M}^N \in \mathbb{R}^{n \times n}$ can be expressed as a matrix polynomial of degree less than $n$, i.e.,
    \begin{align*}
        \mat{M}^N = \sum_{i=0}^{n-1}p_i \mat{M}^i.
    \end{align*}
\end{lemma}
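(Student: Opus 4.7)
The plan is to deduce this result directly from the Cayley--Hamilton theorem combined with a straightforward induction on the exponent $N$. Since the statement is classical and well documented (hence the citation), I would present the argument in two stages: a base case handled by Cayley--Hamilton, and an inductive step reducing any higher power of $\mat{M}$ to a polynomial of degree less than $n$.

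For the base case, I invoke the Cayley--Hamilton theorem, which asserts that $\mat{M}$ satisfies its own characteristic polynomial. Writing this polynomial (after normalizing the leading coefficient) as $f(\lambda) = \lambda^n + c_{n-1}\lambda^{n-1} + \cdots + c_1 \lambda + c_0$, Cayley--Hamilton gives $f(\mat{M}) = \mat{0}$, so
\begin{align*}
\mat{M}^n = -c_{n-1}\mat{M}^{n-1} - c_{n-2}\mat{M}^{n-2} - \cdots - c_1 \mat{M} - c_0 \mat{I},
\end{align*}
which is already a polynomial in $\mat{M}$ of degree less than $n$. For $N<n$ the assertion is trivial: $\mat{M}^N$ itself has degree $N<n$, so we set $p_N=1$ and $p_i=0$ for $i\neq N$. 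Thus the claim holds for all $N \le n$.

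For the inductive step, suppose $N>n$ and assume $\mat{M}^{N-1} = \sum_{i=0}^{n-1} q_i \mat{M}^i$ for some coefficients $\{q_i\}$. Multiplying on the left by $\mat{M}$ gives
\begin{align*}
\mat{M}^N \;=\; \sum_{i=0}^{n-1} q_i \mat{M}^{i+1} \;=\; q_{n-1}\mat{M}^n + \sum_{i=1}^{n-1} q_{i-1}\mat{M}^{i},
\end{align*}
and substituting the base-case identity for $\mat{M}^n$ into the first term collapses the whole expression back into a polynomial of degree less than $n$ in $\mat{M}$, closing the induction.

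The main obstacle is essentially off-loaded onto the Cayley--Hamilton theorem itself, which the lemma cites rather than re-derives; once that tool is in hand, the only care required is bookkeeping of the coefficients when $\mat{M}^n$ is substituted in the inductive step, and handling the trivial cases $N<n$ and $N=n$ separately so the induction actually starts in the right place. I would not expect any genuine difficulty beyond this.
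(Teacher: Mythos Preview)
Your argument is correct: the Cayley--Hamilton theorem gives the base case $N=n$, the cases $N<n$ are trivial, and the inductive step is a routine substitution. The paper does not prove this lemma at all; it simply states it with a citation and uses it as a black box in the proofs of Proposition~\ref{prop:expressiveness} and Theorem~\ref{thm:cayley}, so your write-up actually supplies more detail than the paper itself.
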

The proposition follows immediately.
\begin{proposition}\label{prop:expressiveness}
    If $\phi(\hat{\mat{A}}; \theta)^{-1}$ is invertible, then $\phi(\hat{\mat{A}}; \theta)^{-1}$ is expressed as a linear combination of the matrix powers of $\hat{\mat{A}}$, i.e., $\phi(\hat{\mat{A}}; \theta)^{-1} = \sum_{i=0}^{|\mathcal{V}|-1} \gamma_i \hat{\mat{A}}^i$.
\end{proposition}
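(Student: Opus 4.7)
The plan is to chain the two lemmas together, with Lemma~\ref{lemma:inverse} used to express the inverse as a polynomial in $\phi(\hat{\mat{A}};\theta)$ and Lemma~\ref{lemma:poly} used to collapse that polynomial back down to a polynomial in $\hat{\mat{A}}$ of degree at most $|\mathcal{V}|-1$. I read the hypothesis as "$\phi(\hat{\mat{A}};\theta)$ is invertible" (matching the setting of Lemma~\ref{lemma:inverse}); under this reading the argument proceeds in three clean steps.

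First, I would treat $\mat{M} := \phi(\hat{\mat{A}};\theta) \in \mathbb{R}^{|\mathcal{V}| \times |\mathcal{V}|}$ as a single matrix and apply Lemma~\ref{lemma:inverse} to it. Since its characteristic polynomial has degree $|\mathcal{V}|$, Lemma~\ref{lemma:inverse} gives coefficients $q_0,\dots,q_{|\mathcal{V}|-1}$ with
\begin{equation*}
\phi(\hat{\mat{A}};\theta)^{-1} \;=\; \sum_{k=0}^{|\mathcal{V}|-1} q_k \, \phi(\hat{\mat{A}};\theta)^k,
\end{equation*}
so the inverse is already a polynomial in $\mat{M}$ of degree below $|\mathcal{V}|$.

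Second, I would expand. By the definition in Section~\ref{sec:spe_gnns}, $\phi(\hat{\mat{A}};\theta) = \sum_{i=0}^{N}\theta_i^{(\phi)} \hat{\mat{A}}^i$ is itself a polynomial in $\hat{\mat{A}}$, so every power $\phi(\hat{\mat{A}};\theta)^k$ is again a polynomial in $\hat{\mat{A}}$ (of degree at most $Nk$). Substituting these expansions into the sum above and collecting like powers of $\hat{\mat{A}}$ yields coefficients $\tilde{\gamma}_0,\tilde{\gamma}_1,\dots,\tilde{\gamma}_D$ (with $D \le N(|\mathcal{V}|-1)$) such that $\phi(\hat{\mat{A}};\theta)^{-1} = \sum_{j=0}^{D}\tilde{\gamma}_j \hat{\mat{A}}^j$. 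This establishes that the inverse is a polynomial in $\hat{\mat{A}}$, but possibly of degree exceeding $|\mathcal{V}|-1$.

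Third, I would invoke Lemma~\ref{lemma:poly} to truncate the degree. Applied to $\hat{\mat{A}}\in\mathbb{R}^{|\mathcal{V}|\times|\mathcal{V}|}$, Lemma~\ref{lemma:poly} states that any matrix power $\hat{\mat{A}}^N$ with $N \ge |\mathcal{V}|$ can be rewritten as a linear combination of $\mat{I},\hat{\mat{A}},\dots,\hat{\mat{A}}^{|\mathcal{V}|-1}$. Iteratively (or by direct substitution) replacing every $\hat{\mat{A}}^j$ with $j \ge |\mathcal{V}|$ in the expression from the second step reduces the polynomial to the form $\sum_{i=0}^{|\mathcal{V}|-1}\gamma_i \hat{\mat{A}}^i$, which is exactly the claim.

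I do not anticipate a genuine obstacle here, since the argument is essentially a bookkeeping exercise built from the two quoted lemmas. The only subtlety worth being careful about is avoiding circularity: Lemma~\ref{lemma:inverse} is applied with $\mat{M}=\phi(\hat{\mat{A}};\theta)$ rather than with $\mat{M}=\hat{\mat{A}}$, so the resulting "characteristic" coefficients belong to the polynomial of $\phi(\hat{\mat{A}};\theta)$ and must not be confused with those of $\hat{\mat{A}}$ itself. Once this is kept straight, the degree reduction in the third step is the only nontrivial move, and Lemma~\ref{lemma:poly} handles it directly.
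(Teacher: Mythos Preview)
Your proposal is correct and follows essentially the same three-step argument as the paper's own proof: apply Lemma~\ref{lemma:inverse} to $\mat{M}=\phi(\hat{\mat{A}};\theta)$, expand using $\phi(\hat{\mat{A}};\theta)=\sum_{i=0}^{N}\theta_i^{(\phi)}\hat{\mat{A}}^i$ to obtain a polynomial in $\hat{\mat{A}}$ of degree at most $N(|\mathcal{V}|-1)$, and then invoke Lemma~\ref{lemma:poly} to reduce the degree below $|\mathcal{V}|$. Your reading of the hypothesis as ``$\phi(\hat{\mat{A}};\theta)$ is invertible'' is also the intended one.
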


Thus, we parameterize the inverse labels $\mat{Y}^{(\gamma)}$ by
\begin{align}\label{eqn:inv_label}
    \mat{Y}^{(\gamma)} = \sum_{i=0}^N \gamma_i \mat{K}_i = \sum_{i=0}^N \gamma_i \hat{\mat{A}}^i \mat{Y},
\end{align}
where $N$ is a hyper-parameter and the variables $\gamma_i \in \mathbb{R}$ are trainable parameters.

Intuitively, the $i$-hop labels $\mat{K}_i = \hat{\mat{A}}^i \mat{Y}$ are the (weighted) average of the labels in $k$-hop neighbors.
For an $N$-layer GNN, the prediction (representation) of a node not only depends on its feature but also its $N$-hop neighbors' features.
Similarly, the feature of a node not only contributes to its prediction but also its $N$-hop neighbors' predictions.
Therefore, the $i$-hop labels are effective in alleviating the learning bias during the training phase of NEs.


The mini-batch version of Equation \eqref{eqn:inv_label} is
\begin{align}\label{eqn:inv_label_minibatch}
    \hat{\mat{Y}}_{\mathcal{B}}^{(\gamma)} &= \sum_{i=0}^N \gamma_i [\mat{K}_i]_{\mathcal{B},:} ;\\ \label{eqn:preprocess}
    \mat{K}_i &= (\hat{\mat{A}})^i \mat{Y}. \,\text{(pre-processing)}
\end{align}
where $\mathcal{B}$ is the mini-batch of nodes.

\begin{figure*}[t]
    \centering
    \begin{subfigure}{0.24\textwidth}
        \includegraphics[width=\textwidth]{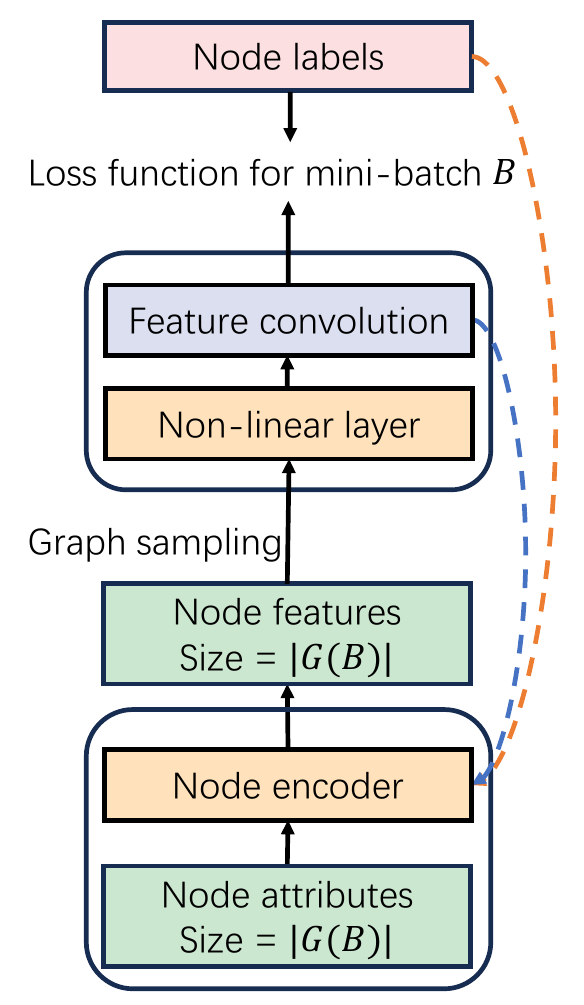}
        \caption{Joint training.}
        \label{fig:end2end}
    \end{subfigure}
    \begin{subfigure}{0.24\textwidth}
        \includegraphics[width=\textwidth]{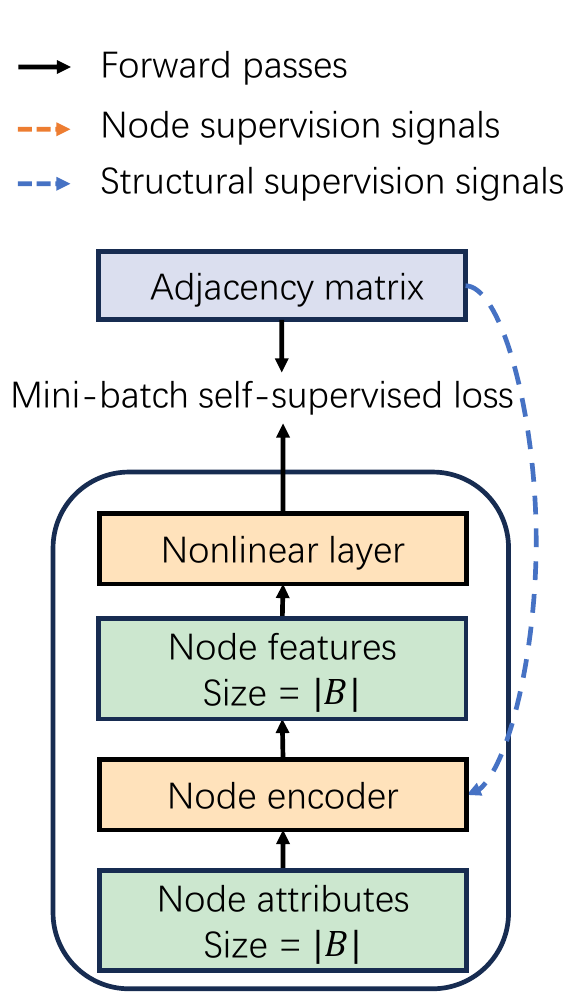}
        \caption{GIANT.}
        \label{fig:giant}
    \end{subfigure}
    \begin{subfigure}{0.24\textwidth}
        \includegraphics[width=\textwidth]{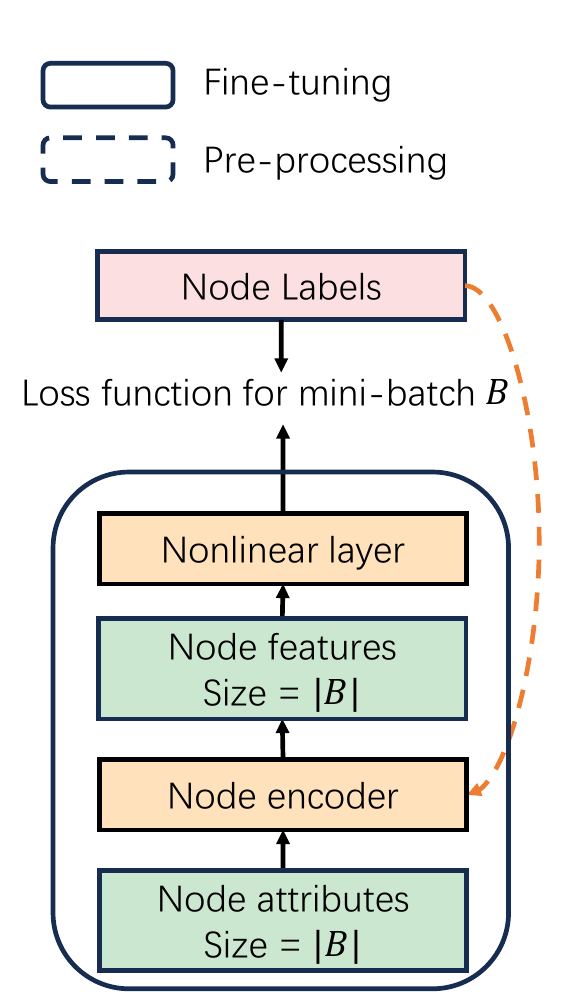}
        \caption{GLEM.}
        \label{fig:glem}
    \end{subfigure}
    \begin{subfigure}{0.24\textwidth}
        \includegraphics[width=\textwidth]{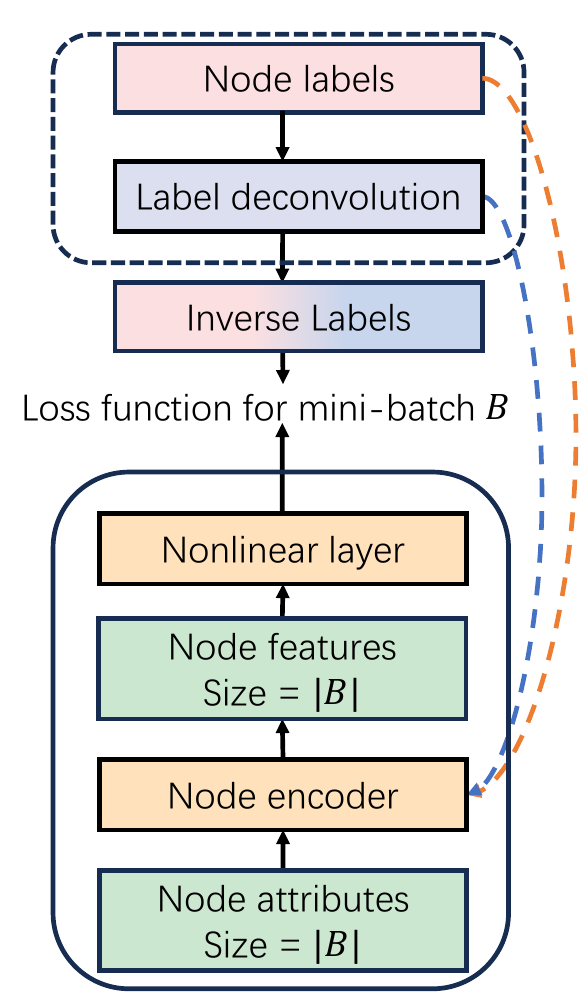}
        \caption{Label deconvolution.}
        \label{fig:ld}
    \end{subfigure}
    \caption{Comparison of different loss functions for NEs.}
    \label{fig:comparison}
\end{figure*}

We summarize our algorithms in Algorithm \ref{alg:ld}. Given node labels be $\mat{Y}$, we first compute and save the $i$-hop labels $\mat{K}_i$ via Equation \eqref{eqn:preprocess} during the pre-processing phase. During the training phase of NEs, we can efficiently compute the inverse labels $\mat{Y}^{(\gamma)}_{\mathcal{B}}$ and use them to train NEs and non-linear transformations $\psi$ of GNNs.
Notably, we simultaneously optimize $\beta$, $\theta$, and $\gamma$ rather than alternately optimize.
Finally, during the training phase of GNNs, we train scalable GNNs via Equation \eqref{eqn:obj_gnn}.

\begin{algorithm}[h]
    \caption{Label Deconvolution}
    \label{alg:ld}
    \begin{algorithmic}[1]
        \STATE {\bfseries Input:} 
        Graph $\mat{A}$, attributes $\{\mat{x}_i\}$, (psuedo) labels $\mat{Y}$.
        \STATE {\bfseries Output:} 
        Node encoder with parameters $\beta$ and GNN with parameters $\theta$.
        \STATE \textit{// Pre-processing Phase:}
        \STATE Compute $\mat{K}_i$ via Equation \eqref{eqn:preprocess}.
        \STATE \textit{// Training Phase of NEs (Optimize $\beta, \theta$):}
        \FOR{{\small$k = 1, \dots, K_1$}}
            \STATE Randomly sample $\mathcal{B}$ from $\mathcal{V}$
            \STATE Compute the node features $\mat{F}^{(\beta)}_{\mathcal{B}}$ via Equation \eqref{eqn:ne}.
            \STATE Compute the inverse labels $\mat{Y}^{(\gamma)}_{\mathcal{B}}$ via Equation \eqref{eqn:inv_label_minibatch}.
            \STATE Optimize $\beta, \theta, \gamma$ via the loss $\loss(\psi(\mat{F}^{(\beta)}_{\mathcal{B}};\theta), \mat{Y}^{(\gamma)}_{\mathcal{B}})$
        \ENDFOR
        \STATE Infer the node features $\mat{F}^{(\beta)}$ across all nodes.
        \STATE \textit{// Training Phase of GNNs (Optimize $\theta$ with fixed $\beta$):}
        \FOR{{\small$k = 1, \dots, K_2$}}
            \STATE Compute the node features $\mat{H}_{\mathcal{B}}$ via Equation \eqref{eqn:gnn_minibatch}
            \STATE Optimize $\theta$ via the loss $\loss(\mat{H}_{\mathcal{B}}, \mat{Y}_{\mathcal{B}})$
        \ENDFOR
    \end{algorithmic}
\end{algorithm}

\subsection{Comparision with Different Loss Functions for NEs}\label{sec:comparison}

As shown in Section \ref{sec:separate_training}, many existing separate training frameworks propose various loss functions to approximate the right side of Equation \eqref{eqn:obj_ne}, leading to a significant learning bias relative to the joint training, which directly optimizes $\min_{\theta, \beta} \loss (\GNN(\mat{F}^{(\beta)}, \mat{A};\theta), \mat{Y})$.
We summarize the learning bias in terms of node labels and graph structures.
Fig. \ref{fig:comparison} shows the loss functions of the joint training, LD, GIANT \cite{giant}, and GLEM \cite{glem}.
LD integrates the graph structures with the node labels to generate the inverse labels, maintaining a similar learning behavior to that of the joint training.
However, GIANT and GLEM neglect either the graph structures or the node labels, resulting in a significant learning bias.

Although LD and the joint training share a similar learning learning behavior, LD is more memory-efficient than the joint training. Specifically, to compute the loss on a mini-batch of nodes $\mathcal{B}$, the NEs of LD encode the attributes in the mini-batch $\mathcal{B}$ with the memory complexity $\mathcal{O}(|\mathcal{B}|)$. However, the NEs of the joint training encode the attributes in a sampled subgraph with size $G(\mathcal{B})$, leading to larger memory complexity $\mathcal{O}(|G(\mathcal{B})|)$ than LD.

Table \ref{tab:complexity_conv} shows the complexity of different training methods during the training phase of NEs and the supervision signals for NEs.
LD and GLEM are the fastest and most memory-efficient algorithms among all methods.
More importantly, LD incorporates graph structures $\mat{A}$ into the supervision signals for NEs, compared with GLEM.

\begin{table}[t]
    \centering
    \caption{
    Computational complexity of different training methods during the training phase of NEs and the supervision signals for NEs.
    $\|\mat{A}\|_0$ is the number of edges and satisfies $|\mathcal{V}| \leq \|\mat{A}\|_0 \leq |\mathcal{V}|^2$.
    }
    \label{tab:complexity_conv}
    \scalebox{1.0}{
    \begin{tabular}{cccc}
    \toprule
         \multirow{2}[1]{*}{\textbf{Methods}}  & \textbf{Time}  &\textbf{Memory} & \textbf{Supervision}  \\
         &  \textbf{Complexity} &  \textbf{Complexity} & \textbf{Signals for NEs} \\
        \midrule
        Joint training  & $\mathcal{O}(|\mathcal{V}|\frac{|G(\mathcal{B})|}{|\mathcal{B}|})$ & $\mathcal{O}(|G(\mathcal{B})|)$ & $\mat{Y}$ \& $\mat{A}$ \\
        GIANT \cite{giant}  & $\mathcal{O}(\|\mat{A}\|_0)$ & $\mathcal{O}(|\mathcal{B}|)$ & $\mat{A}$ \\
        GLEM \cite{glem} & $\mathcal{O}(|\mathcal{V}|)$ & $\mathcal{O}(|\mathcal{B}|)$ &  $\mat{Y}$  \\
        \midrule
        LD & $\mathcal{O}(|\mathcal{V}|)$ & $\mathcal{O}(|\mathcal{B}|)$ & $\mat{Y}$ \& $\mat{A}$ \\
    \bottomrule
    \end{tabular}
    }
\end{table}

\section{Theoretical Results}\label{sec:theory}

In this section, we analyze the learning behaviors of LD. Specifically, we show that LD converges to the optimal objective function values by the joint training under some mild assumption, while GLEM---one of the state-of-the-art separate training frameworks---may converge to the sub-optimal objective function values.
We provide detailed proofs of the theorems in Appendix B.

We formulate the training process of LD as
\begin{align*}
    & \min_{ \theta, \beta_{LD}} \, \loss(\GNN(\mat{F}^{(\beta_{LD})}, \mat{A};\theta), \mat{Y});\\
    & \text{s.t.}\, \beta_{LD} \in \arg\min_{\beta} \min_{\gamma, \theta} \loss(\psi(\mat{F}^{(\beta)}; \theta),  \mat{Y}^{(\gamma)}).
\end{align*}
Similarly, the training process of GLEM is
\begin{align*}
    & \min_{ \theta, \beta_{GLEM}} \, \loss(\GNN(\mat{F}^{(\beta_{GLEM})}, \mat{A};\theta), \mat{Y});\\
    & \text{s.t.}\, \beta_{GLEM} \in \arg\min_{\beta} \min_{\theta}  \loss(\psi(\mat{F}^{(\beta)}; \theta),  \mat{Y}).
\end{align*}
For simplicity, the loss function $\loss$ is a mean squared loss.

\subsection{Motivating Example}\label{sec:example}

Consider a graph $\mathcal{G} = (\mathcal{V}, \mathcal{E}) = (\{1,2,3,4\}, \{(1,2),(2,1),(3,4),(4,3)\})$ with one-hot node attributes $\mat{X} = (\mat{e}_1, \mat{e}_2, \mat{e}_2, \mat{e}_3)^{\top}$. The $i$-th element of  $\mat{e}_i \in \mathbb{R}^3$ is one and the others are zero. The corresponding node labels are one-hot vectors $\mat{Y} = (\mat{e}_2, \mat{e}_1, \mat{e}_3, \mat{e}_2)^{\top}$. Clearly, we have $\mat{Y} = \hat{\mat{A}} \mat{X}$. We use a GNN model $\GNN(\mat{F}, \mat{A}) = \hat{\mat{A}}\mat{F}$ and a learnable node encoder $\mat{F} = \mat{X} \beta$ to fix $\mat{Y}$. Table \ref{tab:example} shows the learned node features $\mat{F}$ and the accuracy of LD and GLEM. The node labels of nodes $1,4$ are incorrect for the node encoder such that $\mat{F}_{GLEM}$ can not distinguish them. Thus, the GNN model further can not distinguish nodes $1,4$ as they also share the same neighbor. In contrast, inverse labels help LD learn correct node features and further give the correct prediction.
We provide the detailed derivation in Appendix B.2.

\begin{table}[t]
  \centering
  \setlength{\belowcaptionskip}{5pt}
  \caption{The motivating example for LD and GLEM.}
  \label{tab:example}
  \scalebox{1.0}{
    \begin{tabular}{ccc|cccc}
        \toprule
        $\mat{Y}$ & $\mat{X}$ & & $\mat{F}_{GLEM}$ & $\mat{A}\mat{F}_{GLEM}$ & $\mat{F}_{LD}$ & $\mat{A}\mat{F}_{LD}$ \\
        \midrule
        $\mat{e}_2$ & $\mat{e}_1$ & & $\mat{e}_2$ & $(\mat{e}_1 + \mat{e}_3)/2$ & $\mat{e}_1$ & $\mat{e}_2$ \\ 
        $\mat{e}_1$ & $\mat{e}_2$ & & $(\mat{e}_1 + \mat{e}_3)/2$ & $\mat{e}_2$ & $\mat{e}_2$ & $\mat{e}_1$ \\
        $\mat{e}_3$ & $\mat{e}_2$ & & $(\mat{e}_1 + \mat{e}_3)/2$ & $\mat{e}_2$ & $\mat{e}_2$ & $\mat{e}_3$ \\
        $\mat{e}_2$ & $\mat{e}_3$ & & $\mat{e}_2$ & $(\mat{e}_1 + \mat{e}_3)/2$ & $\mat{e}_3$ & $\mat{e}_2$ \\
        \midrule
        \multicolumn{2}{c}{Accuracy} & &  - & 0\% &  - & 100\% \\
        \bottomrule
    \end{tabular}
    }
\end{table}%

\begin{table*}[t]
  \begin{center}
    \caption{Statistics of the datasets in the experiments.
    }\label{tab:datasets}
    \scalebox{1.0}{
    \begin{tabular}{ccccccc}
    \toprule
    \textbf{\#Dataset} & \textbf{\#Node attributes}   & \textbf{\#Metric} &\textbf{Total \#Nodes} & \textbf{Total \#Edges} &  \textbf{\#Avg. Degree} & \textbf{Split Ratio} \\
      \midrule
      ogbn-arxiv & Textural titles and abstracts  & Accuracy  & 169,343 & 1,157,799 & 6.8 & 0.54/0.18/0.28 \\
       ogbn-product & Textural descriptions for products   & Accuracy  & 2,449,029 & 61,859,076 & 25.3 & 0.08/0.02/0.90 \\
       ogbn-protein & Protein sequences  & ROC-AUC  & 132,534 & 39,561,252	& 298.5 & 0.65/0.16/0.19 \\
      \bottomrule
    \end{tabular}
    }
  \end{center}
\end{table*}

\subsection{Theory}

Our theory is based on an observation that the node labels not only depend on node attributes but also graph structures. 
We formulate the observation as the following assumption.
\begin{assumption}\label{ass:attributes_labels}
    The node labels $\mat{Y}$ are given by graph structures and node attributes, i.e., $\mat{Y} = \phi(\hat{\mat{A}}) \psi(\mat{F}^*)$, where $\hat{\mat{A}} = \mat{D}^{-1}\mat{A}$ is the normalized adjacent matrix, $\mat{D}$ is the degree matrix, and $\mat{F}^*_{:,i} = f(\mat{x}_i)$ represents the node feature extracted from the attribute $\mat{a}_i$. Let $\psi: \mathbb{R}^{d_f} \rightarrow \mathbb{R}^{d}$ be an unknown encoder like multi-layer perceptrons (MLPs) and $\phi(\hat{\mat{A}}) = \sum_{n=0}^N \alpha_n^* \hat{\mat{A}}^n$ be an unknown polynomial graph signal filter.
    We assume that $\phi(\hat{\mat{A}})$ is invertible. 
\end{assumption}
We parameterize $\GNN$ by a spectral-based GNN \cite{linear_gnn}, i.e.,
\begin{align*}
    \GNN(\mat{F}^{(\beta)}, \mat{A};\theta) = \phi(\hat{\mat{A}};\theta) \psi(\mat{F}^{(\beta)};\theta).
\end{align*}
Recent works theoretically show that the expressiveness of the spectral-based GNN is equal to the 1-WL test---which bounds the expressiveness of many GNNs  \cite{linear_gnn}. Clearly, the joint training of NEs and GNNs converges to the optimal objective function values, i.e.
\begin{align*}
    \min_{ \theta, \beta} \, \loss(\GNN(\mat{F}^{(\beta)}, \mat{A};\theta), \mat{Y}) = 0.
\end{align*}

We show $\min_{ \theta} \, \loss(\GNN(\mat{F}^{(\beta_{LD})}, \mat{A};\theta), \mat{Y}) = 0$ by the following theorem.

\begin{theorem} \label{thm:cayley}
    If Assumption \ref{ass:attributes_labels} holds, then there exists $\beta_{LD} \in \arg\min_{\beta} \min_{\gamma, \theta} \loss(\psi(\mat{F}^{(\beta)}; \theta),  \mat{Y}^{(\gamma)})$ such that $\min_{ \theta} \, \loss(\GNN(\mat{F}^{(\beta_{LD})}, \mat{A};\theta), \mat{Y}) = 0$.
\end{theorem}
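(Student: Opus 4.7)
The plan is to exploit Assumption \ref{ass:attributes_labels} together with Proposition \ref{prop:expressiveness} to exhibit a specific $\beta_{LD}$ that simultaneously attains the LD optimum and realizes the global GNN optimum. First, I would rewrite the assumption as $\psi(\mat{F}^*) = \phi(\hat{\mat{A}})^{-1}\mat{Y}$, which is valid because $\phi(\hat{\mat{A}})$ is invertible by hypothesis. This is the key identity: it tells us that the unknown "ground-truth feature representation'' $\psi(\mat{F}^*)$ is exactly a certain rational expression in $\hat{\mat{A}}$ applied to the labels.

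Next, I would invoke Proposition \ref{prop:expressiveness} (which follows from the Cayley--Hamilton argument in Lemmas \ref{lemma:inverse} and \ref{lemma:poly}) to expand $\phi(\hat{\mat{A}})^{-1}$ as a matrix polynomial in $\hat{\mat{A}}$ of degree at most $|\mathcal{V}|-1$. Provided $N$ is chosen large enough (e.g., $N\ge |\mathcal{V}|-1$), this produces coefficients $\gamma^*$ such that
\begin{align*}
\mat{Y}^{(\gamma^*)} \;=\; \sum_{i=0}^{N}\gamma^*_i\,\hat{\mat{A}}^i \mat{Y} \;=\; \phi(\hat{\mat{A}})^{-1}\mat{Y} \;=\; \psi(\mat{F}^*).
\end{align*}
Appealing to realizability of the NE family and MLP, pick $\beta^*$ with $\mat{F}^{(\beta^*)}=\mat{F}^*$ and $\theta^{*,(\psi)}$ with $\psi(\cdot;\theta^{*,(\psi)})=\psi(\cdot)$. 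Then $\psi(\mat{F}^{(\beta^*)};\theta^{*,(\psi)})=\mat{Y}^{(\gamma^*)}$, so the inner loss $\loss(\psi(\mat{F}^{(\beta^*)};\theta),\mat{Y}^{(\gamma^*)})$ equals $0$. Because the squared loss is nonnegative, $\beta^*$ lies in $\arg\min_\beta \min_{\gamma,\theta}\loss(\psi(\mat{F}^{(\beta)};\theta),\mat{Y}^{(\gamma)})$, so we may take $\beta_{LD}=\beta^*$.

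It remains to verify the GNN side. With $\beta_{LD}=\beta^*$ fixed, choose $\theta^{(\phi)} = \alpha^*$ so that $\phi(\hat{\mat{A}};\theta^{(\phi)})=\phi(\hat{\mat{A}})$, and reuse $\theta^{(\psi)}=\theta^{*,(\psi)}$. A direct substitution then yields
\begin{align*}
\GNN(\mat{F}^{(\beta_{LD})},\mat{A};\theta) \;=\; \phi(\hat{\mat{A}})\,\psi(\mat{F}^*) \;=\; \mat{Y},
\end{align*}
so $\min_\theta\loss(\GNN(\mat{F}^{(\beta_{LD})},\mat{A};\theta),\mat{Y})=0$, completing the argument.

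The main obstacle I anticipate is the degree condition: Cayley--Hamilton only guarantees a polynomial of degree $|\mathcal{V}|-1$ representing $\phi(\hat{\mat{A}})^{-1}$, which can be much larger than the practical hyperparameter $N$ used in LD. A careful treatment needs either to state an assumption that $N$ exceeds the degree of the minimal polynomial of $\hat{\mat{A}}$ restricted to the relevant invariant subspace, or to argue that a lower-degree polynomial suffices in the cases of interest (e.g.\ using the no-multiple-eigenvalue/no-missing-frequency conditions referenced in Section \ref{sec:spe_gnns}). A secondary, lighter concern is realizability of $f$ and $\psi$ within the chosen parametric families; this should be stated explicitly, since without it the existence of $\beta^*$ and $\theta^{*,(\psi)}$ cannot be asserted.
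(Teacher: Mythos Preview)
Your proposal is correct and follows essentially the same route as the paper: both proofs invert $\phi(\hat{\mat{A}})$ via Cayley--Hamilton (Proposition~\ref{prop:expressiveness}) to exhibit $\gamma^*$ with $\mat{Y}^{(\gamma^*)}=\psi(\mat{F}^*)$, take $\beta_{LD}=\beta^*$ realizing $\mat{F}^*$ to drive the LD loss to zero, and then choose $\theta^*$ matching the ground-truth $\phi,\psi$ to make the GNN loss vanish. The two concerns you flag (the gap between $N$ and $|\mathcal{V}|-1$, and the tacit realizability of $f$ and $\psi$) are real but are likewise left implicit in the paper's own proof.
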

Notably, the motivating example in Section \ref{sec:example} is a counterexample  for $\beta_{GLEM}$. Specifically, in the motivating example, we have $\min_{ \theta} \, \loss(\GNN(\mat{F}^{(\beta_{GLEM})}, \mat{A};\theta), \mat{Y}) > 0$ for all $\beta_{GLEM} \in \arg\min_{\beta} \min_{\theta} \loss(\psi(\mat{F}^{(\beta)}; \theta),  \hat{\mat{Y}})$.

\section{Experiments}\label{sec2}

In this section, we evaluate the performance of LD by comparing it with the state-of-the-art training methods for node classification on multiple large-scale attributed graphs, including citation networks, co-purchase networks, and protein–protein association networks. We run all experiments five times on a single NVIDIA GeForce RTX 3090 (24 GB).

\subsection{Experimental Setups}

\subsubsection{Datasets}

We conduct experiments on ogbn-arxiv, ogbn-product, and ogbn-protein from widely-used Open Graph Benchmark (OGB) \cite{ogb}, whose graphs are citation networks, co-purchase networks, and protein–protein association networks respectively.
OGB provides an official leaderboard\footnote{\url{https://ogb.stanford.edu/docs/leader_nodeprop/}} for a fair comparison of different methods.
They are large-scale (up to 2.4M nodes and 61.9M edges in our experiments) and have been widely used in previous works \cite{gas}.
We follow the realistic training/validation/test splitting methods of OGB, such as by time (ogbn-arxiv), by sales ranking (ogbn-product), and by species which the proteins come from (ogbn-protein).
Their statistics are shown in Table \ref{tab:datasets}.

\subsubsection{Node Attributes}

For ogbn-arxiv and ogbn-product, we use titles/abstracts and textual descriptions for products provided by OGB \cite{ogb} as node attributes.
For ogbn-protein, we map the protein identifiers into the protein sequences based on the STRING database \cite{string}.

\subsubsection{Prediction Tasks and Evaluation Metric}

We use the official evaluation metric provided by OGB \cite{ogb}. Specifically, we use accuracy as the evaluation metric for ogbn-arxiv and ogbn-product, which aim to predict the primary categories of the arxiv papers and the categories of products respectively in a multi-class classification setup.
We use ROC-AUC as the evaluation metric for ogbn-protein, which aims to predict the presence of protein functions in a multi-label binary classification setup.

\subsubsection{Node Encoders and GNN architectures}

For each dataset, we use the state-of-the-art GNN architectures on the OGB leaderboards, including GCN \cite{gcn}, REVGAT \cite{deq_gcn}, GAMLP \cite{gamlp}, SAGN \cite{sagn}, and GAT \cite{gat}. 
For the textual attributes in ogbn-arxiv, following \cite{glem}, we use DeBERTa \cite{deberta} as the node encoder.
For ogbn-product, we follow \cite{glem} to integrate GAMLP with DeBERTa \cite{deberta}, and follow \cite{giant} to integrate SAGN with Bert \cite{bert}.
For the protein sequences in ogbn-protein, we use ESM2 \cite{esm2} as the node encoder.

\subsubsection{Baselines}\label{sec:baseline}

As shown in Section \ref{sec:separate_training}, many existing methods first extract node features $\mat{F}^{(\beta)}$ based on different approximations to Equation \eqref{eqn:obj_ne} and then train GNNs based on the same Equation \eqref{eqn:obj_gnn}.
The node features of our baselines include (1) the features provided by OGB \cite{ogb}, denoted as $\mathbf{X}_{\text {OGB}}$; (2) the embeddings inferred by pre-trained NEs, denoted as $\mathbf{X}_{\text {PNE}}$; (3) the embeddings inferred by fine-tuned NEs with the true node labels, denoted as $\mathbf{X}_{\text {FNE}}$; (4) the GIANT features \cite{giant}, denoted as $\mathbf{X}_{\text {GIANT}}$; (5) the GLEM features \cite{glem}, denoted as $\mathbf{X}_{\text {GLEM}}$.
We summarize the key differences between the above-mentioned baselines and LD in Table \ref{tab:baselines}.
Besides the separate training framework, we also jointly train NEs and GNNs via graph sampling (GAS \cite{gas} on the ogbn-arxiv dataset and SAGE \cite{graphsage} on the ogbn-protein dataset).
To run graph sampling on a single NVIDIA GeForce RTX 3090 (24 GB), we decrease the size of sampled subgraphs as shown in Section \ref{sec:scalable_gnn}. However, SAGE---one of the state-of-the-art neighbor sampling methods---still runs out of GPU memory on the ogbn-protein dataset due to its exponentially increasing complexity.

\begin{table}[t]
  \begin{center}
    \caption{The key differences between the baselines and LD.
    }\label{tab:baselines}
    \scalebox{1.0}{
    \begin{tabular}{cccc}
    \toprule
    \multirow{2}[1]{*}{\textbf{Features}}  & \textbf{Pre-trained} & \textbf{Training with } & \textbf{Training with}   \\
    & \textbf{models} &  \textbf{(pseudo) labels} & \textbf{graph structures} \\
      \midrule
      $\mathbf{X}_{\text {OGB}}$ & $\times$ & $\times$ & $\times$    \\
       $\mathbf{X}_{\text {PNE}}$ & $\checkmark$ & $\times$ & $\times$  \\
       $\mathbf{X}_{\text {FNE}}$ & $\checkmark$  & $\checkmark$ & $\times$ 	 \\
       $\mathbf{X}_{\text {GIANT}}$ & $\checkmark$ & $\times$ & $\checkmark$ 	 \\
       $\mathbf{X}_{\text {GLEM}}$ & $\checkmark$ & $\checkmark$ & $\times$  	 \\
       $\mathbf{X}_{\text {LD}}$ & $\checkmark$ & $\checkmark$ & $\checkmark$  	 \\
      \bottomrule
    \end{tabular}
    }
  \end{center}
\end{table}

\subsubsection{Implementation Details of LD}

We implement the inverse labels $\mat{Y}^{(\gamma)}$ to ensure the label constraints for the multi-class classification and multi-label binary classification setup.
Specifically, we parameterize $\gamma = \softmax(\gamma')$ with a uniform initialization to ensure that the inverse labels are positive and they have a similar norm to the true labels, i.e., $\mat{Y}^{(\gamma)}_{ij} \geq 0$ and $\| \mat{Y}^{(\gamma)} \| \approx \| \mat{Y} \|$. We implement the loss function
\begin{align}\label{eqn:implementation}
    \loss(\mat{H}, \mat{Y}^{(\gamma)}) = l(\mat{H}, (1-\alpha) \mat{Y} + \alpha \normalize(\mat{Y}^{(\gamma)})), 
\end{align}
where $l$ is the cross entropy loss and $\alpha$ is a hyper-parameter to avoid overfitting. 
For the multi-class classification, the normalization function $\normalize(\mat{Y}^{(\gamma)})_{ij} = \mat{Y}^{(\gamma)}_{ij} / (\sum_{k}\mat{Y}^{(\gamma)}_{ik})$ ensures that the output is a probability distribution.
For the multi-label binary classification setup, the normalization function clamps all elements in $\mat{Y}^{(\gamma)}$ into the range $[0,1]$.

We use the PyTorch library \cite{pytorch} to implement label deconvolution.
To ensure a fair comparison, the hyperparameters for pre-retrained NEs and GNNs in experiments are the same as GLEM \cite{glem}.
We follow the implementation of the top-ranked GNNs in the OGB leaderboard.
LD introduces two hyper-parameters $N$ and $\alpha$ defined in Equations \eqref{eqn:inv_label} and \eqref{eqn:implementation} respectively.
We set $N$ to be the number of convolutional layers used in GNNs in experiments.
We set $\alpha=1$ for the GNN models which use the fixed normalized adjacent matrix to aggregate messages from neighbors (e.g., GCN, GAMLP, and SAGN).
We search best $\alpha$ in $\{0.2, 0.5, 1.0\}$ for REVGAT and GAT which use the attention-based aggregation, as we approximate the graph attention by the normalized adjacent matrix inspired by \cite{imp} (see Appendix A for detailed derivation).

For the semi-supervised node classification, we first generate pseudo labels of the nodes in the validation and test sets following GLEM \cite{glem}.
To this end, we train GNNs based on $\mathbf{X}_{\text {PNE}}$.
We do not follow the EM algorithm of GLEM \cite{glem} to iteratively update pseudo labels, as the EM-iteration does not significantly improve performance but suffers from expensive training costs.

\begin{table*}[t]
  \centering
  \setlength{\belowcaptionskip}{5pt}
  \caption{Performance of node classification (mean\,±\,std\%). We bold the best result and underline the second best result. OOM denotes the out-of-memory issue.}
  \label{tab:main_table}

  \scalebox{0.96}{
       \begin{tabular}{c|cl|cccccccc}
    \toprule
    \multicolumn{1}{c}{\textbf{{Datasets}}} & \multicolumn{2}{c}{\textbf{{GNNs}}} & \textbf{Graph Sampling} & $\mathbf{X}_{\text {OGB}}$  & $\mathbf{X}_{\text {GIANT}}$ & $\mathbf{X}_{\text {PNE}}$ & $\mathbf{X}_{\text {FNE}}$ & $\mathbf{X}_{\text {GLEM}}$ & $\mathbf{X}_{\text {LD}}$             \\
    \midrule
    \multirow{4}[2]{*}{ogbn-arxiv} & \multirow{2}[1]{*}{GCN} & \textit{val} & 70.32 ± 0.69  & 73.00 ± 0.17 & 74.89 ± 0.17 & 74.74 ± 0.11 & 76.39 ± 0.11 & 76.86 ± 0.19 & 76.84 ± 0.09  \\
          &       & \textit{test} & 69.29 ± 0.72 & 71.74 ± 0.29 & 73.29 ± 0.10 & 74.04 ± 0.16 & 75.90 ± 0.16 & \underline{75.93 ± 0.19} &\textbf{ 76.22 ± 0.10}  \\
     & \multirow{2}[1]{*}{REVGAT} & \textit{val} & 74.63 ± 0.58  & 75.01 ± 0.10 & 77.01 ± 0.09 & 75.36 ± 0.18 &  75.99 ± 0.18 & 77.49 ± 0.17& 77.62 ± 0.08  \\
          &       & \textit{test} & 74.10 ± 0.52 & 74.02 ± 0.18 & 75.90 ± 0.19 & 75.14 ± 0.08 & 75.52 ± 0.08 & \underline{76.97 ± 0.19} &\textbf{ 77.26 ± 0.17}   \\
    \midrule
    \multirow{4}[2]{*}{ogbn-product} 
          & \multirow{2}[0]{*}{GAMLP} & \textit{val} & ---  & 93.12 ± 0.03 & 93.99 ± 0.04 & 93.21 ± 0.05 & 93.19 ± 0.05 & 94.19 ± 0.01 & 94.15 ± 0.03  \\
          &       & \textit{test} & --- & 83.54 ± 0.09 & 83.16 ± 0.07 & 84.51 ± 0.05 & 82.88 ± 0.05 & \underline{85.09 ± 0.21} &  \textbf{86.45 ± 0.12}  \\

          & \multirow{2}[0]{*}{SAGN} & \textit{val} & ---  & 93.02 ± 0.04 & 93.64 ± 0.05 & 93.25 ± 0.04 & 93.84 ± 0.06  & - & 93.99 ± 0.02  \\
          &       & \textit{test} & --- & 84.35 ± 0.09 & \underline{86.67 ± 0.09} & 85.37 ± 0.08 & 84.81 ± 0.07 & - & \textbf{87.18 ± 0.04} \\ 
           	
    \midrule
    \multirow{2}[2]{*}{ogbn-protein} 
          & \multirow{2}[0]{*}{GAT} & \textit{val} & OOM  &  93.75 ± 0.19 & - & 95.16 ± 0.06 & 95.05 ± 0.12 & - & 95.27 ± 0.07 \\
          &       & \textit{test} & OOM &  88.09 ± 0.16 & - & \underline{88.94 ± 0.14} & 88.66 ± 0.08 & - &\textbf {89.42 ± 0.07} \\
    \bottomrule
    \end{tabular}}%
\end{table*}%

\subsection{Overall performance}

We report the overall performance of LD and baselines in Table \ref{tab:main_table}. Overall, LD outperforms all baselines on the three datasets with different GNN backbones.
As GLEM and GIANT mainly focus on the text-attributed graph, they do not provide the results on the ogbn-protein dataset, whose node attributes are protein sequences. 
The official implementation of GLEM on ogbn-products with SAGN performs much worse than the reported accuracy of 87.36\% on the test data. Therefore, we omit this result in Table \ref{tab:main_table}.

Due to the expensive costs of large NEs, graph sampling uses very small sampled subgraphs to avoid the out-of-memory issue. However, the small subgraphs sacrifice much useful topological information such that their performance is significantly lower than separate training methods.
Moreover, the large sampling error makes their performance lower than $\mathbf{X}_{\text {PNE}}$, which directly trains GNNs via graph sampling with large subgraph sizes given the fixed embeddings inferred by pre-trained NEs.

By noticing that $\mathbf{X}_{\text {PNE}}$ outperforms $\mathbf{X}_{\text {FNE}}$ except for ogbn-arxiv, directly fine-tuning with the true labels may limit the final performance. The phenomenon verifies the motivating example in Table \ref{tab:example} and Assumption \ref{ass:attributes_labels}, i.e., node labels depend on not only node attributes but also graph structures.
Specifically, in ogbn-product, many stores may use product descriptions to attract customers rather than introduce the products, and thus difficult to reflect the product categories.
In ogbn-protein, the edges represent biologically meaningful associations between proteins \cite{ogb} and are important for protein functions.
In ogbn-arxiv, although the abstract and introduction of a scientific paper almost reflect the subject areas of the paper, $\mathbf{X}_{\text {LD}}$ still outperforms $\mathbf{X}_{\text {FNE}}$.
The pseudo-labels introduced by GLEM are difficult to overcome this issue, as the pseudo-labels aim to approximate the true labels. Finally, GIANT aims to introduce graph structure information to node features by self-supervised learning, while it may incorporate much task-irrelevant information due to the neglect of node labels.

\subsection{Runtime}

Another appealing feature of LD is that it is significantly faster than GLEM---the state-of-the-art separate training method---as shown in Table \ref{tab:runtime}. 
First, LD significantly outperforms FNE---which fine-tuned NEs with the true labels---in terms of accuracy, although LD is slightly slower than FNE.
To improve the accuracy of FNE, GLEM integrates pseudos labels with the true labels by an iterative knowledge-distilling process, which requires fine-tuning pre-trained NEs and inferring the whole dataset many times, leading to expensive costs.
In contrast, LD only fine-tunes pre-trained NEs once and infers node features twice as shown in Algorithm \ref{alg:ld}, leading to significantly cheap costs.
Specifically, on the ogbn-product dataset, the accuracy of LD is 1.36\% higher than GLEM, and the runtime is less than one-third of GLEM.

\begin{table}[t]
  \centering
  \caption{The runtime of different separate training methods. Bold font indicates the best result and underlining indicates the second best result. We evaluate the max batch size of LM (max bsz.) and training time on a single 24GB GPU.}
  \label{tab:runtime}
\scalebox{1.0}{
\begin{tabular}{ccccc}
\toprule
\textbf{Datasets}  & \textbf{Metric} & \textbf{FNE} & \textbf{GLEM} & \textbf{LD}  \\   \midrule
\multirow{4}{*}{ogbn-arxiv}     & accuracy   & 75.52  & \underline{75.93}   & \textbf{76.22} \\
                          & max bsz.    & 12 & 12 & 12      \\
                          & time/epoch & \textbf{3150s}        & \underline{4131s}           & 4360s          \\
                          & time/total & \multicolumn{1}{c}{\textbf{2.52h}}       & \multicolumn{1}{c}{6.53h}         & \multicolumn{1}{c}{\underline{3.02h}}       \\    \midrule
\multirow{4}{*}{ogbn-product} & accuracy   & 82.88  & \underline{85.09}  & \textbf{86.45} \\
                          & max bsz.    & 12 & 12 & 12             \\
                          & time/epoch & \textbf{8880s}        & 13133s           & \underline{12871s}          \\
                          & time/total & \textbf{16.88h}      & 75.67h        & \underline{24.78h}      \\   \midrule
\multirow{4}{*}{ogbn-protein} & accuracy   & \underline{88.66}  & -  & \textbf{89.42}  \\
                          & max bsz.    & 4 & - & 4             \\
                          & time/epoch & \textbf{8931s}       &   -         & \underline{12696s}          \\
                          & time/total & \textbf{17.73h}     & -        & \underline{36.22h}      \\   \bottomrule

\end{tabular}}
\end{table}

\subsection{Ablation Study}

Compared with FNE, LD additionally uses pseudo labels on the validation and test data. Thus, we conduct the ablation study in Table \ref{tab:ablation} to show the effectiveness of the inverse labels based on the true labels and the pseudo labels.
Specifically, FNE trains NEs with the original label $\mathbf{Y}_{\mathcal{V}_{train}}$ on the training nodes $\mathcal{V}_{train} \subset \mathcal{V}$, while the inverse label of LD $\hat{\mathbf{Y}}$ are calculated on the whole node $\mathcal{V}$.
As shown in Table \ref{tab:datasets}, the ratio of validation and test nodes is more than 35\%. On the ogbn-product dataset, the number of validation and test nodes is about nine times as the number of training nodes.
Therefore, the pseudo labels and node attributes on vast validation and test nodes (evaluation nodes) may introduce useful information.
To verify that the improvement of LD is mainly due to the effectiveness of the inverse labels rather than the additional information on the evaluation nodes, we introduce $\mathbf{X}_{\text {WO/LD}}$, which fine-tunes pre-trained models with the true labels on the training data and the pseudo labels on the evaluation nodes.

\begin{figure*}[tbp]
	\centering

        \begin{subfigure}{0.33\linewidth}
		\centering
		\includegraphics[width=1.0\linewidth]{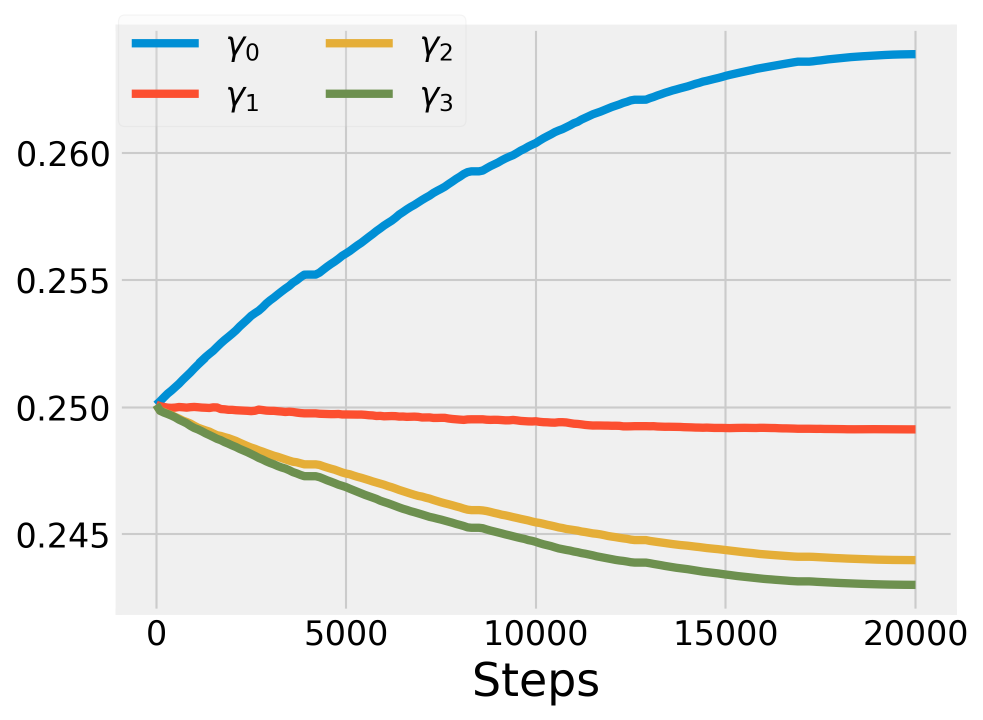}
		\caption{REVGAT \& ogbn-arxiv}
		\label{ogbn-arxiv weight}
	\end{subfigure}
        \begin{subfigure}{0.33\linewidth}
		\centering
		\includegraphics[width=1.0\linewidth]{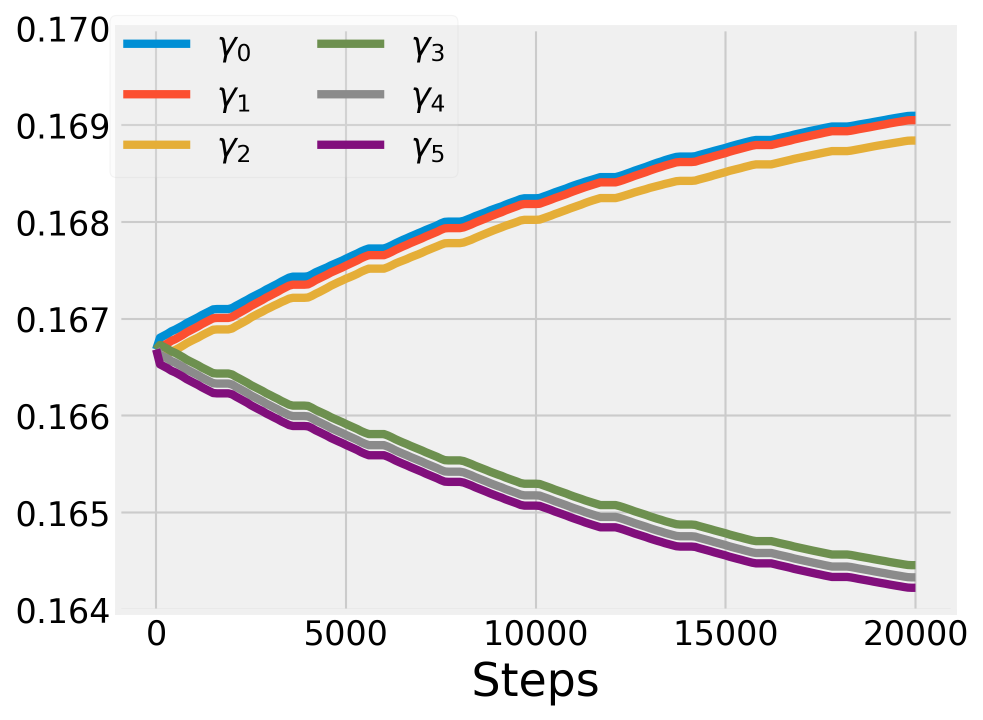}
		\caption{GAMLP \& ogbn-product }
		\label{products weight}
	\end{subfigure}
        \begin{subfigure}{0.33\linewidth}
		\centering
		\includegraphics[width=1.0\linewidth]{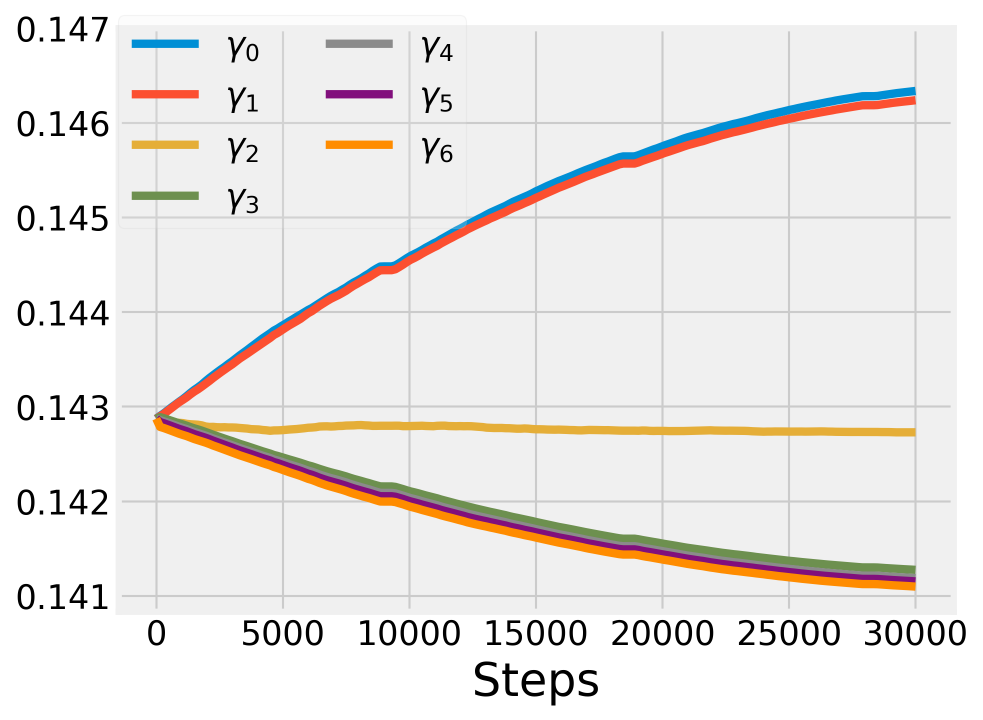}
		\caption{GAT \& ogbn-protein}
		\label{ogbn-protein weight}
	\end{subfigure}
        \caption{Training curves of weights $\gamma_i$.}


    \label{fig:weight}
\end{figure*}

\begin{figure*}[tbp]
	\centering

        \begin{subfigure}{0.33\linewidth}
		\centering
		\includegraphics[width=1.0\linewidth]{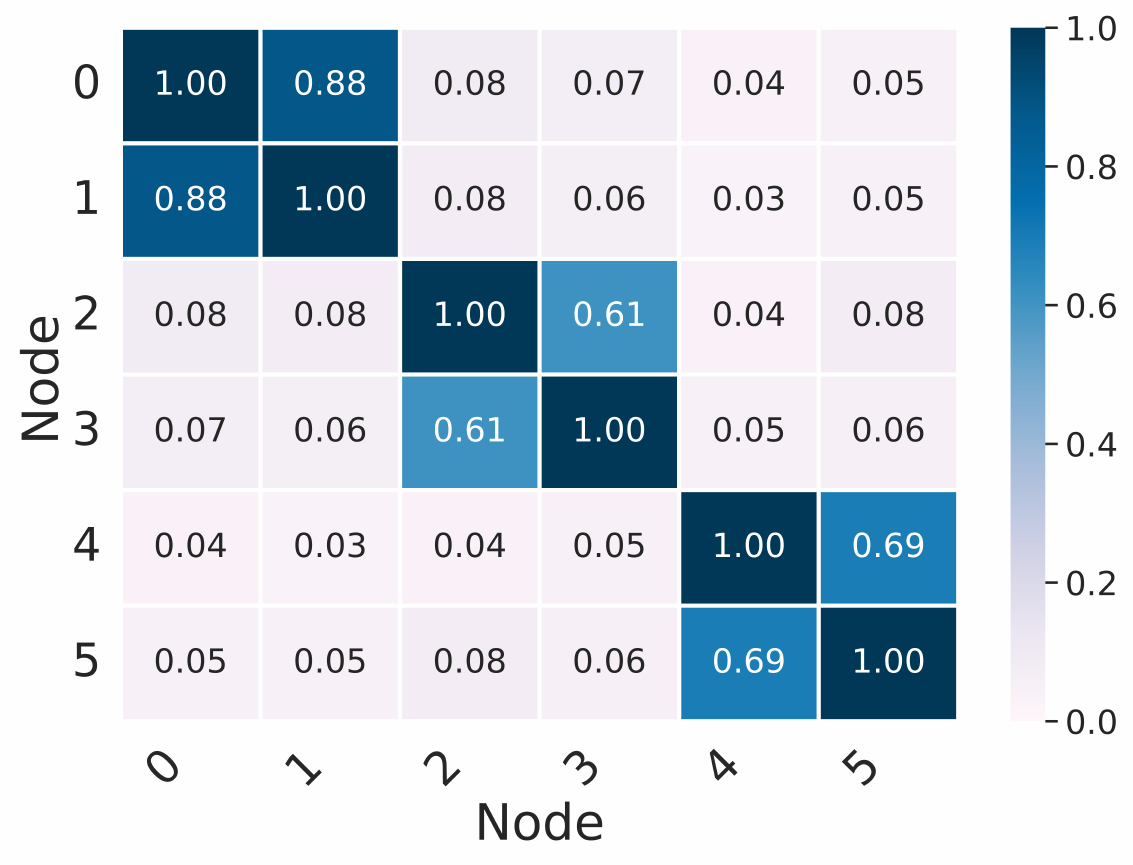}
		\caption{Text similarity}
		\label{fig:Text similarity}
	\end{subfigure}
        \begin{subfigure}{0.33\linewidth}
		\centering
		\includegraphics[width=1.0\linewidth]{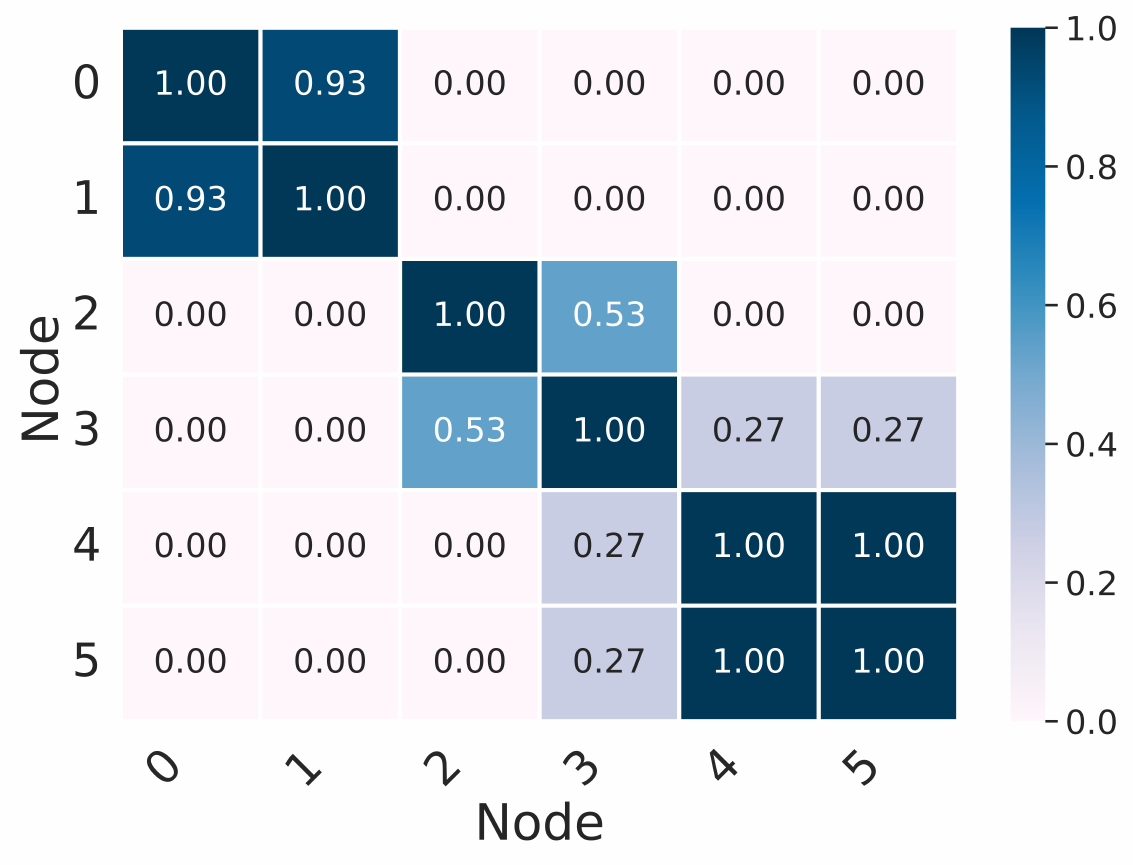}
		\caption{Label similarity of the inverse labels}
		\label{fig:Inverse label similarity}
	\end{subfigure}
        \begin{subfigure}{0.33\linewidth}
		\centering
		\includegraphics[width=1.0\linewidth]{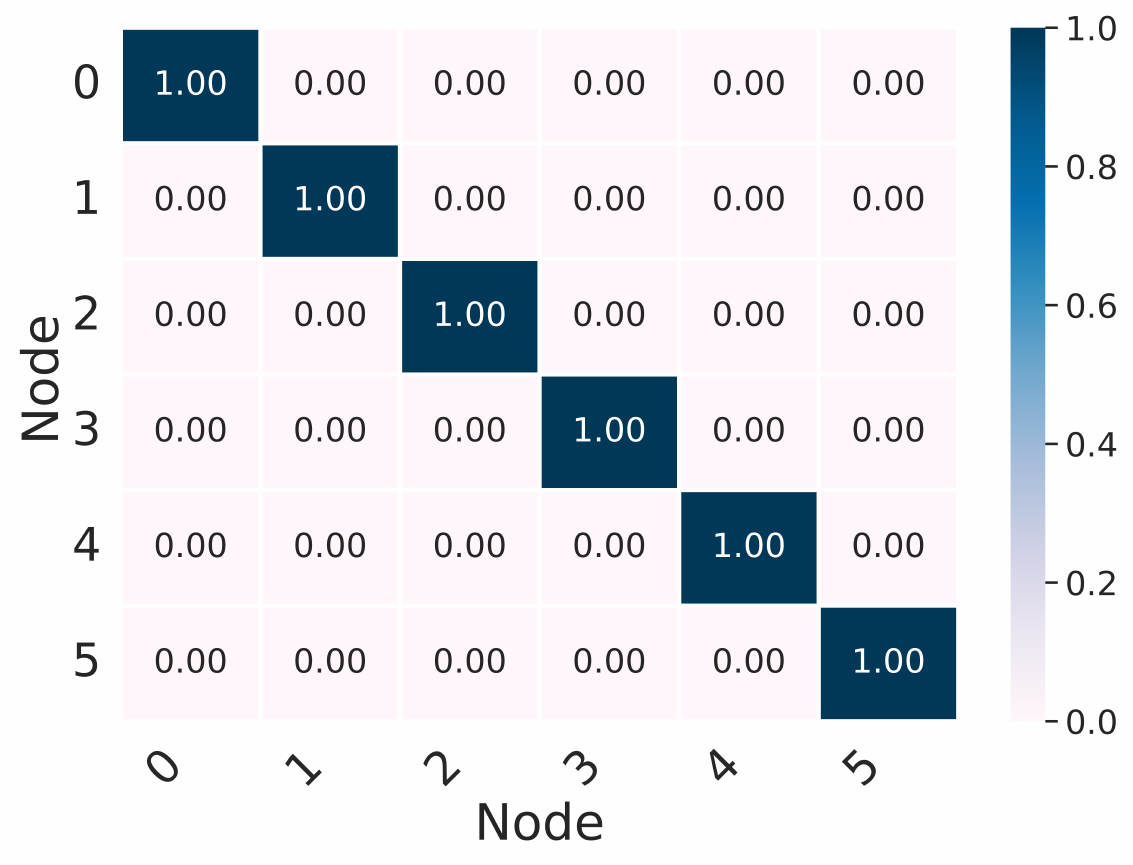}
		\caption{Label similarity of the true labels}
		\label{fig:Original label similarity}
	\end{subfigure}
        \caption{Text similarity and label similarity.}
    \label{fig:similarity}

\end{figure*}

We set $\alpha=0$ in Equation \eqref{eqn:implementation} to implement $\mathbf{X}_{\text {WO/LD}}$.
Specifically, $\mathbf{X}_{\text{WO/LD}}$ replaces the loss function of LD  in Equation \eqref{eqn:implementation} with the $\loss(\mathbf{H}, \mathbf{Y})$, where $\mathbf{Y}$ consists of the original labels on the training nodes and pseudo labels on the evaluation nodes. Then, we use the modified loss function in the separate training framework of LD, i.e., Line 10 in Algorithm \ref{alg:ld}.
Other hyper-parameters and training pipelines are the same as that of $\mathbf{X}_{\text {LD}}$.
We report the mean and standard deviation of the test accuracy across five runs with different random seeds.
in Table \ref{tab:ablation}.
The results demonstrate that the pseudo labels slightly improve the prediction performance against $\mathbf{X}_{\text {FNE}}$ based on the true labels, and $\mathbf{X}_{\text {LD}}$ significantly improves the prediction performance on all datasets against $\mathbf{X}_{\text {WO/LD}}$.
Specifically, on the ogbn-product dataset, $\mathbf{X}_{\text {WO/LD}}$ is approximately 0.95\% higher than $\mathbf{X}_{\text {FNE}}$, while $\mathbf{X}_{\text {LD}}$ is significantly 2.62\% higher than $\mathbf{X}_{\text {WO/LD}}$.

\begin{table}[t]
  \centering
  \caption{Results of ablation study.}
  \label{tab:ablation}
\scalebox{1.0}{
\begin{tabular}{ccccc}
    \toprule 
                       \textbf{{Datasets}} & \textbf{{GNNs}}    & $\mathbf{X}_{\text {FNE}}$   & $\mathbf{X}_{\text {WO/LD}}$  & $\mathbf{X}_{\text {LD}}$ \\ 
    \midrule
    \multirow{1}{*}{ogbn-arxiv} & \multirow{1}{*}{REVGAT}  &  75.52±0.08 & 75.35±0.12 & \textbf{77.26±0.17} \\ 
    \midrule
    \multirow{1}{*}{ogbn-product} & \multirow{1}{*}{GAMLP}  & 82.88±0.05 & 83.83±0.18 & \textbf{86.45±0.12} \\ 
    \midrule
    \multirow{1}{*}{ogbn-protein}  & \multirow{1}{*}{GAT}    & 88.66±0.08 & 89.15±0.12 & \textbf{89.42±0.07} \\ 
    \bottomrule
\end{tabular}}
\end{table}

\begin{figure*}[tbp]
	\centering

        \begin{subfigure}{0.33\linewidth}
		\centering
		\includegraphics[width=1.0\linewidth]{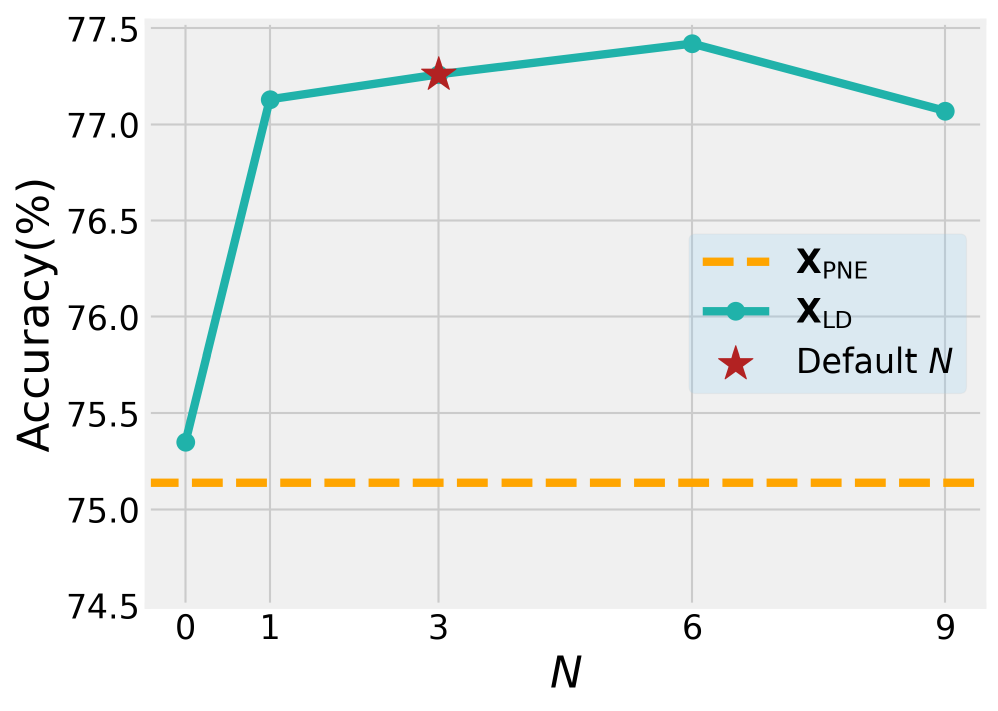}
		\caption{REVGAT \& ogbn-arxiv}
		\label{n-layer-arxiv}
	\end{subfigure}
        \begin{subfigure}{0.33\linewidth}
		\centering
		\includegraphics[width=1.0\linewidth]{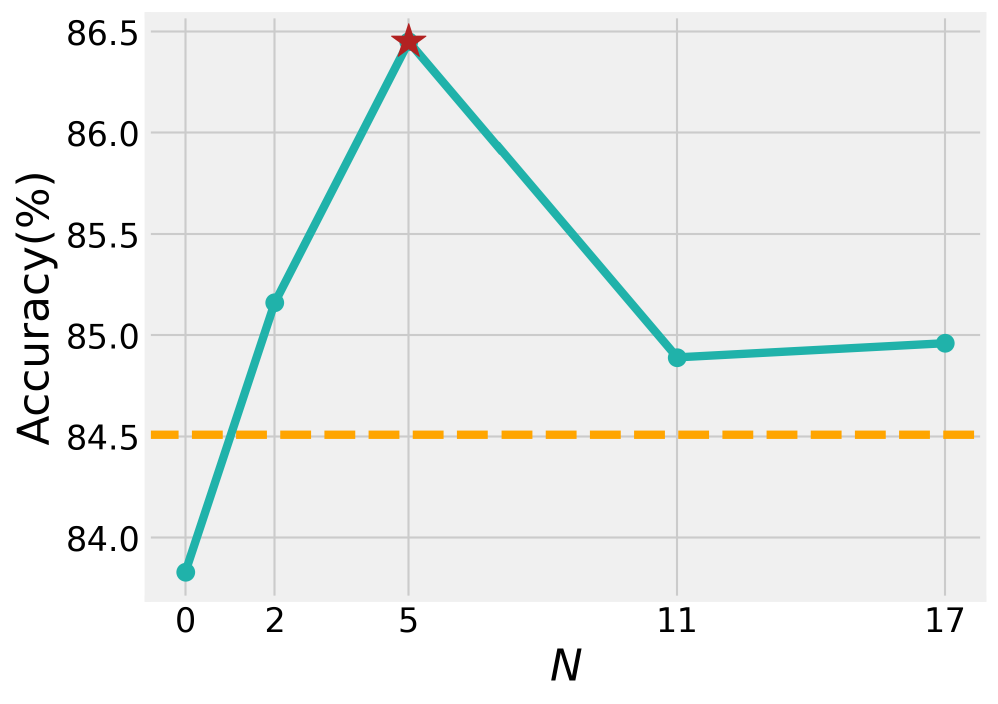}
		\caption{GAMLP \& ogbn-product }
		\label{n-layer-products}
	\end{subfigure}
        \begin{subfigure}{0.33\linewidth}
		\centering
		\includegraphics[width=1.0\linewidth]{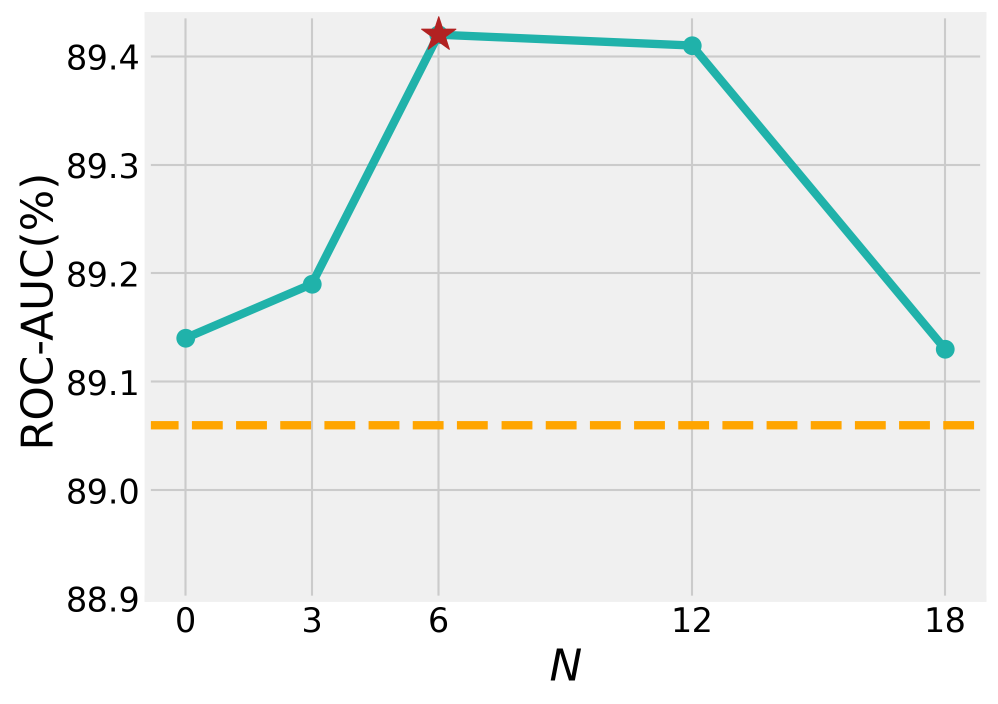}
		\caption{GAT \& ogbn-protein}
		\label{n-layer-proteins}
	\end{subfigure}
        \caption{The performance of node classification under varying values of label deconvolution layers $N$. The used GNN layers are 3, 5, and 6 on the ogbn-arxiv, ogbn-product, and ogbn-protein, respectively.}

    \label{fig:n-layer}
\end{figure*}

\begin{figure*}[tbp]
	\centering

        \begin{subfigure}{0.33\linewidth}
		\centering
		\includegraphics[width=1.0\linewidth]{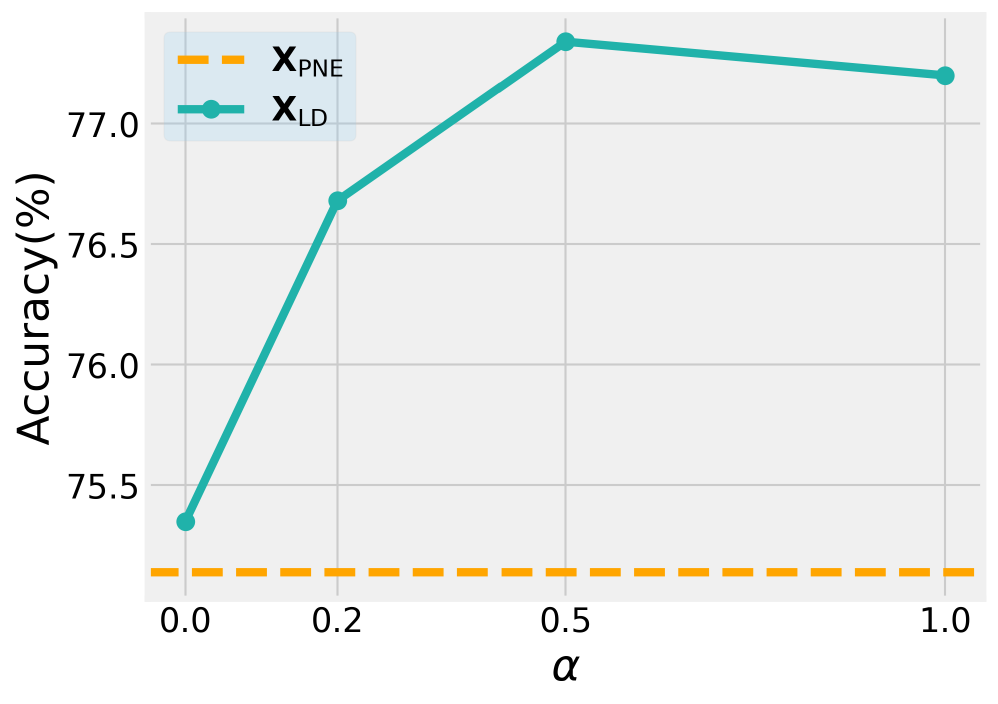}
		\caption{REVGAT \& ogbn-arxiv}
		\label{alpha_arxiv}
	\end{subfigure}
        \begin{subfigure}{0.33\linewidth}
		\centering
		\includegraphics[width=1.0\linewidth]{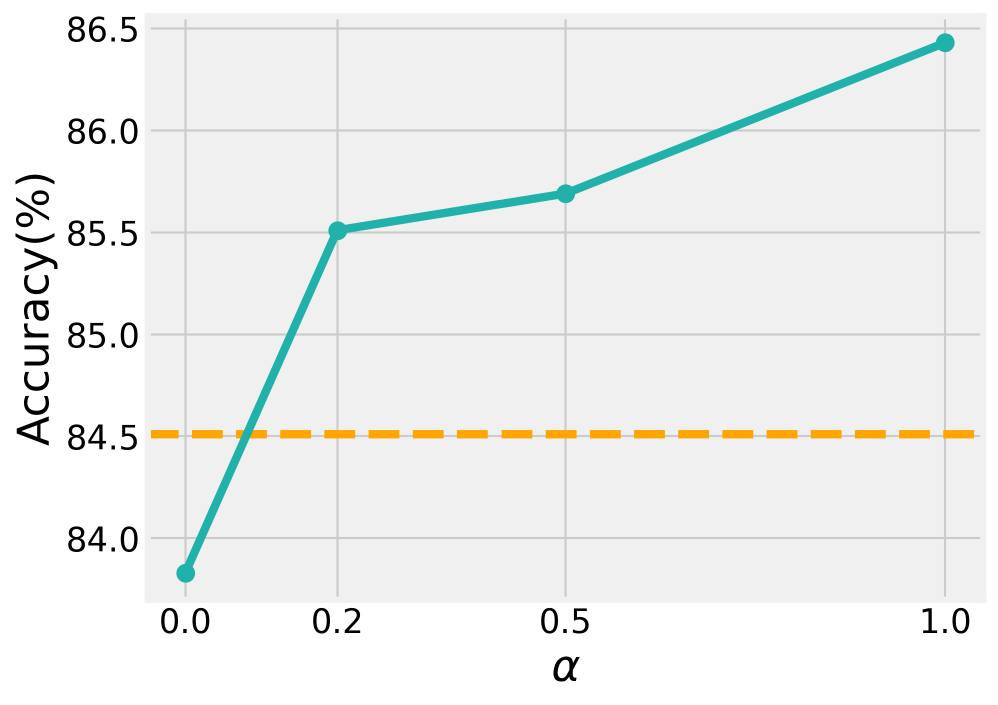}
		\caption{GAMLP \& ogbn-product }
		\label{alpha_products}
	\end{subfigure}
        \begin{subfigure}{0.33\linewidth}
		\centering
		\includegraphics[width=1.0\linewidth]{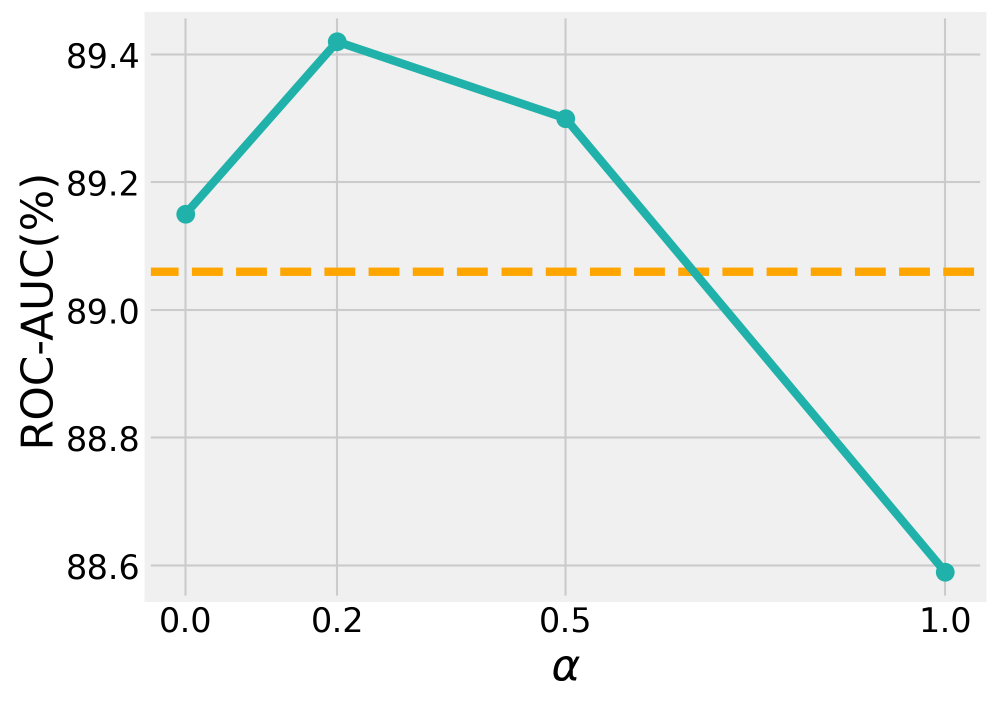}
		\caption{GAT \& ogbn-protein}
		\label{alpha_proteins}
	\end{subfigure}
        \caption{The performance of node classification under varying values of $\alpha$. }

    \label{fig:alpha}
\end{figure*}

\subsection{Analysis of Inverse Labels}

In this section, we empirically compare the difference between the inverse labels and the true labels.

From Equation \eqref{eqn:inv_label}, the inverse labels $\mat{Y}^{(\gamma)}$ are a weighted sum of the true labels and their $i$-hops neighbors' labels. We thus plot the weights $\gamma_i$ during the fine-tuning process in Fig. \ref{fig:weight}.
The inverse labels tend to be the true labels or the $i$-hops neighbors' labels with small $i$.
This is because the true labels and the $i$-hops neighbors' labels with small $i$ are still the most important supervision signals among all hops' labels for node classification.
Moreover, the $i$-hops neighbors' labels with large $i$ suffer from the over-smoothing issue, i.e., the $i$-hops neighbors' labels may tend to be indistinguishable as $i$ increases \cite{over-smoothing}.
Notably, the weight $\gamma_i$ does not converge to a trivial solution where
$\gamma_0 \rightarrow 1, \gamma_i \rightarrow 0 , i \geq 1$.
This implies that other hops' labels are helpful to node feature extraction and label deconvolution effectively alleviates the label noise from the graph structures.

In order to further compare the inverse labels and the true labels, we show the similarity of the node attributes and the similarity of labels in Fig. \ref{fig:similarity}.
We randomly selected several pairs of nodes from the ogbn-arxiv dataset with highly similar texts (i.e., the text similarity is more than 0.6) but different labels (nodes 0 and 1, 2 and 3, 4 and 5).
Following \cite{giant}, we use the TF-IDF algorithm and the cosine similarity to evaluate the text similarity and the label similarity, respectively.
Fig. \ref{fig:Text similarity} each pair of nodes has high similarity, but the nodes in different pairs have low similarity as we select them independently.
Fig.s \ref{fig:Inverse label similarity} and \ref{fig:Original label similarity} show that the inverse labels provide similar supervision signals for the nodes with similar texts and different supervision signals for the nodes with different texts. However, the true labels fail.
Therefore, the inverse labels preserve the attribute semantics by reducing label noise in the graph structures.

\subsection{Hyper-parameter Sensitivity}

The additional hyper-parameters of LD contain $N$ and $\alpha$ defined in Equations \eqref{eqn:inv_label} and \eqref{eqn:implementation} respectively.
We provide the performance curves on different datasets for $N$ and $\alpha$.
When exploring the effect of a hyper-parameter, we fix the other as their best values.

As shown in Fig. \ref{fig:n-layer}, LD achieves the best performance when $N$ is equal to the number of convolution layers of GNNs on the ogbn-product and ogbn-protein datasets.
On the ogbn-arxiv dataset, the performance at $N=6$ is slightly better than that at $N=3$, which is the number of the convolution layers of GNNs.
Indeed, $N$ highly impacts the expressiveness of label deconvolution.
The inverse labels under large $N$ increase the expressiveness while may suffer from overfitting.

As shown in Fig. \ref{fig:alpha}, LD achieves the best performance with large $\alpha$ on the ogbn-arxiv and ogbn-product datasets.
Moreover, the performance of LD with $\alpha=1$ is similar to the best performance on these datasets.
LD with the large $\alpha$ indicates that the inverse labels dominate the training behavior of NEs.
However, on the ogbn-arxiv dataset, LD achieves the best performance with a small $\alpha=0.2$.
As the average degree of ogbn-protein is significantly larger than ogbn-arxiv and ogbn-product (see Table \ref{tab:datasets}), the $i$-hop labels $\mat{K}_i = \hat{\mat{A}}^i \mat{Y}$ under the large degree are easily become indistinguishable, leading to the over-smoothing issue \cite{over-smoothing}.
Despite the over-smoothing issue, LD with $\alpha=0.2$ still outperforms LD with $\alpha=0.0$, which only uses the true labels.
Overall, the inverse labels of LD significantly improve the prediction performance.

\subsection{Scaling Effect for Node Encoders}

Many fields have observed a scaling law in many large models, where performance on downstream tasks improves as more parameters are used. To investigate whether The large-scale node encoders of LD exhibit a similar property, we evaluate the performance of LD in terms of different NE sizes.
Specifically, on the ogbn-arxiv dataset, we integrate RevGAT with $\text{DeBERTa-base}_{\text{100M}}$, $\text{DeBERTa-large}_{\text{350M}}$, and $\text{DeBERTa-xlarge}_{\text{750M}}$. On the ogbn-proetin dataset, we integrate GAT with $\text{ESM2}_{\text{8M}}$, $\text{ESM2}_{\text{35M}}$, $\text{ESM2}_{\text{150M}}$, and $\text{ESM2}_{\text{650M}}$.
We evaluate our proposed LD, $\mathbf{X}_{\text{PNE}}$ and $\mathbf{X}_{\text{FNE}}$ introduced in Section \ref{sec:baseline}.
$\mathbf{X}_{\text {PNE}}$ directly trains a GNN with the embeddings inferred by pre-trained NEs and $\mathbf{X}_{\text{FNE}}$ trains a GNN with the LM embeddings inferred by fine-tuned NEs with the true node labels.

Fig. \ref{fig:scaling} shows the scaling effect for large node encoders (NEs). 
The performance of most models improves as more parameters are used in large NEs.
Notably, as the number of parameters increases, the performance improvement of $\mathbf{X}_{\text {LD}}$ is more significant than $\mathbf{X}_{\text {PNE}}$ and $\mathbf{X}_{\text {FNE}}$, showing the appealing scalability.
Therefore, although more parameters of NEs benefit many models, the learning bias of existing methods diminishes their growth rates.

\begin{figure*}[tbp]
	\centering

    \begin{subfigure}{0.48\linewidth}
		\centering
		\includegraphics[width=1.0\linewidth]{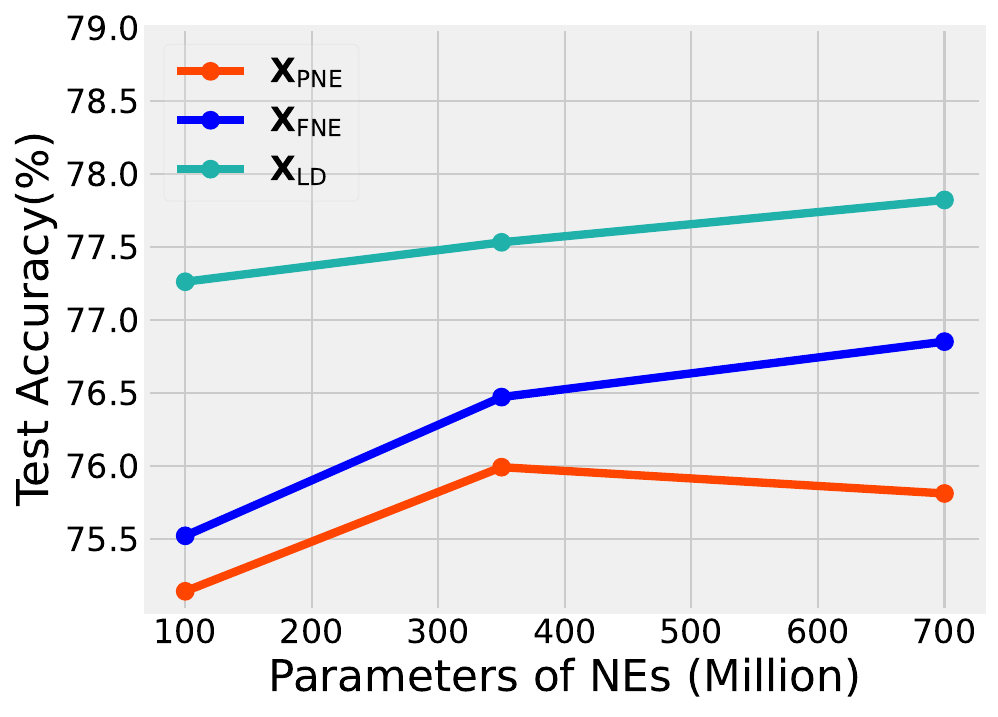}
		\caption{Deberta \& ogbn-arxiv}
		\label{fig:scaling_arxiv}
	\end{subfigure}
    \begin{subfigure}{0.48\linewidth}
		\centering
		\includegraphics[width=1.0\linewidth]{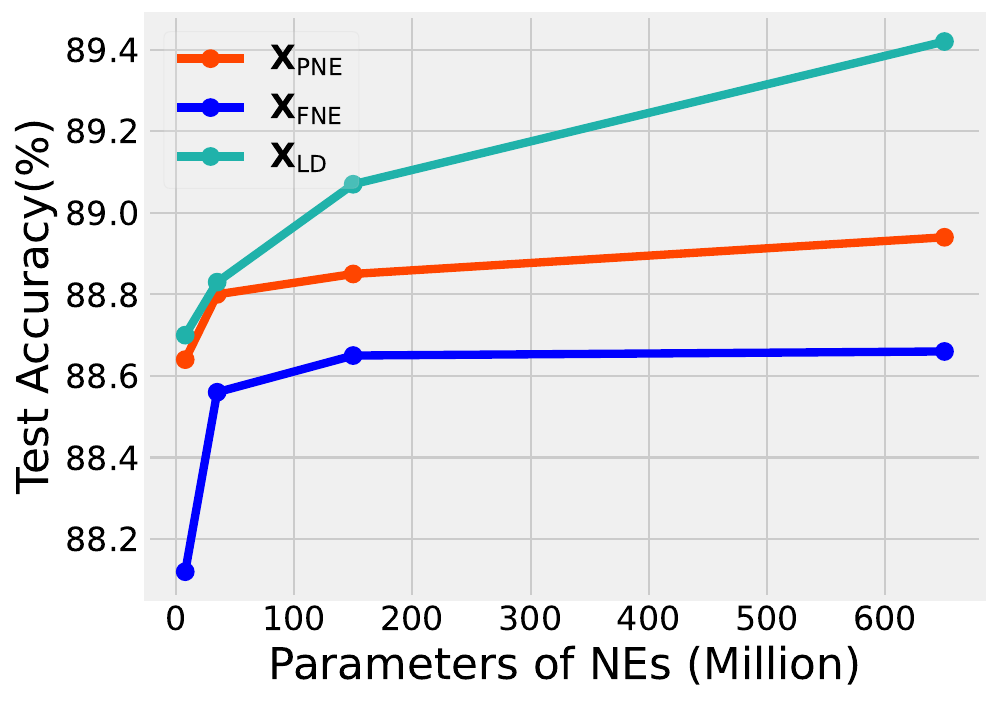}
		\caption{ESM2 \& ogbn-protein}
		\label{fig:scaling_proteins}
	\end{subfigure}
        \caption{The accuracy under different parameters of large NEs.}

    \label{fig:scaling}
\end{figure*}

\subsection{Inductive Node Classification}

Besides the transductive node classification, inductive settings---where test nodes are unseen during the training phase---are also essential in practice.
For example, the papers next year (new test nodes) are unseen in the citation network.

\udfsection{Baselines.} LD applies to the inductive setting. However, the baseline $\mathbf{X}_{\text{GLEM}}$ \cite{glem} is the same as the $\mathbf{X}_{\text{FNE}}$ (the LM embeddings inferred by fine-tuned NEs with the true node labels), as the pseudo-label distillation for test nodes is unavailable in the inductive setting. We also compare LD with the embeddings inferred by pre-trained NEs (denoted as $\mathbf{X}_{\text{PNE}}$).

\udfsection{Datasets.} As many inductive benchmarks do not contain node attributes such as textual content, we modify the ogbn-arxiv and ogbn-protein datasets into the inductive setting following \cite{graphsage, eerm}.
We remove all validation and test nodes from the whole graph to generate the training graph. The evaluation graph is still the whole graph.
In the ogbn-arxiv dataset, the training, validation, and test nodes are the papers published until 2017, those published in 2018, and those published since 2019 respectively.
Under the inductive setting, the model learns from the historical papers and then predicts the properties of the unseen new papers.
In the ogbn-protein dataset, the training, validation, and test protein nodes correspond to different species which the proteins come from.
Under the inductive setting, the model learns from a given species and then predicts the properties of the proteins from the unseen species.

\udfsection{Performance.} We report the inductive performance of LD and baselines in Table \ref{tab:inductive}. Overall, LD outperforms all baselines on the ogbn-arxiv and ogbn-protein datasets by a large margin. 
The performance of $\mathbf{X}_{\text {GLEM}}$---one of the state-of-the-art separate training methods---is similar to that of $\mathbf{X}_{\text {PNE}}$, as its key contribution is a pseudo-label distillation framework, where the inductive setting does not contain the pseudo labels of the evaluation nodes during the training phase.
Thus, compared with other separate training methods, another appealing feature of LD is that LD is able to apply to the inductive setting.

\begin{table}[]
  \centering
  \setlength{\belowcaptionskip}{5pt}
  \caption{Performance of inductive node classification. We bold the best result.} 
  \label{tab:inductive}

  \scalebox{1.10}{
    \begin{tabular}{c|cl|cccc}
        \toprule
        \multicolumn{1}{c}{\textbf{{Datasets}}} & \multicolumn{2}{c}{\textbf{{GNNs}}} & $\mathbf{X}_{\text {PNE}}$ &  $\mathbf{X}_{\text {GLEM}}$ & $\mathbf{X}_{\text {LD}}$             \\
        \midrule
        \multirow{2}[1]{*}{ogbn-arxiv}  & \multirow{2}[1]{*}{REVGAT} & \textit{val}   &  75.52 & 76.25 & 77.02  \\
              &       & \textit{test}  & 75.23 & 75.49 &\textbf{ 76.83}   \\
        \midrule
        \multirow{2}[1]{*}{ogbn-protein}  & \multirow{2}[0]{*}{GAT} & \textit{val}   & 95.19  & 95.37 & 95.53 \\
              &       & \textit{test}  & 88.75 & 88.72 &\textbf {89.12} \\
        \bottomrule
    \end{tabular}}%
\end{table}%

\section{related work}

%

\subsection{Training Methods for large NEs and GNNs}

An ideal idea is to jointly train large NEs and GNNs to simultaneously optimize their parameters.
However, due to the severe scalability issue of GNNs \cite{graphsage, lmc, graphsaint} and the excessive model complexity of pre-trained models, the joint training often runs out of the GPU memory.
To avoid the out-of-memory issue, some methods \cite{textgnn, text_level_gnn} restrict the feature convolutions to very small sampled subgraphs, severely hurting topological structures in graphs.

To encode graph structures and node attributes, many studies propose to separately train NEs and GNNs.
For example, some methods \cite{giant, linkbert} first propose scalable self-supervised learning to train NEs---which encode node attributes into node features---and then integrate node features with graph structures by GNNs. 
The proposed self-supervised tasks aim to incorporate graph topological information into node features.
However, it is unclear whether the additional graph topological information is helpful to GNNs, which encode similar information.
Besides, GLEM \cite{glem} proposes an iterative pseudo-label distillation framework, which iteratively trains NEs and GNNs.
The pseudo-label distillation framework aims to improve the quality of the pseudo labels, which is orthogonal to our proposed LD.

Notably, many existing training methods mainly focus on the text-attributed graph (TAG) \cite{graphformer} whose node attributes are texts. An appealing feature of LD is that it applies to general node attributes including protein sequences.

\subsection{Scalable Graph Neural Networks}

To avoid full-batch inference and training on the large-scale graphs, the graph sampling techniques run GNNs on sample small subgraphs.
We follow \cite{dlg} to categorize these methods into node, layer, and subgraph-wise sampling.
Node-wise sampling methods \cite{graphsage, vrgcn, graphfm} recursively sample a fixed number of neighbors to construct different subgraphs for different GNN layers.
Although they decrease the bases in the exponentially increasing sizes of the subgraph with the number of GNN layers, they still suffer from the neighbor explosion issue.
To tackle this issue, layer-wise sampling methods \cite{fastgcn, ladies, adapt} use importance sampling to recursively sample different subgraphs with the same size for different GNN layers.
However, they suffer from sparse connections between sampled nodes, leading unstable training process.
To improve stability and accelerate convergence, subgraph-wise sampling methods \cite{cluster_gcn, graphsaint, gas, lmc} sample the same subgraph with more connections for different GNN layers.
Despite their success, reducing the subgraph sizes significantly sacrifices the model performance.
In practice, they are usually applicable to shallow GNNs with large subgraph sizes rather than deep pre-trained node encoders and GNNs with very small subgraph sizes \cite{glem}.

Another line of related work is to design scalable GNNs by moving feature convolutions into the pre-processing process \cite{sgc, ssgc, appnp, gamlp, linear_gnn}.
They first perform feature convolutions for the initial node features and save them as additional node features in the pre-processing process.
Then, we design various node classifiers to learn node representations without message passing.
JacobiConv \cite{linear_gnn} shows that the expressiveness of the pre-convolution architectures is the same as 1-WL under some mild assumptions for node classification.
However, the pre-processing scheme is not applicable to the learned node features by node encoders.
Label deconvolution is a novel and effective pre-processing scheme to extract node features and hence it fills this gap.

\section{conclusion}

In this paper, we propose an efficient and effective label regularization technique, namely Label Deconvolution (LD), to alleviate the learning bias relative to the joint training.
we show that LD converges to the optimal objective function values by the joint training under some mild assumptions.
 Extensive experiments on Open Graph Benchmark datasets demonstrate that LD outperforms state-of-the-art training methods in terms of prediction performance and efficiency by a large margin.

\nocite{ngnn}

\ifCLASSOPTIONcompsoc
  \section*{Acknowledgments}
\else
  \section*{Acknowledgment}
\fi

The authors would like to thank the associate editor and all the anonymous reviewers for their insightful comments. 
This work was supported in part by National Key R\&D Program of China under contract 2022ZD0119801, National Nature Science Foundations of China grants U23A20388, 62021001, U19B2026, and U19B2044. This work was supported in part by the Alibaba Group as well.


%

\ifCLASSOPTIONcompsoc
  \section*{\modifyok{}{Acknowledgments}}
\else
  \section*{Acknowledgment}
\fi

The authors would like to thank all the anonymous reviewers for their insightful comments.
This work is supported by National Key R\&D Program of China under contract 2022ZD0119801.
This work is also supported in part by National Nature Science Foundations of China grants U19B2026, U19B2044, 61836011, 62021001, and 61836006.

\bibliographystyle{IEEEtran}
\bibliography{main}

\begin{thebibliography}{10}
\providecommand{\url}[1]{#1}
\csname url@samestyle\endcsname
\providecommand{\newblock}{\relax}
\providecommand{\bibinfo}[2]{#2}
\providecommand{\BIBentrySTDinterwordspacing}{\spaceskip=0pt\relax}
\providecommand{\BIBentryALTinterwordstretchfactor}{4}
\providecommand{\BIBentryALTinterwordspacing}{\spaceskip=\fontdimen2\font plus
\BIBentryALTinterwordstretchfactor\fontdimen3\font minus \fontdimen4\font\relax}
\providecommand{\BIBforeignlanguage}[2]{{%
\expandafter\ifx\csname l@#1\endcsname\relax
\typeout{** WARNING: IEEEtran.bst: No hyphenation pattern has been}%
\typeout{** loaded for the language `#1'. Using the pattern for}%
\typeout{** the default language instead.}%
\else
\language=\csname l@#1\endcsname
\fi
#2}}
\providecommand{\BIBdecl}{\relax}
\BIBdecl

\bibitem{mag}
\BIBentryALTinterwordspacing
K.~Wang, Z.~Shen, C.~Huang, C.-H. Wu, Y.~Dong, and A.~Kanakia, ``{Microsoft Academic Graph: When experts are not enough},'' \emph{Quantitative Science Studies}, vol.~1, no.~1, pp. 396--413, 02 2020. [Online]. Available: \url{https://doi.org/10.1162/qss\_a\_00021}
\BIBentrySTDinterwordspacing

\bibitem{aminer}
J.~Tang, J.~Zhang, L.~Yao, J.~Li, L.~Zhang, and Z.~Su, ``Arnetminer: Extraction and mining of academic social networks,'' in \emph{KDD'08}, 2008, pp. 990--998.

\bibitem{lpa}
\BIBentryALTinterwordspacing
H.~Wang and J.~Leskovec, ``Combining graph convolutional neural networks and label propagation,'' \emph{ACM Trans. Inf. Syst.}, vol.~40, no.~4, nov 2021. [Online]. Available: \url{https://doi.org/10.1145/3490478}
\BIBentrySTDinterwordspacing

\bibitem{amazon}
J.~Yang and J.~Leskovec, ``Defining and evaluating network communities based on ground-truth,'' \emph{Knowledge and Information Systems}, vol.~42, no.~1, pp. 181--213, 2015.

\bibitem{cluster_gcn}
W.-L. Chiang, X.~Liu, S.~Si, Y.~Li, S.~Bengio, and C.-J. Hsieh, ``Cluster-gcn: An efficient algorithm for training deep and large graph convolutional networks,'' in \emph{Proceedings of the 25th ACM SIGKDD International Conference on Knowledge Discovery \& Data Mining}, 2019, pp. 257--266.

\bibitem{string}
D.~Szklarczyk, R.~Kirsch, M.~Koutrouli, K.~Nastou, F.~Mehryary, R.~Hachilif, A.~L. Gable, T.~Fang, N.~T. Doncheva, S.~Pyysalo, P.~Bork, L.~J. Jensen, and C.~von Mering, ``{{T}he {S}{T}{R}{I}{N}{G} database in 2023: protein-protein association networks and functional enrichment analyses for any sequenced genome of interest},'' \emph{Nucleic Acids Res}, vol.~51, no.~D1, pp. D638--D646, Jan 2023.

\bibitem{gor}
\BIBentryALTinterwordspacing
T.~G.~O. Consortium, ``The gene ontology resource: 20 years and still going strong,'' \emph{Nucleic Acids Research}, vol.~47, pp. D330 -- D338, 2018. [Online]. Available: \url{https://api.semanticscholar.org/CorpusID:53222305}
\BIBentrySTDinterwordspacing

\bibitem{giant}
\BIBentryALTinterwordspacing
E.~Chien, W.-C. Chang, C.-J. Hsieh, H.-F. Yu, J.~Zhang, O.~Milenkovic, and I.~S. Dhillon, ``Node feature extraction by self-supervised multi-scale neighborhood prediction,'' in \emph{International Conference on Learning Representations}, 2022. [Online]. Available: \url{https://openreview.net/forum?id=KJggliHbs8}
\BIBentrySTDinterwordspacing

\bibitem{glem}
\BIBentryALTinterwordspacing
J.~Zhao, M.~Qu, C.~Li, H.~Yan, Q.~Liu, R.~Li, X.~Xie, and J.~Tang, ``Learning on large-scale text-attributed graphs via variational inference,'' in \emph{The Eleventh International Conference on Learning Representations}, 2023. [Online]. Available: \url{https://openreview.net/forum?id=q0nmYciuuZN}
\BIBentrySTDinterwordspacing

\bibitem{textgnn}
\BIBentryALTinterwordspacing
J.~Zhu, Y.~Cui, Y.~Liu, H.~Sun, X.~Li, M.~Pelger, T.~Yang, L.~Zhang, R.~Zhang, and H.~Zhao, ``Textgnn: Improving text encoder via graph neural network in sponsored search,'' in \emph{Proceedings of the Web Conference 2021}, ser. WWW '21.\hskip 1em plus 0.5em minus 0.4em\relax New York, NY, USA: Association for Computing Machinery, 2021, p. 2848–2857. [Online]. Available: \url{https://doi.org/10.1145/3442381.3449842}
\BIBentrySTDinterwordspacing

\bibitem{text_level_gnn}
\BIBentryALTinterwordspacing
L.~Huang, D.~Ma, S.~Li, X.~Zhang, and H.~Wang, ``Text level graph neural network for text classification,'' in \emph{Proceedings of the 2019 Conference on Empirical Methods in Natural Language Processing and the 9th International Joint Conference on Natural Language Processing (EMNLP-IJCNLP)}.\hskip 1em plus 0.5em minus 0.4em\relax Hong Kong, China: Association for Computational Linguistics, Nov. 2019, pp. 3444--3450. [Online]. Available: \url{https://aclanthology.org/D19-1345}
\BIBentrySTDinterwordspacing

\bibitem{mpnn}
\BIBentryALTinterwordspacing
J.~Gilmer, S.~S. Schoenholz, P.~F. Riley, O.~Vinyals, and G.~E. Dahl, ``Neural message passing for quantum chemistry,'' in \emph{Proceedings of the 34th International Conference on Machine Learning}, ser. Proceedings of Machine Learning Research, D.~Precup and Y.~W. Teh, Eds., vol.~70.\hskip 1em plus 0.5em minus 0.4em\relax PMLR, 06--11 Aug 2017, pp. 1263--1272. [Online]. Available: \url{https://proceedings.mlr.press/v70/gilmer17a.html}
\BIBentrySTDinterwordspacing

\bibitem{grl}
W.~L. Hamilton, ``Graph representation learning,'' \emph{Synthesis Lectures on Artificial Intelligence and Machine Learning}, vol.~14, no.~3, pp. 1--159, 2020.

\bibitem{ogb}
W.~Hu, M.~Fey, M.~Zitnik, Y.~Dong, H.~Ren, B.~Liu, M.~Catasta, and J.~Leskovec, ``Open graph benchmark: Datasets for machine learning on graphs,'' in \emph{Advances in Neural Information Processing Systems}, 2020, pp. 22\,118--22\,133.

\bibitem{graphsage}
W.~Hamilton, Z.~Ying, and J.~Leskovec, ``Inductive representation learning on large graphs,'' in \emph{Advances in Neural Information Processing Systems}, 2017, p. 1025–1035.

\bibitem{sgc}
\BIBentryALTinterwordspacing
F.~Wu, A.~Souza, T.~Zhang, C.~Fifty, T.~Yu, and K.~Weinberger, ``Simplifying graph convolutional networks,'' in \emph{Proceedings of the 36th International Conference on Machine Learning}, ser. Proceedings of Machine Learning Research, K.~Chaudhuri and R.~Salakhutdinov, Eds., vol.~97.\hskip 1em plus 0.5em minus 0.4em\relax PMLR, 09--15 Jun 2019, pp. 6861--6871. [Online]. Available: \url{https://proceedings.mlr.press/v97/wu19e.html}
\BIBentrySTDinterwordspacing

\bibitem{gamlp}
\BIBentryALTinterwordspacing
W.~Zhang, Z.~Yin, Z.~Sheng, Y.~Li, W.~Ouyang, X.~Li, Y.~Tao, Z.~Yang, and B.~Cui, ``Graph attention multi-layer perceptron,'' in \emph{Proceedings of the 28th ACM SIGKDD Conference on Knowledge Discovery and Data Mining}, ser. KDD '22.\hskip 1em plus 0.5em minus 0.4em\relax New York, NY, USA: Association for Computing Machinery, 2022, p. 4560–4570. [Online]. Available: \url{https://doi.org/10.1145/3534678.3539121}
\BIBentrySTDinterwordspacing

\bibitem{ssgc}
\BIBentryALTinterwordspacing
H.~Zhu and P.~Koniusz, ``Simple spectral graph convolution,'' in \emph{International Conference on Learning Representations}, 2021. [Online]. Available: \url{https://openreview.net/forum?id=CYO5T-YjWZV}
\BIBentrySTDinterwordspacing

\bibitem{esm2}
\BIBentryALTinterwordspacing
Z.~Lin, H.~Akin, R.~Rao, B.~Hie, Z.~Zhu, W.~Lu, N.~Smetanin, R.~Verkuil, O.~Kabeli, Y.~Shmueli, A.~dos Santos~Costa, M.~Fazel-Zarandi, T.~Sercu, S.~Candido, and A.~Rives, ``Evolutionary-scale prediction of atomic-level protein structure with a language model,'' \emph{Science}, vol. 379, no. 6637, pp. 1123--1130, 2023. [Online]. Available: \url{https://www.science.org/doi/abs/10.1126/science.ade2574}
\BIBentrySTDinterwordspacing

\bibitem{bert}
\BIBentryALTinterwordspacing
J.~Devlin, M.-W. Chang, K.~Lee, and K.~Toutanova, ``{BERT}: Pre-training of deep bidirectional transformers for language understanding,'' in \emph{Proceedings of the 2019 Conference of the North {A}merican Chapter of the Association for Computational Linguistics: Human Language Technologies, Volume 1 (Long and Short Papers)}.\hskip 1em plus 0.5em minus 0.4em\relax Minneapolis, Minnesota: Association for Computational Linguistics, Jun. 2019, pp. 4171--4186. [Online]. Available: \url{https://aclanthology.org/N19-1423}
\BIBentrySTDinterwordspacing

\bibitem{deberta}
\BIBentryALTinterwordspacing
P.~He, X.~Liu, J.~Gao, and W.~Chen, ``Deberta: Decoding-enhanced bert with disentangled attention,'' in \emph{International Conference on Learning Representations}, 2021. [Online]. Available: \url{https://openreview.net/forum?id=XPZIaotutsD}
\BIBentrySTDinterwordspacing

\bibitem{gas}
\BIBentryALTinterwordspacing
M.~Fey, J.~E. Lenssen, F.~Weichert, and J.~Leskovec, ``Gnnautoscale: Scalable and expressive graph neural networks via historical embeddings,'' in \emph{Proceedings of the 38th International Conference on Machine Learning}, ser. Proceedings of Machine Learning Research, M.~Meila and T.~Zhang, Eds., vol. 139.\hskip 1em plus 0.5em minus 0.4em\relax PMLR, 18--24 Jul 2021, pp. 3294--3304. [Online]. Available: \url{https://proceedings.mlr.press/v139/fey21a.html}
\BIBentrySTDinterwordspacing

\bibitem{sagn}
C.~Sun, H.~Gu, and J.~Hu, ``Scalable and adaptive graph neural networks with self-label-enhanced training,'' \emph{arXiv preprint arXiv:2104.09376}, 2021.

\bibitem{gcn}
T.~N. Kipf and M.~Welling, ``Semi-supervised classification with graph convolutional networks,'' in \emph{{ICLR} (Poster)}.\hskip 1em plus 0.5em minus 0.4em\relax OpenReview.net, 2017.

\bibitem{gcnii}
\BIBentryALTinterwordspacing
M.~Chen, Z.~Wei, Z.~Huang, B.~Ding, and Y.~Li, ``Simple and deep graph convolutional networks,'' in \emph{Proceedings of the 37th International Conference on Machine Learning}, ser. Proceedings of Machine Learning Research, H.~D. III and A.~Singh, Eds., vol. 119.\hskip 1em plus 0.5em minus 0.4em\relax PMLR, 13--18 Jul 2020, pp. 1725--1735. [Online]. Available: \url{http://proceedings.mlr.press/v119/chen20v.html}
\BIBentrySTDinterwordspacing

\bibitem{linear_gnn}
\BIBentryALTinterwordspacing
X.~Wang and M.~Zhang, ``How powerful are spectral graph neural networks,'' in \emph{Proceedings of the 39th International Conference on Machine Learning}, ser. Proceedings of Machine Learning Research, K.~Chaudhuri, S.~Jegelka, L.~Song, C.~Szepesvari, G.~Niu, and S.~Sabato, Eds., vol. 162.\hskip 1em plus 0.5em minus 0.4em\relax PMLR, 17--23 Jul 2022, pp. 23\,341--23\,362. [Online]. Available: \url{https://proceedings.mlr.press/v162/wang22am.html}
\BIBentrySTDinterwordspacing

\bibitem{deq_gcn}
\BIBentryALTinterwordspacing
G.~Li, M.~M{\"u}ller, B.~Ghanem, and V.~Koltun, ``Training graph neural networks with 1000 layers,'' in \emph{Proceedings of the 38th International Conference on Machine Learning}, ser. Proceedings of Machine Learning Research, M.~Meila and T.~Zhang, Eds., vol. 139.\hskip 1em plus 0.5em minus 0.4em\relax PMLR, 18--24 Jul 2021, pp. 6437--6449. [Online]. Available: \url{https://proceedings.mlr.press/v139/li21o.html}
\BIBentrySTDinterwordspacing

\bibitem{gat}
P.~Veli{\v{c}}kovi{\'{c}}, G.~Cucurull, A.~Casanova, A.~Romero, P.~Li{\`{o}}, and Y.~Bengio, ``{Graph Attention Networks},'' in \emph{International Conference on Learning Representations}, 2018.

\bibitem{Cayley-Hamilton1}
\BIBentryALTinterwordspacing
H.~P. Decell, Jr., ``An application of the cayley-hamilton theorem to generalized matrix inversion,'' \emph{SIAM Review}, vol.~7, no.~4, pp. 526--528, 1965. [Online]. Available: \url{https://doi.org/10.1137/1007108}
\BIBentrySTDinterwordspacing

\bibitem{Cayley-Hamilton3}
X.-D. Zhang, \emph{Eigenanalysis}.\hskip 1em plus 0.5em minus 0.4em\relax Cambridge University Press, 2017, p. 413–510.

\bibitem{Cayley-Hamilton2}
\BIBentryALTinterwordspacing
A.~Forsgren, ``Inertia-controlling factorizations for optimization algorithms,'' \emph{Applied Numerical Mathematics}, vol.~43, no.~1, pp. 91--107, 2002, 19th Dundee Biennial Conference on Numerical Analysis. [Online]. Available: \url{https://www.sciencedirect.com/science/article/pii/S0168927402001198}
\BIBentrySTDinterwordspacing

\bibitem{pytorch}
A.~Paszke, S.~Gross, F.~Massa, A.~Lerer, J.~Bradbury, G.~Chanan, T.~Killeen, Z.~Lin, N.~Gimelshein, L.~Antiga, A.~Desmaison, A.~Kopf, E.~Yang, Z.~DeVito, M.~Raison, A.~Tejani, S.~Chilamkurthy, B.~Steiner, L.~Fang, J.~Bai, and S.~Chintala, ``Pytorch: An imperative style, high-performance deep learning library,'' in \emph{Advances in Neural Information Processing Systems 32}, 2019, pp. 8024--8035.

\bibitem{imp}
\BIBentryALTinterwordspacing
X.~Li and Y.~Cheng, ``Irregular message passing networks,'' \emph{Know.-Based Syst.}, vol. 257, no.~C, dec 2022. [Online]. Available: \url{https://doi.org/10.1016/j.knosys.2022.109919}
\BIBentrySTDinterwordspacing

\bibitem{over-smoothing}
Q.~Li, Z.~Han, and X.-M. Wu, ``Deeper insights into graph convolutional networks for semi-supervised learning,'' in \emph{Proceedings of the AAAI conference on artificial intelligence}, vol.~32, no.~1, 2018.

\bibitem{eerm}
\BIBentryALTinterwordspacing
Q.~Wu, H.~Zhang, J.~Yan, and D.~Wipf, ``Handling distribution shifts on graphs: An invariance perspective,'' in \emph{International Conference on Learning Representations}, 2022. [Online]. Available: \url{https://openreview.net/forum?id=FQOC5u-1egI}
\BIBentrySTDinterwordspacing

\bibitem{lmc}
\BIBentryALTinterwordspacing
Z.~Shi, X.~Liang, and J.~Wang, ``{LMC}: Fast training of {GNN}s via subgraph sampling with provable convergence,'' in \emph{International Conference on Learning Representations}, 2023. [Online]. Available: \url{https://openreview.net/forum?id=5VBBA91N6n}
\BIBentrySTDinterwordspacing

\bibitem{graphsaint}
\BIBentryALTinterwordspacing
H.~Zeng, H.~Zhou, A.~Srivastava, R.~Kannan, and V.~Prasanna, ``Graphsaint: Graph sampling based inductive learning method,'' in \emph{International Conference on Learning Representations}, 2020. [Online]. Available: \url{https://openreview.net/forum?id=BJe8pkHFwS}
\BIBentrySTDinterwordspacing

\bibitem{linkbert}
M.~Yasunaga, J.~Leskovec, and P.~Liang, ``Linkbert: Pretraining language models with document links,'' in \emph{Association for Computational Linguistics (ACL)}, 2022.

\bibitem{graphformer}
\BIBentryALTinterwordspacing
J.~Yang, Z.~Liu, S.~Xiao, C.~Li, D.~Lian, S.~Agrawal, A.~Singh, G.~Sun, and X.~Xie, ``Graphformers: Gnn-nested transformers for representation learning on textual graph,'' in \emph{Advances in Neural Information Processing Systems}, M.~Ranzato, A.~Beygelzimer, Y.~Dauphin, P.~Liang, and J.~W. Vaughan, Eds., vol.~34.\hskip 1em plus 0.5em minus 0.4em\relax Curran Associates, Inc., 2021, pp. 28\,798--28\,810. [Online]. Available: \url{https://proceedings.neurips.cc/paper_files/paper/2021/file/f18a6d1cde4b205199de8729a6637b42-Paper.pdf}
\BIBentrySTDinterwordspacing

\bibitem{dlg}
Y.~Ma and J.~Tang, \emph{Deep Learning on Graphs}.\hskip 1em plus 0.5em minus 0.4em\relax Cambridge University Press, 2021.

\bibitem{vrgcn}
J.~Chen, J.~Zhu, and L.~Song, ``Stochastic training of graph convolutional networks with variance reduction,'' in \emph{Proceedings of the 35th International Conference on Machine Learning}, ser. Proceedings of Machine Learning Research, J.~Dy and A.~Krause, Eds., vol.~80.\hskip 1em plus 0.5em minus 0.4em\relax PMLR, 10--15 Jul 2018, pp. 942--950.

\bibitem{graphfm}
\BIBentryALTinterwordspacing
H.~Yu, L.~Wang, B.~Wang, M.~Liu, T.~Yang, and S.~Ji, ``{G}raph{FM}: Improving large-scale {GNN} training via feature momentum,'' in \emph{Proceedings of the 39th International Conference on Machine Learning}, ser. Proceedings of Machine Learning Research, K.~Chaudhuri, S.~Jegelka, L.~Song, C.~Szepesvari, G.~Niu, and S.~Sabato, Eds., vol. 162.\hskip 1em plus 0.5em minus 0.4em\relax PMLR, 17--23 Jul 2022, pp. 25\,684--25\,701. [Online]. Available: \url{https://proceedings.mlr.press/v162/yu22g.html}
\BIBentrySTDinterwordspacing

\bibitem{fastgcn}
\BIBentryALTinterwordspacing
J.~Chen, T.~Ma, and C.~Xiao, ``Fast{GCN}: Fast learning with graph convolutional networks via importance sampling,'' in \emph{International Conference on Learning Representations}, 2018. [Online]. Available: \url{https://openreview.net/forum?id=rytstxWAW}
\BIBentrySTDinterwordspacing

\bibitem{ladies}
\BIBentryALTinterwordspacing
D.~Zou, Z.~Hu, Y.~Wang, S.~Jiang, Y.~Sun, and Q.~Gu, ``Layer-dependent importance sampling for training deep and large graph convolutional networks,'' in \emph{Advances in Neural Information Processing Systems}, H.~Wallach, H.~Larochelle, A.~Beygelzimer, F.~d\textquotesingle Alch\'{e}-Buc, E.~Fox, and R.~Garnett, Eds., vol.~32.\hskip 1em plus 0.5em minus 0.4em\relax Curran Associates, Inc., 2019. [Online]. Available: \url{https://proceedings.neurips.cc/paper/2019/file/91ba4a4478a66bee9812b0804b6f9d1b-Paper.pdf}
\BIBentrySTDinterwordspacing

\bibitem{adapt}
\BIBentryALTinterwordspacing
W.~Huang, T.~Zhang, Y.~Rong, and J.~Huang, ``Adaptive sampling towards fast graph representation learning,'' in \emph{Advances in Neural Information Processing Systems}, S.~Bengio, H.~Wallach, H.~Larochelle, K.~Grauman, N.~Cesa-Bianchi, and R.~Garnett, Eds., vol.~31.\hskip 1em plus 0.5em minus 0.4em\relax Curran Associates, Inc., 2018. [Online]. Available: \url{https://proceedings.neurips.cc/paper/2018/file/01eee509ee2f68dc6014898c309e86bf-Paper.pdf}
\BIBentrySTDinterwordspacing

\bibitem{appnp}
\BIBentryALTinterwordspacing
J.~Gasteiger, A.~Bojchevski, and S.~Günnemann, ``Combining neural networks with personalized pagerank for classification on graphs,'' in \emph{International Conference on Learning Representations}, 2019. [Online]. Available: \url{https://openreview.net/forum?id=H1gL-2A9Ym}
\BIBentrySTDinterwordspacing

\bibitem{ngnn}
\BIBentryALTinterwordspacing
X.~Song, R.~Ma, J.~Li, M.~Zhang, and D.~P. Wipf, ``Network in graph neural network,'' \emph{CoRR}, vol. abs/2111.11638, 2021. [Online]. Available: \url{https://arxiv.org/abs/2111.11638}
\BIBentrySTDinterwordspacing

\end{thebibliography}

\ifCLASSOPTIONcaptionsoff
  \newpage
\fi

\begin{IEEEbiography}[{\includegraphics[width=1in,height=1.25in,clip,keepaspectratio]{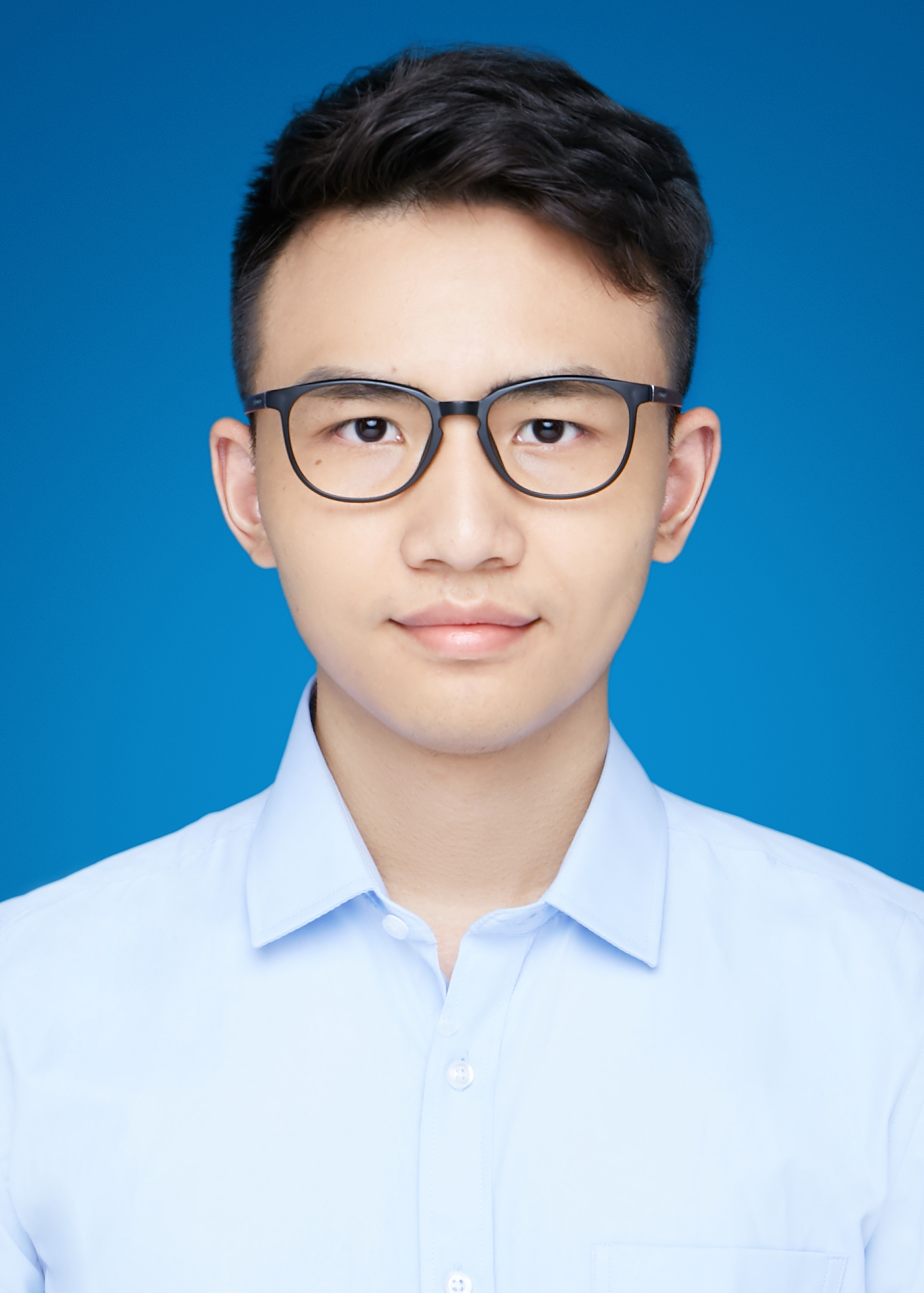}}]{Zhihao Shi}
  received the B.Sc. degree in Department of Electronic Engineering and Information Science from University of Science and Technology of China, Hefei, China, in 2020. a Ph.D. candidate in the Department of Electronic Engineering and Information Science at University of Science and Technology of China, Hefei, China. His research interests include graph representation learning and natural language processing.
\end{IEEEbiography}

\begin{IEEEbiography}[{\includegraphics[width=1in,height=1.25in,clip,keepaspectratio]{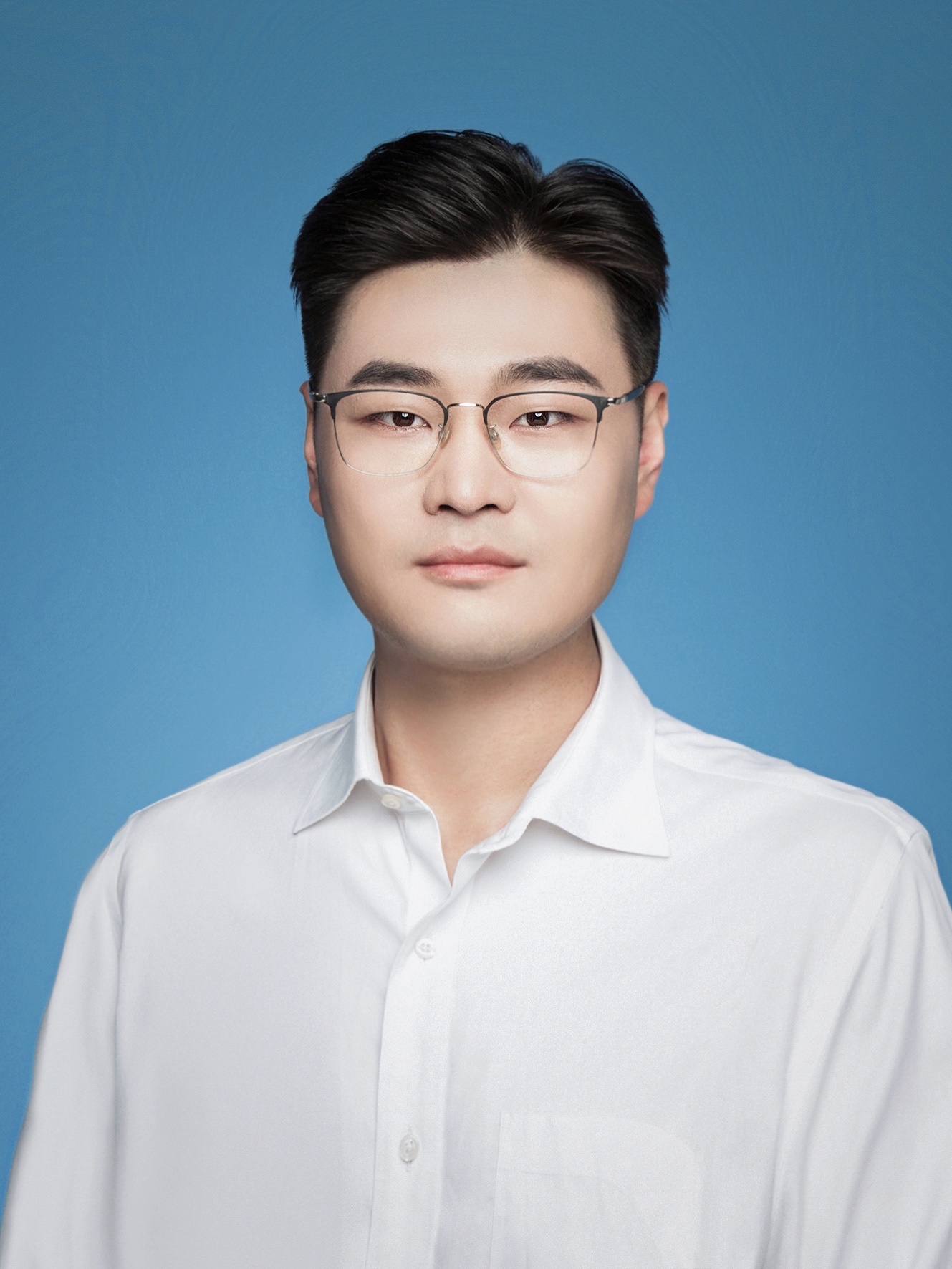}}]{Jie Wang}  (Senior Menber, IEEE) is currently a professor in the Department of Electronic Engineering and Information Science at University of Science and Technology of China, Hefei, China.
He received the B.Sc. degree in electronic information science and technology from University of Science and Technology of China, Hefei, China, in 2005, and the Ph.D. degree in computational science from the Florida \mbox{State} University, Tallahassee, FL, in 2011.
Before joining USTC, Dr. Wang held a position of research assistant professor at University of Michigan from 2015.
His research interests include reinforcement learning, knowledge graph, large-scale optimization, deep learning, etc.
He has published many papers on top machine learning and data mining journals and conferences such as JMLR, TPAMI, NIPS, ICML, and KDD.
He is a senior member of IEEE.
He has served as an associate editor for Neurocomputing and an editorial board member of Data Mining and Knowledge Discovery.
\end{IEEEbiography}

\begin{IEEEbiography}[{\includegraphics[width=1in,height=1.25in,clip,keepaspectratio]{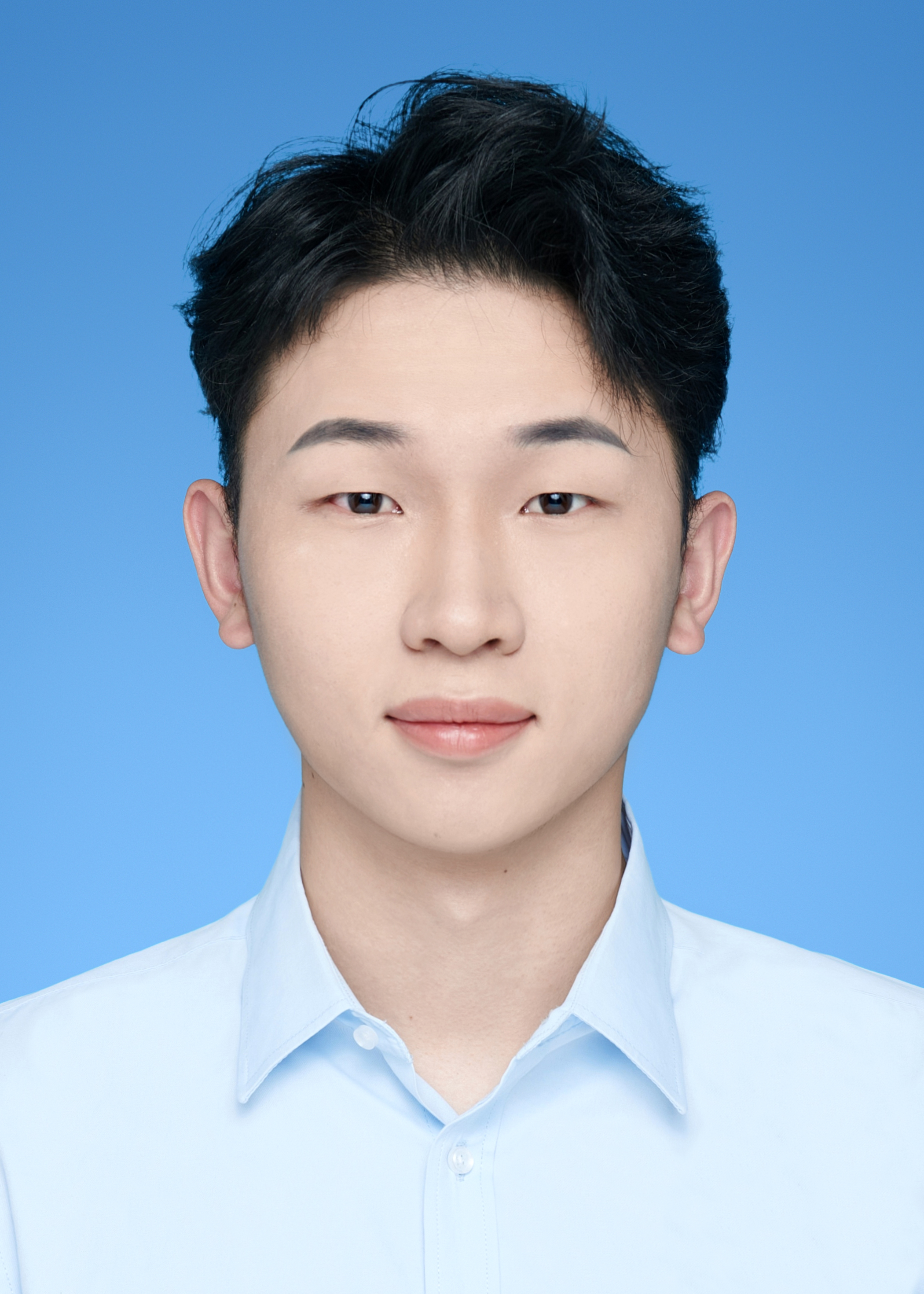}}]{Fanghua Lu}
  received the B.E degree in Mechanical Design and Automation from Shanghai University, Shanghai, China, in 2023. He is currently a graduate student in the Department of Electronic Engineering and Information Science at the University of Science and Technology of China. His research interests include graph learning and natural language processing.
\end{IEEEbiography}

\begin{IEEEbiography}[{\includegraphics[width=1in,height=1.25in,clip,keepaspectratio]{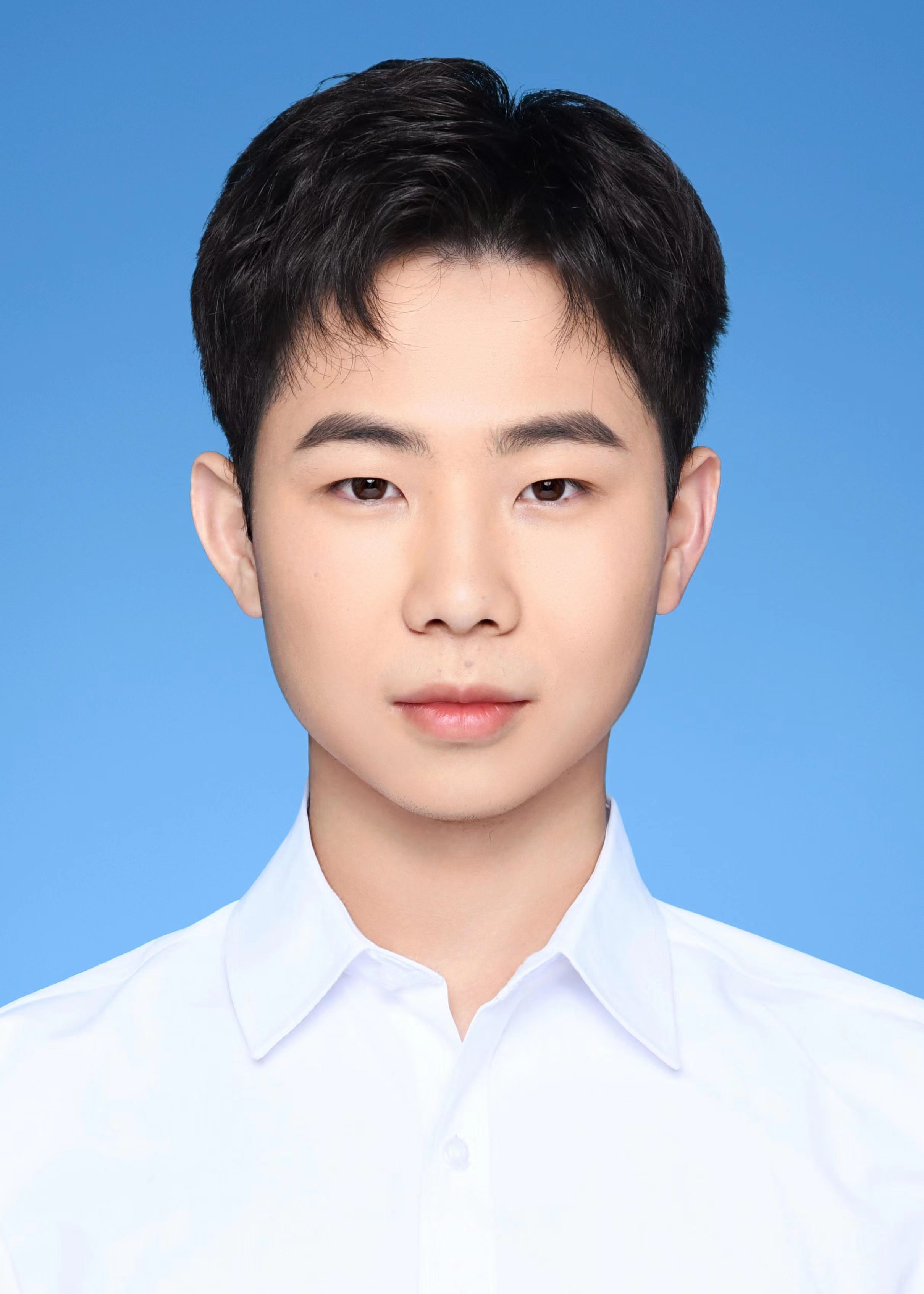}}]{Hanzhu Chen}
  received the B.Sc. degree in Computer Science and Technology from Southwest University, Chongqing, China, in 2021. He is currently a graduate student in the School of Data Science at University of Science and Technology of China, Hefei, China. His research interests include graph representation learning and natural language processing.
\end{IEEEbiography}

\begin{IEEEbiography}[{\includegraphics[width=1in,height=1.5in,clip,keepaspectratio]{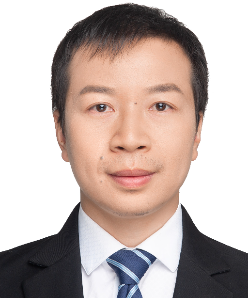}}]{Defu Lian} (Member, IEEE) received the B.E. and Ph.D. degrees in computer science from the University of Science and Technology of China (USTC), Hefei, China, in 2009 and 2014, respectively. He is currently a Professor with the School of Computer Science and Technology, USTC. He has published prolifically in refereed journals and conference proceedings, such as IEEE TRANSACTIONS ON KNOWLEDGE AND DATA ENGINEERING (TKDE), ACM Transactions on Information Systems (TOIS), Knowledge Discovery and Data Mining (KDD), International Joint Conference on Artificial Intelligence (IJCAI), AAAI Conference on Artificial Intelligence (AAAI), Web Search and Data Mining (WSDM), and International World Wide Web Conference (WWW). His general research interests include spatial data mining, recommender systems, and learning to hash. Dr. Lian has served regularly on the program committee of a number of conferences and is a reviewer for the leading academic journals.
\end{IEEEbiography}

\begin{IEEEbiography}[{\includegraphics[width=1in,height=1.5in,clip,keepaspectratio]{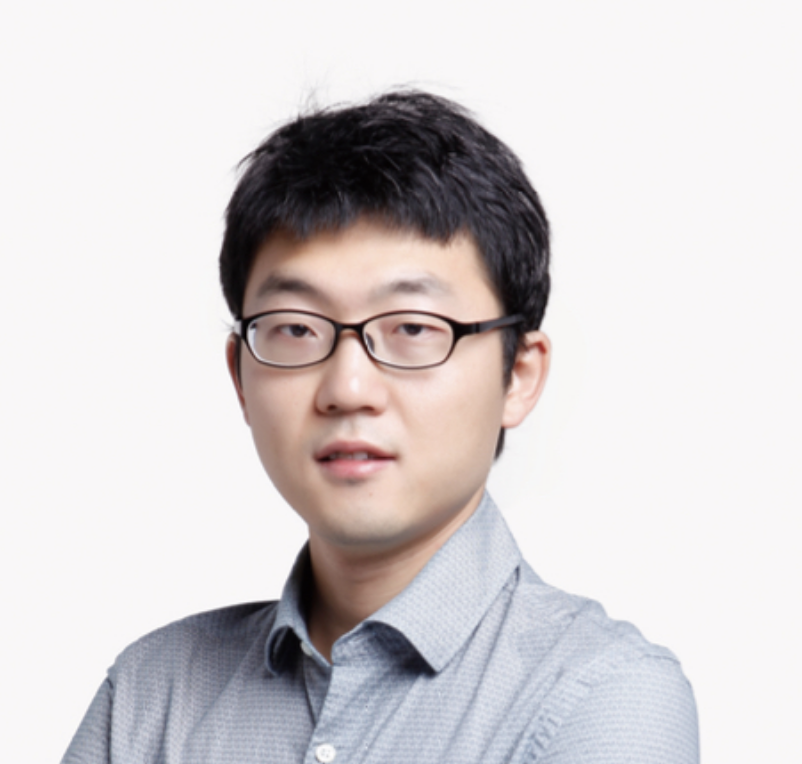}}]{Zheng Wang}  (Menber, IEEE)  received his Ph.D. degree from Tsinghua University in 2011 and worked as a research fellow in Arizona State University in 2011-2014, then as a research faculty in the University of Michigan at Ann Arbor in 2014-2016. He has received several awards, including best research paper award runner-up in KDD and best paper award in IEEE International Conference in Social Computing (SocialCom). He served as the Area Chair and (Senior) PC member of leading conferences, such as ICML, NIPS, KDD and IJCAI, and gave tutorial in KDD, IJCAI and ICDM. Now he is working on AI for drug and structured data analysis as a researcher at Alibaba DAMO Academy.
\end{IEEEbiography}

\begin{IEEEbiography}[{\includegraphics[width=1in,height=1.25in,clip,keepaspectratio]{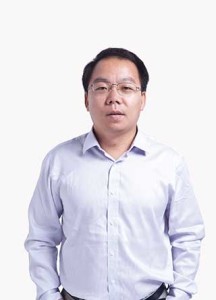}}]{Jieping Ye} (Fellow, IEEE) is currently the VP of the Alibaba Group, Hangzhou, China, where he is the Head of the CityBrain Lab, DAMO Academy.
His research interests include big data, machine learning, and artificial intelligence with applications in transportation, smart city, and biomedicine.
Dr. Ye was elevated to an IEEE Fellow in 2019 and named an ACM Distinguished Scientist in 2020 for his contributions to the methodology and application of machine learning and data mining.
He won the NSF CAREER Award in 2010. His papers have been selected for the Outstanding Student Paper at the International Conference on Machine Learning (ICML) in 2004, the ACM SIGKDD Conference on Knowledge Discovery and Data Mining (KDD) Best Research Paper Runner Up in 2013, and the KDD Best Student Paper Award in 2014. He won the First Place in the 2019 INFORMS Daniel H. Wagner Prize, one of the top awards in operation research practice.
He has served as a Senior Program Committee/Area Chair/Program Committee ViceChair of many conferences, including the Conference and Workshop on Neural Information Processing Systems (NeurIPS), ICML, KDD, International Joint Conference on Artificial Intelligence (IJCAI), the IEEE International Conference on Data Mining (ICDM), and the SIAM International Conference on Data Mining (SDM). He has served as an Associate Editor for Data Mining and Knowledge Discovery, IEEE TRANSACTIONS ON KNOWLEDGE AND DATA ENGINEERING, and IEEE TRANSACTIONS ON PATTERN ANALYSIS AND MACHINE INTELLIGENCE.
\end{IEEEbiography}

\begin{IEEEbiography}[{\includegraphics[width=1in,height=1.25in,clip,keepaspectratio]{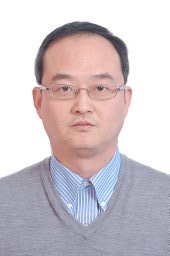}}]{Feng Wu}  (Fellow, IEEE) 
received the B.S. degree in electrical engineering from Xidian University in 1992, and the M.S. and Ph.D. degrees in computer science from the Harbin Institute of Technology in 1996 and 1999, respectively. He is currently a Professor with the University of Science and Technology of China, where he is also the Dean of the School of Information Science and Technology. Before that, he was a Principal Researcher and the Research Manager with Microsoft Research Asia. His research interests include image and video compression, media communication, and media analysis and synthesis. He has authored or coauthored over 200 high quality articles (including several dozens of IEEE Transaction papers) and top conference papers on MOBICOM, SIGIR, CVPR, and ACM MM. He has 77 granted U.S. patents. His 15 techniques have been adopted into international video coding standards. As a coauthor, he received the Best Paper Award at 2009 IEEE Transactions on Circuits and Systems for Video Technology, PCM 2008, and SPIE VCIP 2007. He also received the Best Associate Editor Award from IEEE Circuits and Systems Society in 2012. He also serves as the TPC Chair for MMSP 2011, VCIP 2010, and PCM 2009, and the Special Sessions Chair for ICME 2010 and ISCAS 2013. He serves as an Associate Editor for IEEE Transactions on Circuits and Systems for Video Technology, IEEE Transactions ON Multimedia, and several other international journals.
\end{IEEEbiography}




\clearpage

\appendices

\section{Linear Feature Convolution and Non-linear Transformation in GNNs}\label{sec:decouple_gnn}

In this section, we introduce linear feature convolution and non-linear transformation in GNNs including GCN, REVGAT, GAMLP, SAGN, and GAT. Let the set of $\{1,2,\dots,p\}$ be $[p]$.

Many GNNs \cite{gcn, gcnii, gat, gamlp, deq_gcn} iteratively update node representations in two stages: linear feature convolution and non-linear transformation
\begin{align} \label{eqn:feature_prop}
    \mat{H}^{(i+1/2)} &= \phi^{(i+1)}(\mat{A}; \theta) \mat{H}^{(i)};  \,\text{(linear feature convolution)}\\ \label{eqn:non-linear} 
    \mat{H}^{(i+1)} &= \psi^{(i+1)}(\mat{H}^{(i+1/2)}; \theta),  \,\text{(non-linear transformation)} 
\end{align}
where $\phi^{(i+1)}(\mat{A}; \theta) \in \mathbb{R}^{|\mathcal{V}| \times |\mathcal{V}|}$ is the diffusion matrix constructed from the adjacent matrix $\mat{A}$.
An $N$-layer GNN iteratively takes node features $\mat{H}^{(0)}=\mat{F}^{(\beta)}$ as input and outputs node representations $\mat{H} = \mat{H}^{(N)}$ at the $N$-th layer, i.e.,
\begin{align*}
    \mat{H} = \psi^{(N)} \circ \phi^{(N)} \circ  \psi^{(N-1)} \circ \phi^{(N-1)} \cdots \phi^{(1)} (\mat{F}^{(\beta)}).
\end{align*}

The notorious neighbor explosion issue is due to the linear feature convolutions $\phi^{(i)}$ rather than the non-linear transformations. Thus, we decouple all linear feature convolutions by changing the order of these operations as follows
\begin{align}\nonumber
    \mat{H} & \approx  (\phi^{(N)} \circ \phi^{(N-1)} \cdots \phi^{(1)} ) (\psi^{(N)}  \circ  \psi^{(N-1)} \cdots \psi^{(1)}) (\mat{F}^{(\beta)})\\ 
    & = \phi (\mat{A}; \theta) \psi( \mat{F}^{(\beta)}; \theta ), \label{eqn:decouple}
\end{align}
where $\phi (\mat{A}; \theta) = \prod_{i=1}^n \phi^{(i)}(\mat{A}; \theta) \in \mathbb{R}^{|\mathcal{V}| \times |\mathcal{V}|} $ is still a diffusion matrix and $\psi = \psi^{(N)}  \circ  \psi^{(N-1)} \circ \cdots \psi^{(1)} $ is a non-linear multi-layer perceptron.  

\subsection{Relation between Spectral-based Formulation and Label Deconvolution}

By Equation \eqref{eqn:decouple}, we decouple a given GNN into a diffusion matrix $\phi$ and a non-linear multi-layer perceptron $\psi$. We discuss the effects of $\phi$ and $\psi$ for label deconvolution respectively.

\udfsection{Effect of $\phi$.} In Section 3.2, we introduce label deconvolution to parameterize the inverse of the diffusion matrix $\phi$. In the derivation, we consider the general spectral-based GNNs with $\phi (\mat{A}; \theta) = \sum_{i=1}^N \theta_i^{(\phi)} \hat{\mat{A}}^i$ whose expressive power is stronger than a specific GNN (e.g. $\phi(\hat{\mathbf{A}}; \theta) = \hat{\mathbf{A}}^N$ for GCN).
The feature convolution layers of GNNs help select the number of the label deconvolution layers $N$. As shown in Fig. 5, LD achieves the best performance when $N$ is equal to the number of feature convolution layers of GNNs on the ogbn-product and ogbn-protein datasets.
Intuitively, the $N$ feature convolution layers provide the useful inductive bias that the prediction (representation) of a node not only depends on its feature but also its $N$-hop neighbors' features.
the $N$ label deconvolution layers provide the same inductive bias during the training of NEs.



\udfsection{Effect of $\psi$.} During the training phase of NEs, the non-linear multi-layer perceptron $\psi$ is behind the node encoder, as shown in Equation (12).
However, the effect of non-linear multi-layer perceptron $\psi$ for the node feature extraction is marginal, as the used node encoder (e.g. ESM2 \cite{esm2} and Bert \cite{bert, deberta}) contain a large number of non-linear layers.

\subsection{Example}\label{sec:example_spe}

\udfsection{GCN \cite{gcn}.} GCN iteratively updates node representations by
\begin{align*}
    \mat{H}^{(i+1)}  = \sigma( \hat{\mat{A}} \mat{H}^{(i)}   \mat{W}^{(i+1)}),
\end{align*}
where $\mat{W}^{(i+1)}$ is the trainable weights of the $(i+1)$-th layer and $\sigma$ is an activation function (e.g. ReLU, TanH, and Sigmoid). 
$\hat{\mat{A}} = \mat{D}^{-1}\mat{A}$ is the normalized adjacent matrix, where $\mat{D}$ is the degree matrix with $\mat{D}_{i,i}= \sum_{j} \mat{A}_{i,j}$ and $\mat{D}_{i,j}=0, \forall i \neq j$.
By letting $\mat{H}^{(i+1/2)} = \hat{\mat{A}} \mat{H}^{(i)}$, the corresponding linear feature convolution and non-linear transformation are
\begin{align*}
    \mat{H}^{(i+1/2)} &= \phi^{(i+1)}(\mat{A}; \theta) \mat{H}^{(i)} = \hat{\mat{A}} \mat{H}^{(i)},\\
    \mat{H}^{(i+1)} &= \psi^{(i+1)}(\mat{H}^{(i+1/2)}; \mat{W}^{(l+1)}) = \sigma( \mat{H}^{(i+1/2)} \mat{W}^{(i+1)})
\end{align*}
respectively. Thus, the resulting diffusion matrix and the multi-layer perceptron are
\begin{align*}
    \phi(\mat{A}; \theta) &= \hat{\mat{A}}^N,\\
    \psi(\mat{X}; \mat{W}^{(i)},i=1,2,\dots,N) &= \sigma( \cdots \sigma( \mat{X} \mat{W}^{(1)}) \cdots \mat{W}^{(N)} ).
\end{align*}

\udfsection{REVGAT \cite{deq_gcn}.} At the $(i+1)$-th layer, REVGAT first partition $\mat{H}^{(i)}$ into $C$ groups $(\mat{H}^{(i)}_1, \mat{H}^{(i)}_2, \dots ,\mat{H}^{(i)}_C)$ and then update node representations in each group by
\begin{align*}
    \overline{\mat{H}}^{(i+1)}_0 &= \sum_{j=2}^C \mat{H}^{(i)}_j;\\
    \overline{\mat{H}}^{(i+1)}_j  &= \GAT^{(i+1)}_{j-1} + \mat{H}^{(i)}_j,\,j \in [C],\\
    \GAT^{(i+1)}_{j-1} &= \sigma( \mat{A}(\overline{\mat{H}}^{(i+1)}_{j-1}, \theta) \overline{\mat{H}}^{(i+1)}_{j-1}   \mat{W}^{(i+1)}_j );\\
    \mat{H}^{(i+1)} &= \MLP(\overline{\mat{H}}^{(i+1)});
\end{align*}
where $\mat{A}(\mat{H}^{(i)}, \theta)$ is a learnable attention matrix, $\mat{W}^{(i+1)}$ is the trainable weights of the $(i+1)$-th layer, and $\sigma$ is an activation function. 
As recent works show that we can replace the graph attention with a normalized random vector to achieve similar performance \cite{imp}, we approximate $\mat{A}(\mat{H}^{(i)}, \theta)$ by a fixed normalized adjacent matrix $\hat{\mat{A}}$.
There exists $NC$ feature convolutions $\phi^{(i+1)}_j$ and non-linear transformation $\psi^{(i+1)}_j$.
If $j=1$, by letting $\mat{H}^{(i+1)}_{1/2} = \hat{\mat{A}} \mat{H}^{(i+1)}$, the corresponding linear feature convolution is
\begin{align*}
    \mat{H}^{(i+1)}_{1/2} &= \phi^{(i+1)}_1(\mat{A}; \theta) \overline{\mat{H}}^{(i+1)} = \hat{\mat{A}} \mat{H}^{(i+1)};
\end{align*}
and the non-linear transformation $\psi^{(i+1)}_1$ is
\begin{align*}
    \overline{\mat{H}}^{(i+1)}_1  &= \GAT^{(i+1)}_{0} + \mat{H}^{(i)}_1,\\
    \GAT^{(i+1)}_{0} &= \sigma(\sum_{j=2}^C [\mat{H}^{(i+1)}_{1/2}]_j   \mat{W}^{(i+1)}_1 ).
\end{align*}
If $j \in [2,C-1]$, by letting $\mat{H}^{(i+1)}_{j-1/2} = \hat{\mat{A}} \overline{\mat{H}}^{(i+1)}_{j-1}$, the corresponding linear feature convolution is
\begin{align*}
    \mat{H}^{(i+1)}_{j-1/2} &= \phi^{(i+1)}_j(\mat{A}; \theta) \overline{\mat{H}}^{(i+1)}_{j-1} = \hat{\mat{A}} \overline{\mat{H}}^{(i+1)}_{j-1},
\end{align*}
and the non-linear transformation $\psi^{(i+1)}_j$ compute $\overline{\mat{H}}^{(i+1)}_j$ by
\begin{align*}
    \overline{\mat{H}}^{(i+1)}_j  &= \GAT^{(i+1)}_{j-1} + \mat{H}^{(i)}_j,\\
    \GAT^{(i+1)}_{j-1} &= \sigma( \mat{H}^{(i+1)}_{j-1/2}   \mat{W}^{(i+1)}_j ).
\end{align*}
If $j = C$, by letting $\mat{H}^{(i+1)}_{C-1/2} = \hat{\mat{A}} \overline{\mat{H}}^{(i+1)}_{C-1}$, the corresponding linear feature convolution is
\begin{align*}
    \mat{H}^{(i+1)}_{C-1/2} &= \phi^{(i+1)}_C(\mat{A}; \theta) \overline{\mat{H}}^{(i+1)}_{C-1} = \hat{\mat{A}} \overline{\mat{H}}^{(i+1)}_{C-1},
\end{align*}
and the non-linear transformation $\psi^{(i+1)}_C$ compute $\overline{\mat{H}}^{(i+1)}$ by
\begin{align*}
    \overline{\mat{H}}^{(i+1)}_C  &= \GAT^{(i+1)}_{C-1} + \mat{H}^{(i)}_C,\\
    \GAT^{(i+1)}_{C-1} &= \sigma( \mat{H}^{(i+1)}_{C-1/2}   \mat{W}^{(i+1)}_C )\\
    \mat{H}^{(i+1)} &= \MLP(\overline{\mat{H}}^{(i+1)}).
\end{align*}

Thus, the resulting diffusion matrix is
\begin{align*}
    \phi(\mat{A}; \theta) &= \hat{\mat{A}}^{NC},
\end{align*}
and the resulting $\psi$ is an $N$-layer reversible network
\begin{align*}
    \overline{\mat{H}}^{(i+1)}_0 &= \sum_{j=2}^C \mat{H}^{(i)}_j;\\
    \overline{\mat{H}}^{(i+1)}_j  &= \sigma( \overline{\mat{H}}^{(i+1)}_{j-1}   \mat{W}^{(i+1)}_j ) + \mat{H}^{(i)}_j,\,j \in [C],\\
    \mat{H}^{(i+1)} &= \MLP(\overline{\mat{H}}^{(i+1)}).
\end{align*}

\udfsection{GAMLP \cite{gamlp}.} Given the node features $\mat{X}$ and the label embeddings $\mat{H}_Y$, GAMLP learns node representations by
\begin{align*}
    \mat{H}  &= \MLP(\mat{H}_X) + \beta \MLP(\mat{H}_Y);\\
    \mat{H}_X &= \sum_{i=0}^N \theta_i \hat{\mat{A}}^i \mat{X},
\end{align*}
where $\MLP$ is a multi-layer perception, $\beta$ is a hyper-parameter, and $\theta_i$ is the learned weights (e.g., recursive attention and JK attention).
As the computation costs of the label embeddings $\mat{H}_Y$ are very expensive during the training phase of NEs, we set $\beta=0$. We use $\theta_i \in \mathbb{R}$ in the derivation of this paper, as the scalar weight is powerful enough to produce arbitrary node predictions under some mild conditions\cite{linear_gnn}.
Thus, GAMLP performs the linear feature convolution and the non-linear transformation only once by
\begin{align*}
    \mat{H}^{(1/2)} &= \phi^{(1)}(\mat{A}; \theta) \mat{H}^{(0)} = (\sum_{i=0}^N \theta_i \hat{\mat{A}}^i ) \mat{H}^{(0)},
\end{align*}
and
\begin{align*}
    \mat{H}^{(1)} &= \psi^{(1)}(\mat{H}^{(1/2)},) = \MLP(\mat{H}^{(1/2)}).
\end{align*}
Thus, $\phi$ and $\psi$ are
\begin{align*}
    \phi(\mat{A}; \theta) &= \sum_{i=0}^N \theta_i  \hat{\mat{A}}^{(i)},\\
    \psi(\mat{X}) &= \MLP(X).
\end{align*}

\udfsection{SAGN \cite{sagn}.} Given the node features $\mat{X}$, SAGN learns node representations by
\begin{align*}
    \mat{Z}  &= \mat{X} || \hat{\mat{A}} \mat{X} || \hat{\mat{A}}^2 \mat{X} || \dots \hat{\mat{A}}^N \mat{X} ;\\
    \mat{H} &= \MLP(\mat{Z}) = \sum_{i=0}^N \MLP_i(\hat{\mat{A}}^i \mat{X}),
\end{align*}
where $||$ denotes concatenation.
As the nonlinearity of $\MLP_i$ is unnecessary for SAGN to reach high expressiveness \cite{linear_gnn}, we remove the nonlinearity of $\MLP_i$ by
\begin{align*}
    \mat{H} &= \sum_{i=0}^N \MLP_i(\hat{\mat{A}}^i \mat{X})\approx \sum_{i=0}^N \hat{\mat{A}}^i \mat{X} \theta_i.
\end{align*}
We follow the derivation of GAMLP to approximate $\phi$ and $\psi$ by
\begin{align*}
    \phi(\mat{A}; \theta) &= \sum_{i=0}^N \theta_i  \hat{\mat{A}}^{(i)},\\
    \psi(\mat{X}) &= \MLP_0(X).
\end{align*}
We preserve the architecture $\MLP_0$ of the true SAGN to alleviate the learning bias.

\udfsection{GAT \cite{gat}.} GAT iteratively updates node representations by
\begin{align*}
    \mat{H}^{(i+1)}  = \sigma( \MLP( \mat{A}(\mat{H}^{(i)}, \mathcal{E}, \theta) \mat{H}^{(i)}   \mat{W}^{(i+1)} )),
\end{align*}
where $\mat{A}(\mat{H}^{(i)}, \mathcal{E}, \theta)$ is a learnable attention matrix, $\mathcal{E}$ denotes the edge features, $\MLP$ is a multi-layer perception introduced by NGNN \cite{ngnn}, $\mat{W}^{(i+1)}$ is the trainable weights of the $(i+1)$-th layer, and $\sigma$ is an activation function (e.g. ReLU, TanH, and Sigmoid). 
As recent works show that we can replace the graph attention with a normalized random vector to achieve similar performance \cite{imp}, we approximate $\mat{A}(\mat{H}^{(i)}, \mathcal{E}, \theta)$ by a fixed normalized adjacent matrix $\hat{\mat{A}}$.
By letting $\mat{H}^{(i+1/2)} = \hat{\mat{A}} \mat{H}^{(i)}$, the corresponding linear feature convolution and non-linear transformation are
\begin{align*}
    \mat{H}^{(i+1/2)} &= \phi^{(i+1)}(\mat{A}; \theta) \mat{H}^{(i)} = \hat{\mat{A}} \mat{H}^{(i)},\\
    \mat{H}^{(i+1)} &= \psi^{(i+1)}(\mat{H}^{(i+1/2)}; \mat{W}^{(i+1)}) \\
    &= \sigma( \MLP(\mat{H}^{(i+1/2)} \mat{W}^{(i+1)}))
\end{align*}
respectively. Thus, the resulting diffusion matrix and the multi-layer perceptron are
\begin{align*}
    \phi(\mat{A}; \theta) &= \hat{\mat{A}}^N,\\
    \psi(\mat{X}) &= \sigma(\MLP( \cdots \sigma( \MLP(\mat{X} \mat{W}^{(1)})) \cdots \mat{W}^{(N)} )).
\end{align*}

\subsection{Expressiveness of Spectral-based GNNs} \label{sec:expressiveness_gnn}

In this section, we show that the approximation \eqref{eqn:decouple} does not compromise the expressiveness of GNNs. Specifically, following \cite{linear_gnn}, we show that the spectral-based GNN \eqref{eqn:decouple} can differentiate all non-isomorphic nodes if the Laplace matrix $\hat{\mathbf{L}} = \mathbf{I}  - \hat{\mathbf{A}}$ has no multiple eigenvalues and the node features $\mathbf{F}^{\beta}$ contain all frequency components of $\hat{\mathbf{L}}$.
We also analyze why the no-multiple-eigenvalue and no-missing-frequency conditions are largely satisfied in practice.

\begin{assumption}\label{ass:eigen}
    Let $\hat{\mathbf{L}} = \mathbf{U}\Lambda \mathbf{U}^{\top}$ denote the eigendecomposition of the Laplace matrix $\hat{\mathbf{L}} = \mathbf{I}  - \hat{\mathbf{A}}$, where $\Lambda=\text{Diag}(1-\lambda_1, 1-\lambda_2, \dots, 1-\lambda_n)$ is the diagonal matrix constituted by the eigenvalues $\lambda_i$ and $\mathbf{U}=(\mathbf{u}_1, \mathbf{u}_2, \dots, \mathbf{u}_n)$ is constituted by the eigenvectors $\mathbf{u}_i$.
    We assume that the no-multiple-eigenvalue and no-missing-frequency conditions hold, i.e., $\lambda_i \neq \lambda_j$ for $i \neq j$ and $u_{i}^{\top} \mathbf{F}^{\beta} \neq \mathbf{0}$ for $i=1,2,\dots,n$.
\end{assumption}

As the eigendecomposition is very expensive for large-scale datasets in our experiments, we quote Table 7 in \cite{linear_gnn} on ten real-world datasets in Table \ref{tab::cite_datasets}.
\begin{table*}[t]
\centering
\caption{Dataset statistics \cite{linear_gnn}. $N_{\text{miss}}$ is the number of missing frequency components. $R_{\text{multi}}$ is the ratio of multiple eigenvalues in all different Laplacian eigenvalues (\%).}\label{tab::cite_datasets}
\vskip 0.15in
\begin{center}
\begin{small}
\setlength{\tabcolsep}{1mm}
{\begin{tabular}{ccccccccccc}
\hline
Datasets & Cora & CiteSeer & PubMed & Computers & Photo  & Chameleon & Squirrel & Actor & Texas & Cornell \\
\hline
$|V|$ & 2708 & 3327 & 19717 & 13752 & 7650 & 2277 & 5201 & 7600 & 183 & 183     \\
$|E|$ & 5278 & 4552 & 44324 & 245861 & 119081 & 31371 & 198353& 26659 & 279 & 277     \\
$N_{\text{miss}}$ & 0 &0 &0& 0 &0 &0& 0 &0 &0& 0\\
$R_{\text{multi}}$ & 1.49&3.25&0.03&0.16&0.22&0.22&0.00&0.13&0.00&0.00\\
\hline
\end{tabular}}
\end{small}
\end{center}
\vskip -0.1in
\end{table*}
In the ten real-world benchmark datasets, on average less than 1\% of eigenvalues are multiple and no frequency component is missing.
In Table \ref{tab::cite_datasets}, the datasets Cora, CiteSeer, and PubMed are citation networks like ogbn-arxiv in our experiments; the datasets Computers and Photo are co-purchase networks like ogbn-products in our experiments.
Therefore, Assumption \ref{ass:eigen} can be largely satisfied in our experiments.

Under Assumption  \ref{ass:eigen}, we give the universality theorem of spectral GNNs \cite{linear_gnn}. 
The theorem points out that our approximation \eqref{eqn:decouple} does not compromise the expressiveness of GNN, i.e., its ability to distinguish non-isomorphic nodes/graphs.
\begin{theorem}
    If the Laplace matrix $\hat{\mathbf{L}} = \mathbf{I}  - \hat{\mathbf{A}}$ has no multiple eigenvalues and the node features $\mathbf{F}^{\beta}$ contain all frequency components of $\hat{\mathbf{L}}$, then the spectral-based GNN \eqref{eqn:decouple} can produce any one-dimensional prediction for all non-isomorphic nodes.
\end{theorem}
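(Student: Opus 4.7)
The plan is to exploit the spectral decomposition of $\hat{\mathbf{L}}$ together with the separate flexibility of the polynomial filter $\phi$ and the non-linear transformation $\psi$. First, I would diagonalize $\hat{\mathbf{L}} = \mathbf{U}\Lambda\mathbf{U}^{\top}$, so that $\hat{\mathbf{A}} = \mathbf{I} - \hat{\mathbf{L}}$ shares the eigenbasis $\mathbf{U}$ and any polynomial filter satisfies $\phi(\hat{\mathbf{A}};\theta^{(\phi)}) = \mathbf{U}\,\mathrm{diag}\bigl(\phi(1-\lambda_1),\dots,\phi(1-\lambda_n)\bigr)\mathbf{U}^{\top}$. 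This reduces the matrix question to pointwise control of scalar values $\phi(1-\lambda_i)$ at the $n$ distinct points $\{1-\lambda_i\}$ guaranteed by the no-multiple-eigenvalue assumption.

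Given a target one-dimensional prediction $\mathbf{y} \in \mathbb{R}^{|\mathcal{V}|}$, I would split the construction into two stages. Stage one: choose $\psi$ so that $\mathbf{z} := \psi(\mathbf{F}^{(\beta)};\theta^{(\psi)}) \in \mathbb{R}^{|\mathcal{V}|}$ has \emph{no missing frequency}, i.e.\ $\mathbf{u}_i^{\top}\mathbf{z} \neq 0$ for every $i$. Since an MLP can realize any linear map through its first layer, it suffices to take $\mathbf{z} = \mathbf{F}^{(\beta)}\mathbf{w}$; by the no-missing-frequency hypothesis on $\mathbf{F}^{(\beta)}$, each $\mathbf{u}_i^{\top}\mathbf{F}^{(\beta)}$ is a non-zero row vector, so the set of $\mathbf{w}$ violating $\mathbf{u}_i^{\top}\mathbf{F}^{(\beta)}\mathbf{w}\neq 0$ is a union of $n$ proper hyperplanes in $\mathbb{R}^{d_f}$, and any $\mathbf{w}$ outside this measure-zero set works.

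Stage two: with $\mathbf{z}$ fixed, define the target spectral response
\begin{equation*}
    \mu_i \;:=\; \frac{\mathbf{u}_i^{\top}\mathbf{y}}{\mathbf{u}_i^{\top}\mathbf{z}}, \qquad i=1,\dots,n.
\end{equation*}
Because the abscissae $1-\lambda_i$ are pairwise distinct, Lagrange interpolation produces a polynomial $\phi$ of degree at most $n-1$ with $\phi(1-\lambda_i)=\mu_i$ for all $i$, and reading off its coefficients recovers admissible $\theta^{(\phi)}$. Plugging back into the diagonalized form,
\begin{equation*}
    \phi(\hat{\mathbf{A}};\theta^{(\phi)})\,\mathbf{z}
    \;=\; \mathbf{U}\,\mathrm{diag}(\mu_i)\,\mathbf{U}^{\top}\mathbf{z}
    \;=\; \mathbf{U}\,\mathbf{U}^{\top}\mathbf{y} \;=\; \mathbf{y},
\end{equation*}
so the spectral GNN reproduces the arbitrary target $\mathbf{y}$.

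The main obstacle I anticipate is the frequency-preservation step: one must show that the non-linear $\psi$ in the stated hypothesis class can produce a scalar vector $\mathbf{z}$ that retains every spectral component of $\mathbf{F}^{(\beta)}$. The generic-linear-$\mathbf{w}$ construction above handles it cleanly by exploiting that MLPs subsume linear maps, but it requires a small genericity argument (avoiding the $n$ forbidden hyperplanes). Once this is in place, combining it with the Lagrange interpolation over the distinct eigenvalues makes the universality claim immediate.
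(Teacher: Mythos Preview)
Your proposal is correct and follows essentially the same approach as the paper: reduce $\psi$ to a linear map $\mathbf{F}^{(\beta)}\mathbf{w}$, use the no-missing-frequency assumption to show that a generic $\mathbf{w}$ avoids the finite union of hyperplanes where some spectral coordinate vanishes, and then interpolate the required filter values at the distinct eigenvalues (you phrase this as Lagrange interpolation, the paper as invertibility of the Vandermonde system, which are equivalent).
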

\begin{proof}
    The spectral-based GNN is $\mathbf{H} = \phi (\mat{A}; \theta) \psi( \mat{F}^{(\beta)}; \theta ) = \sum_{i=0}^{N} \theta_i^{(\phi)} \hat{\mat{A}}^i \mat{F}^{(\beta)} \mathbf{W}$, where we replace the multi-layer perceptron $\psi$ with a linear layer $\psi( \mat{F}^{(\beta)}) = \mat{F}^{(\beta)} \mathbf{W}$.
    Due to the universal approximation property of the multi-layer perceptron, the conclusion for a linear layer also holds for the multi-layer perception.

    Let $g(\lambda_i) = \sum_{i=0}^{n-1} \theta_i^{(\phi)} \lambda_i$. Then the node embeddings become $\mathbf{H} = \mathbf{U}\text{diag}(g(\lambda_1), g(\lambda_2), \dots, g(\lambda_n))\mathbf{U}^{\top}\mat{F}^{(\beta)} \mathbf{W}  $. Given the label $\mathbf{Y} \in \mathbb{R}^n$, suppose that $\widetilde{\mathbf{Y}} = \mathbf{U}^{\top} \mathbf{Y}$ and $\widetilde{\mathbf{F}} = \mathbf{U}^{\top} \mat{F}^{(\beta)}$.

    We first prove that there exists $\mathbf{W}^* \in \mathbb{R}^d$ such that all elements of $\widetilde{\mathbf{F}} \mathbf{W}^* \in \mathbb{R}^n$ are not zero. Due to the no-missing-frequency condition, we have $\widetilde{\mathbf{F}}_{i,:} = u_{i}^{\top} \mathbf{F}^{\beta} \neq \mathbf{0}$. Thus, the solution space $V_i \in \mathbb{R}^d = \{\mathbf{W}: \widetilde{\mathbf{F}}_{i,:} \mathbf{W} = \mathbf{0}\}$ is a proper subspace of $\mathbb{R}^d$ (the dimensionality of $V_i$ is $(d-1)$). Therefore, we have $\cup_{i=1}^n V_i \neq \mathbb{R}^d$. All parameters $\mathbf{W}^* \in \mathbb{R}^{d} - \cup_{i=1}^n V_i$ can meet the requirements.

    Then, if there exists a polynomial such that $g^*(\lambda_i) = \widetilde{\mathbf{Y}}_i / (\widetilde{\mathbf{F}} \mathbf{W}^*)_i = \mathbf{R}_i $ for $i=1,2,\dots,n$, then the spectral-based GNN can produce the prediction $\mathbf{Y}$.
    Thus, the parameters $\theta_i^{(\phi)}$ are the solution to Equation $\mathbf{B} \Theta = \mathbf{R}$ with $\mathbf{B}_{ij} = \lambda_i^{j-1}$. As  $\lambda_i$ are different from each other, the Vandermonde matrix $\mathbf{B}$ is nonsingular and the solution to Equation $\mathbf{B} \Theta = \mathbf{R}$ always exists.
    Therefore, the spectral-based GNN can produce any one-dimensional prediction for all non-isomorphic nodes.
\end{proof}

\subsection{Emperical Results for Approximation in Equation \eqref{eqn:decouple}}\label{sec:exp_approx}

Besides the theoretical analysis in Appendix \ref{sec:expressiveness_gnn}, we also conduct experiments to demonstrate that our approximation does not compromise the expressiveness of GNNs.
Specifically, we use the embeddings inferred by pre-trained NEs $\mathbf{X}_{\text{PNE}}$ as the node features and then train the spectral-based GNNs $\mathbf{H} = \sum_{i=0}^{N} \theta_i^{(\phi)} \hat{\mat{A}}^i \text{MLP}(\mat{F}^{(\beta)})$ by the full-batch gradient descent.
For GCN, the corresponding spectral-based approximation is $\mathbf{H} = \hat{\mat{A}}^N \text{MLP}(\mat{F}^{(\beta)})$  (denoted by SpeGCN). For RevGAT, the corresponding spectral-based approximation is $\mathbf{H} = \sum_{i=0}^{NC} \theta_i^{(\phi)} \hat{\mat{A}}^i \text{MLP}(\mat{F}^{(\beta)})$ with parameters $\theta_i^{(\phi)}$  (denoted by SpeGNN).

Fig. \ref{fig:curves} shows the training curves of GCN, RevGAT, and their spectral-based approximations. The training behavior of GCN is very similar to that of SpeGCN, showing that swapping linear convolutional layers and non-linear layers of GNNs does not significantly affect model performance.
Although RevGAT introduces the attention mechanism as a learnable convolutional operator, the prediction performance of RevGAT is very close to SpeGNN.
Thus, the spectral-based approximation can maintain the expressive power and prediction performance of the original GNN.

\begin{figure}[t]
	\centering

    \begin{subfigure}{0.48\linewidth}
		\centering
		\includegraphics[width=1.0\linewidth]{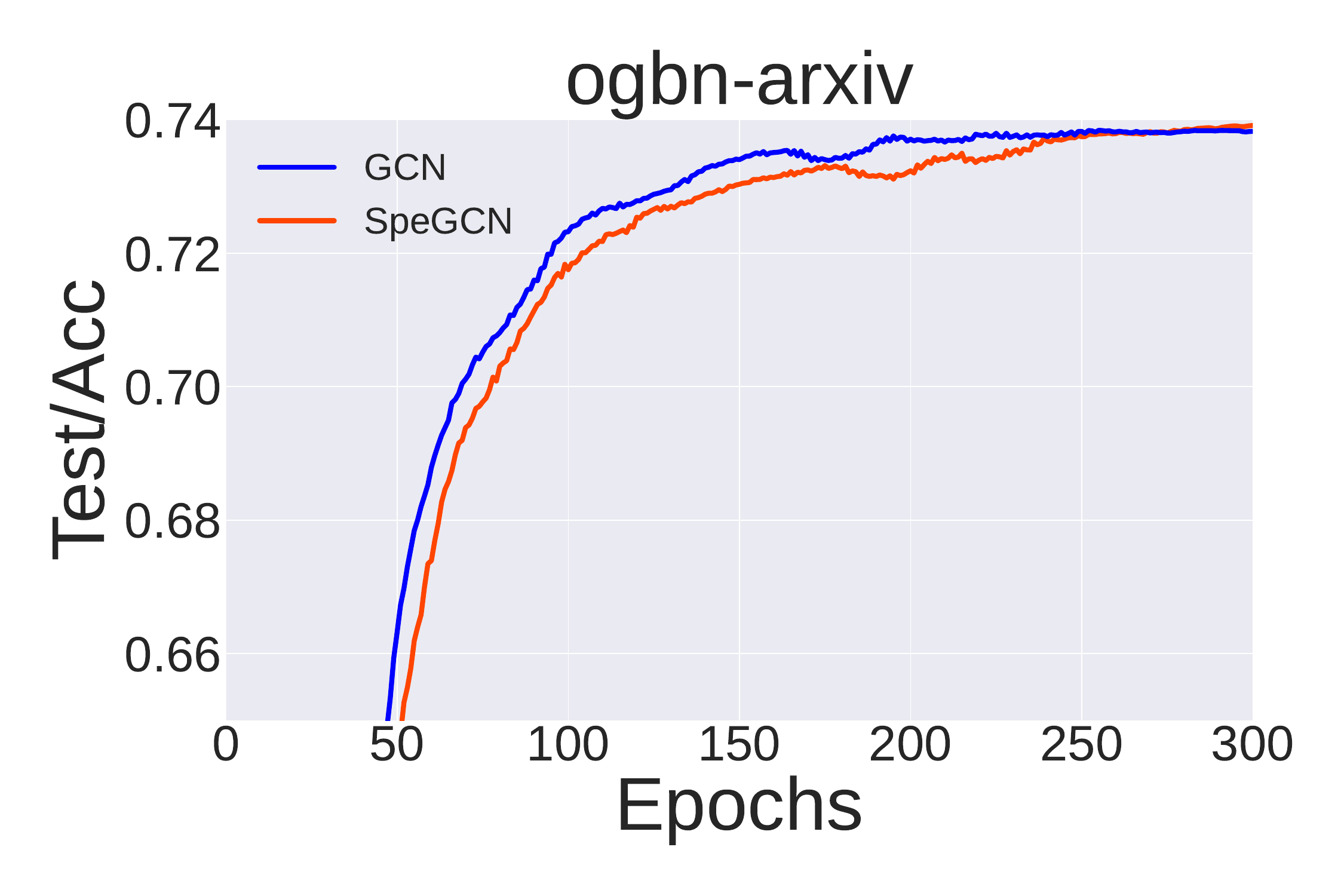}
		\caption{GCN and SpeGCN}
		\label{fig:curves_gcn}
	\end{subfigure}
    \begin{subfigure}{0.48\linewidth}
		\centering
		\includegraphics[width=1.0\linewidth]{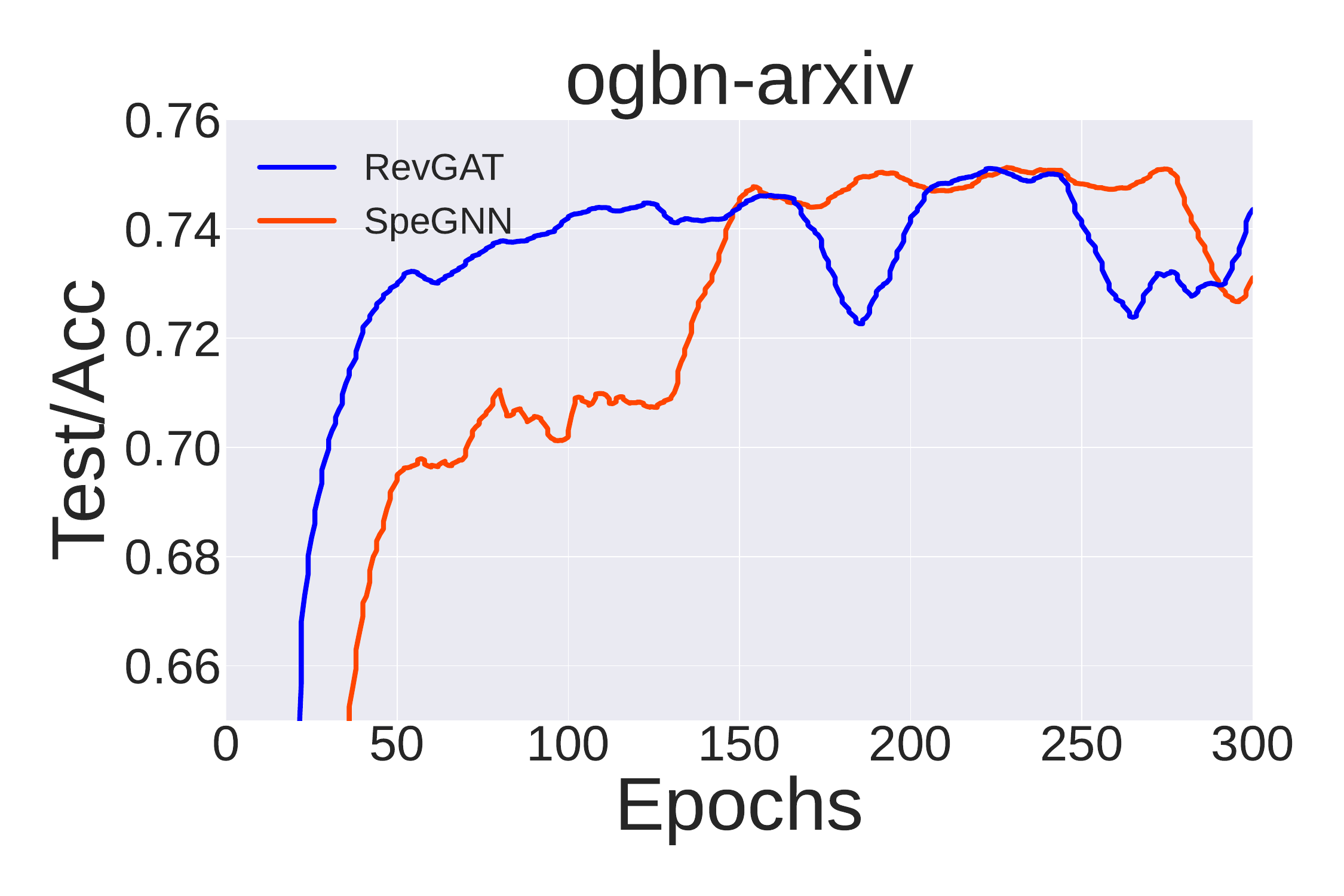}
		\caption{RevGAT and SpeGNN}
		\label{fig:curves_revgat}
	\end{subfigure}
        \caption{The training curves of GNNs and their spectral-based approximation.}

    \label{fig:curves}
\end{figure}

\section{Detailed proofs}\label{sec:proof}

\subsection{Proof of Proposition 1}

It follows from Lemma 1 that $\phi(\mat{A}; \theta)^{-1}$ is expressed as a linear combination of the matrix powers of $\hat{\mat{A}}$, i.e.,
\begin{align*}
    \phi(\mat{A}; \theta)^{-1} &= \sum_{j=0}^{|\mathcal{V}|-1} p_j \phi(\mat{A}; \theta)^j\\
    &= \sum_{j=0}^{|\mathcal{V}|-1} p_j  (\sum_{i=0}^{N} \theta_i^{(\phi)} \hat{\mat{A}}^i)^j\\
    &= \sum_{i=0}^{N(|\mathcal{V}|-1)} q_i \hat{\mat{A}}^i.
\end{align*}
It follows from Lemma 2 that there exists $\gamma_i$ such that
\begin{align*}
    \phi(\mat{A}; \theta)^{-1} 
    = \sum_{i=0}^{|\mathcal{V}|-1} \gamma_i \hat{\mat{A}}^i.
\end{align*}

\subsection{Solution to the motivating example in Section 4.1}
\label{sec:solution_example}

The node attributes, the node labels, and the normalized adjacent matrix are $\mat{X} = (\mat{e}_1, \mat{e}_2, \mat{e}_2, \mat{e}_3)^{\top}=\begin{bmatrix} 1 & 0 & 0 \\ 0 & 1 & 0 \\ 0 & 1 & 0\\ 0 & 0 & 1 \end{bmatrix}$, $\mat{Y} = (\mat{e}_2, \mat{e}_1, \mat{e}_3, \mat{e}_2)^{\top}=\begin{bmatrix} 0 & 1 & 0 \\ 1 & 0 & 0 \\ 0 & 0 & 1\\ 0 & 1 & 0 \end{bmatrix}$, and
$
\hat{\mat{A}} = \begin{bmatrix} 0 & 1 & 0 & 0 \\ 1 & 0 & 0 & 0 \\ 0 & 0 & 0 & 1 \\ 0 & 0 & 1 & 0  \end{bmatrix}.
$

GLEM first solve
$$
    \beta_{GLEM} = \arg\min_{\beta} \|\mat{X}\beta - \mat{Y}\|_F^2 = \begin{bmatrix} 0 & 1 & 0 \\ 0.5 & 0 & 0.5 \\ 0 & 1 & 0 \end{bmatrix}.
$$
Then, the final prediction of GLEM is
$$
    \mat{A}\mat{F}_{GLEM} = \mat{A}\mat{X}\beta_{GLEM} = \begin{bmatrix} 0.5 & 0 & 0.5 \\ 0 & 1 & 0 \\ 0 & 1 & 0\\ 0.5 & 0 & 0.5 \end{bmatrix}=\mat{Y}.
$$

LD first solve
\begin{align*}
    \beta_{LD},\gamma_0^*, \gamma_1^* &= \arg\min_{\beta, \gamma_0, \gamma_1} \|\gamma_0 \mat{X}\beta + \gamma_1 \mat{A} \mat{X}\beta - \mat{Y}\|_F^2 \\
    &= \begin{bmatrix} 1 & 0 & 0 \\ 0 & 1 & 0 \\ 0 & 0 & 1 \end{bmatrix}, 0, 1.
\end{align*}
Then, the final prediction of LD is
$$
    \mat{A}\mat{F}_{LD} = \mat{A}\mat{X}\beta_{LD} = \begin{bmatrix} 0 & 1 & 0 \\ 1 & 0 & 0 \\ 0 & 0 & 1\\ 0 & 1 & 0 \end{bmatrix}.
$$

\subsection{Proof of Theorem 1}

\begin{proof}
    We first show that $\min_{\beta,\theta,\gamma} \loss(\psi(\mat{F}^{(\beta)}; \theta),  \mat{Y}^{(\gamma)}) = 0$.
    Let $\phi(\hat{\mat{A}},\theta^*) = \phi(\mat{A}) $,  $\psi(\mat{F}^*,\theta^*) = \psi(\mat{F}^*) $, and $\mat{F}^{\beta^*} = \mat{F}^*$, where $\phi(\mat{A})$, $\psi(\mat{F}^*)$, and $\mat{F}^*$ are defined in Assumption 1.
    Then, we have $\phi(\hat{\mat{A}},\theta^*) = \phi(\mat{A}) $ is invertible, we have
    \begin{align*}
        \loss( \psi(\mat{F}^{(\beta^*)};\theta^*), \phi(\hat{\mat{A}};\theta^*)^{-1} \mat{Y}) = 0.
    \end{align*}
    It follows from Lemmas 1 and 2 that $\phi(\mat{A}; \theta)^{-1}$ is expressed as a linear combination of the matrix powers of $\hat{\mat{A}}$, i.e., $\phi(\mat{A}; \theta)^{-1} = \sum_{i=0}^{|\mathcal{V}|-1} \gamma_i^* \hat{\mat{A}}^i$.
    Therefore, we have $\min_{\beta,\theta,\gamma} \loss(\psi(\mat{F}^{(\beta)}; \theta),  \mat{Y}^{(\gamma)}) = \loss(\psi(\mat{F}^{(\beta^*)}; \theta^*),  \hat{\mat{Y}}^{(\gamma^*)}) = 0$.

    Let $\beta_{LD} = \beta^*$. Then, we have
    \begin{align*}
        &\min_{ \theta} \, \loss(\GNN(\mat{F}^{(\beta_{LD})}, \mat{A};\theta), \mat{Y}) \\
        = &\loss(\phi(\hat{\mat{A}};\theta^*) \psi(\mat{F}^{(\beta_{LD})};\theta^*), \mat{Y}) = 0.
    \end{align*}
\end{proof}


\section{Independence of Deconvolved Pseudo Labels}

We show that the deconvolved pseudo labels $\mathbf{Y}^{(\gamma)}$ are independent under some mild assumptions.
\begin{theorem}
    Suppose $G$ is a graph with adjacency matrix $A$, whose $n$ nodes have features $\{F_i\}_{i=1}^n$ and corresponding labels $\{Y_i\}_{i=1}^n$.  Further, we assume that all node features are mutually independent, i.e.
$$
\Pr(F_i, F_j) \;=\; \Pr(F_i)\,\Pr(F_j)
\quad\text{for all }i\neq j.
$$
If Assumption \ref{ass:attributes_labels} holds, then the deconvolved pseudo-labels $\hat Y_i$ are conditionally independent given the features.  Specifically, for any two distinct nodes $i$ and $j$,
$$
\Pr\bigl(\hat Y_i,\hat Y_j \mid F_i, F_j\bigr)
\;=\;
\Pr\bigl(\hat Y_i\mid F_i, F_j\bigr)\,\Pr\bigl(\hat Y_j\mid F_i, F_j\bigr).
$$
\end{theorem}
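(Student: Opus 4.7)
The proof hinges on a single algebraic collapse enabled by Assumption~\ref{ass:attributes_labels}: the deconvolution operator exactly cancels the forward polynomial filter in the label-generating model, reducing the deconvolved labels to a purely local, node-wise transformation of the true features. Mutual independence of the features then transfers immediately to (conditional) independence of the deconvolved labels.

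First, I would combine the generative identity $\mathbf{Y} = \phi(\hat{\mathbf{A}})\psi(\mathbf{F}^*)$ from Assumption~\ref{ass:attributes_labels} with the definition of the deconvolved labels $\hat{\mathbf{Y}} = \phi^{-1}(\hat{\mathbf{A}})\mathbf{Y}$ (equivalently, in the parameterization of Section~3.2, the optimal $\mathbf{Y}^{(\gamma)}$ realizing $\phi^{-1}(\hat{\mathbf{A}})\mathbf{Y}$ via Proposition~\ref{prop:expressiveness}). Since $\phi(\hat{\mathbf{A}})$ is invertible by the same assumption, these compose to the key identity
\[
\hat{\mathbf{Y}} \;=\; \phi^{-1}(\hat{\mathbf{A}})\,\phi(\hat{\mathbf{A}})\,\psi(\mathbf{F}^*) \;=\; \psi(\mathbf{F}^*).
\]
Invoking the paper's convention that $\psi:\mathbb{R}^{d_f}\!\to\!\mathbb{R}^{d}$ is extended to a matrix by acting on each row, this means $\hat{Y}_i = \psi(F_i)$ for every node $i$; i.e.\ each deconvolved label is a deterministic, local function of that node's own feature alone, with no residual dependence on any $F_k$ for $k\neq i$ and none on the graph adjacency $\mathbf{A}$.

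Given this locality, the conditional factorization is essentially bookkeeping. Conditioning on $(F_i,F_j)$ pins both $\hat{Y}_i$ and $\hat{Y}_j$ to the fixed values $\psi(F_i)$ and $\psi(F_j)$, so the joint conditional law of $(\hat{Y}_i,\hat{Y}_j)$ given $(F_i,F_j)$ is a point mass at $(\psi(F_i),\psi(F_j))$, which factors as the product of the two point masses at $\psi(F_i)$ and $\psi(F_j)$ respectively. Hence
\[
\Pr\!\bigl(\hat{Y}_i,\hat{Y}_j \mid F_i,F_j\bigr) \;=\; \Pr\!\bigl(\hat{Y}_i \mid F_i,F_j\bigr)\,\Pr\!\bigl(\hat{Y}_j \mid F_i,F_j\bigr).
\]
Interestingly, the mutual independence of the features is not strictly required for this conditional statement; it becomes essential only if one wishes to upgrade to unconditional independence $\Pr(\hat{Y}_i,\hat{Y}_j)=\Pr(\hat{Y}_i)\Pr(\hat{Y}_j)$, which then follows by integrating against the product measure $\Pr(F_i)\Pr(F_j)$.

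The main conceptual step, and the only place where care is warranted, is the row-wise interpretation of $\psi(\mathbf{F}^*)$: one must make explicit that the $i$-th row of $\psi(\mathbf{F}^*)$ depends only on $F_i$, so the collapse $\hat{\mathbf{Y}}=\psi(\mathbf{F}^*)$ truly delivers a node-local map. Once this point is nailed down, every subsequent step is immediate. If the framework were generalized to a $\psi$ that mixes rows, or to noisy labels $\mathbf{Y} = \phi(\hat{\mathbf{A}})\psi(\mathbf{F}^*)+\boldsymbol{\varepsilon}$, an extra independence assumption on the noise across nodes would be needed; but under the hypotheses as stated, the identity $\hat{Y}_i = \psi(F_i)$ reduces the claim to a triviality about deterministic functions of mutually independent variables.
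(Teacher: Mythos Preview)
Your proposal is correct and follows essentially the same route as the paper: both establish the key collapse $\hat{\mathbf{Y}}=\phi(\hat{\mathbf{A}})^{-1}\phi(\hat{\mathbf{A}})\psi(\mathbf{F}^*)=\psi(\mathbf{F}^*)$, hence $\hat{Y}_i=\psi(F_i)$, and then read off the independence. Your point-mass argument for the conditional factorization is a bit more direct than the paper's chain of joint-distribution equalities, and your remark that feature independence is superfluous for the \emph{conditional} statement (only needed for the unconditional version) is a valid sharpening the paper does not make explicit.
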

\begin{proof}
    According to Theorem \ref{thm:cayley}, the deconvolved pseudo-labels $\hat{\mathbf{Y}}$ satisfy 
    \begin{align*}
        &\hat{\mathbf{Y}} = \phi(\hat{\mat{A}})^{-1} \mat{Y} =  \psi(\mat{F})\\
        \Rightarrow &\hat{Y}_i = \psi(F_i).
    \end{align*}
    Thus, for nodes $i,j$, we have
    \begin{align*}
        \text{Pr}((F_i,Y_i),(F_j,Y_j))&=\text{Pr}((F_i,\psi(F_i)),(F_j,\psi(F_j)))\\
        &=\text{Pr}((F_i,Y_i),(F_j,Y_j))\\
        &=\text{Pr}((F_i,\psi(F_i)),\psi(F_j)|F_j)\text{Pr}(F_j)\\
        &=\text{Pr}((F_i,\psi(F_i)))\text{Pr}(F_j)\\
        &=\text{Pr}((F_i,\psi(F_i)))\text{Pr}(F_j,\psi(F_j))\\
        &=\text{Pr}((F_i,Y_i))\text{Pr}(F_j,Y_j).
    \end{align*}
    The conclusion follows immediately
    \begin{align*}
        \text{Pr}(\hat{Y}_i,\hat{Y}_j|F_i,F_j)\text{Pr}(F_i,F_j)
& = \text{Pr}((F_i,\hat{Y}_i),(F_j,\hat{Y}_j)) \\
& = \text{Pr}((F_i,\hat{Y}_i))\text{Pr}(F_j,\hat{Y}_j) \\
& = \text{Pr}(\hat{Y}_i|F_i)\text{Pr}(\hat{Y}_j|F_j)\text{Pr}(F_i)\text{Pr}(F_j)\\
& = \text{Pr}(\hat{Y}_i|F_i)\text{Pr}(\hat{Y}_j|F_j)\text{Pr}(F_i, F_j)\\
\Rightarrow \text{Pr}(\hat{Y}_i,\hat{Y}_j|F_i,F_j)& =  \text{Pr}(\hat{Y}_i|F_i)\text{Pr}(\hat{Y}_j|F_j)\\
& =\text{Pr}(\hat{Y}_i|F_i,F_j)\text{Pr}(\hat{Y}_j|F_i,F_j),
    \end{align*}
The last equation of independence is due to $\hat{Y}_i = \psi(F_i)$.
\end{proof}

\end{document}